%% file: main_icml.tex
\documentclass{article}

\usepackage{microtype}
\usepackage{graphicx}
\usepackage{subcaption}
\usepackage{booktabs} 
\usepackage{placeins}
\usepackage{makecell}
\usepackage{siunitx}

\usepackage{subcaption}
\usepackage{hyperref}
\usepackage{url}
\usepackage{graphicx}
\usepackage{booktabs}

\usepackage{lipsum}
\usepackage{amsfonts}
\usepackage{wrapfig}
\usepackage{multirow}
\usepackage{xcolor}
\usepackage{booktabs,array}
\usepackage{tikz}
\usepackage{xspace}
\usepackage{booktabs}
\usepackage{tabularx}
\usepackage{array}
\usepackage{cuted}
\usepackage{caption}
\usepackage{needspace}
\newcolumntype{Y}{>{\raggedright\arraybackslash}X}

\usetikzlibrary{arrows.meta,positioning,calc,fit,shapes,decorations.pathreplacing}
\usepackage{pifont}

\definecolor{okgreen}{RGB}{0,128,0}
\definecolor{badred}{RGB}{180,0,0}
\definecolor{amber}{RGB}{184,134,11}

\usepackage{amsthm}
\usepackage{amsmath}
\newtheorem{theorem}{Theorem}
\newtheorem{lemma}{Lemma}
\newtheorem{proposition}{Proposition}

\newcommand{\cgood}{{\textcolor{okgreen}{\ding{51}}}}   
\newcommand{\cbad} {{\textcolor{badred}{\ding{55}}}}    
\newcommand{\cpart}{{\textcolor{amber}{\textasciitilde}}} 

\DeclareMathOperator{\Law}{Law}

\usepackage[framemethod=TikZ]{mdframed}
\mdfdefinestyle{contribution}{%
    middlelinecolor =   none,
    middlelinewidth =   1pt,
    backgroundcolor =   blue!10,
    roundcorner     =   5pt,
}

\mdfdefinestyle{feature}{%
    middlelinecolor =   black,
    middlelinewidth =   1pt,
    backgroundcolor =   blue!10,
    roundcorner     =   5pt,
}

\mdfdefinestyle{takeaway}{%
    middlelinecolor =   none,
    middlelinewidth =   1pt,
    backgroundcolor =   teal!10,
    roundcorner     =   5pt,
}

\newcommand{\meanstd}[2]{\ensuremath{#1{\scriptstyle\,(\,#2\,)}}}

\definecolor{tealblue}{HTML}{156082} 

\newcommand{\name}{\texttt{CellBRIDGE}}

\usepackage{siunitx}
\usepackage[most]{tcolorbox}
\usepackage[dvipsnames, table]{xcolor}
\definecolor{ContribBlueFrame}{RGB}{116,164,212}
\definecolor{ContribBlueBack}{RGB}{236,244,254}

\tcbset{
  colback=ContribBlueBack,
  colframe=ContribBlueFrame,
  boxrule=1pt,
  arc=6pt,                 
  left=10pt,right=10pt,top=1pt,bottom=1pt,
  enhanced,
}

\newtcolorbox{contribbox}{
  title=\textbf{Contributions},
  coltitle=black,
  fonttitle=\bfseries,
  attach boxed title to top left={yshift*=-3mm},
  boxed title style={
    colback=ContribBlueBack,
    colframe=ContribBlueFrame,
    boxrule=1pt,
    arc=6pt,
  }
}

\definecolor{contributioncolor}{RGB}{170,170,220}
\colorlet{contributionbg}{contributioncolor!10}

\newcounter{contributioncounter}
\newenvironment{contribution}[1][]{%
  \refstepcounter{contributioncounter}%
  \begin{tcolorbox}[
    enhanced,
    breakable,
    title=#1, 
    colback=contributionbg,
    colframe=blue!50!black,
    colbacktitle=contributionbg,
    fonttitle=\bfseries,
    coltitle=black,
    attach boxed title to top left={yshift=-3mm, xshift=2mm},
    boxed title style={size=small, colback=contributionbg, frame hidden},
    sharp corners,
    rounded corners,
    arc=3mm,
    top=2mm,
    bottom=1mm,
    left=1mm,
    right=1mm,
    width=\linewidth+1mm,
  ]%
}{%
  \end{tcolorbox}
}

\usepackage{hyperref}

\usepackage[accepted]{icml2026}

\usepackage{amsmath}
\usepackage{amssymb}
\usepackage{mathtools}
\usepackage{amsthm}
\usepackage{cleveref}
\DeclareMathOperator*{\argmin}{arg\,min}

\usepackage[textsize=tiny]{todonotes}

\icmltitlerunning{ Learning Cellular Trajectories via Interaction-Aware Alignment}

\begin{document}

\twocolumn[
  \icmltitle{\name: Learning Cellular Trajectories via Interaction-Aware Alignment}



  \icmlsetsymbol{equal}{*}

  \begin{icmlauthorlist}
    \icmlauthor{Silas Ruhrberg Estévez}{equal,cam}
    \icmlauthor{Nicolas Huynh}{equal,cam}
        \icmlauthor{Tennison Liu}{cam}
    \icmlauthor{Roderik M. Kortlever}{crick}
    \icmlauthor{Gerard I. Evan}{crick}
    \icmlauthor{David L. Bentley}{cuan}
    \icmlauthor{Mihaela van der Schaar}{cam}

  \end{icmlauthorlist}

  \icmlaffiliation{cam}{DAMTP, University of Cambridge}
  \icmlaffiliation{crick}{Francis Crick Institute}
  \icmlaffiliation{cuan}{University of Colorado Anschutz Medical Campus}

  \icmlcorrespondingauthor{Silas Ruhrberg Estévez}{sr933@cam.ac.uk}
  \icmlcorrespondingauthor{Nicolas Huynh}{nvth2@cam.ac.uk}

  \icmlkeywords{Machine Learning, ICML}

  \vskip 0.3in
]



\printAffiliationsAndNotice{\icmlEqualContribution}  

\begin{abstract}
Inferring dynamics from population snapshots is a fundamental challenge in machine learning and biology. In scRNA-sequencing (scRNA-seq), destructive measurements preclude direct tracking of individual cells across time, making trajectory inference underdetermined. Optimal Transport (OT) provides a principled framework for snapshot \emph{alignment}, but a long-standing modeling question is which \emph{cost functions} yield biologically meaningful couplings. Standard OT approaches rely on gene-expression distances, implicitly treating cells as independent points and neglecting structured cell--cell communication mediated by ligand--receptor signaling. We introduce \name\ (\textit{Cell-Based Regularized Interaction-Driven Gene Expression}), which augments feature-based OT with a directed, typed interaction cost derived from ligand--receptor activity. By explicitly modeling cell--cell communication, \name\ improves cross-snapshot couplings and downstream trajectory estimates across synthetic and real scRNA-seq datasets relative to feature-only baselines. Notably, \name\ enables mechanistically interpretable \emph{in silico} perturbations: on lung cancer data, silencing specific ligand--receptor pairs induces trajectory shifts that recapitulate expected effects of targeted pathway inhibition.
\end{abstract}

\input{sections/intro}
\input{sections/background}

\input{sections/relatedworks}

\input{sections/method}
\input{sections/experiments}
\input{sections/discussion}

\newpage

\section*{Acknowledgments}
We thank the anonymous ICML reviewers for their comments and suggestions. NH thanks Illumina for their funding and support. TL would like to thank AstraZeneca for their sponsorship and support. The Cambridge Centre for AI in Medicine (CCAIM) receives funding from GSK, Boehringer-Ingelheim, AstraZeneca, Sanofi and Quantum Black, AI by McKinsey.

\section*{Impact Statement}
This paper presents work whose goal is to advance the field of Machine
Learning. There are many potential societal consequences of our work, none
which we feel must be specifically highlighted here. We release the code for \name \ under \url{https://github.com/nicolashuynh/cellbridge} and
at the wider lab repository \url{https://github.com/vanderschaarlab/cellbridge}.

\bibliography{main}
\bibliographystyle{icml2026}

\newpage
\appendix

\onecolumn
\icmltitle{Supplementary Material for \name}
\input{sections/appendix}


\end{document}

%% file: sections/intro.tex
\section{Introduction}
Understanding how cellular populations evolve over time is fundamental to development, disease, and therapeutic intervention \citep{Yeo2022,Qiu2022}. Single-cell RNA sequencing (scRNA-seq) measures gene expression at unprecedented resolution, but its destructive nature precludes tracking individual cells across time, making trajectory inference from population snapshots inherently underdetermined \citep{Schiebinger2019WOT,Bunne2024}. The ability to infer the trajectories of single cells has major implications for drug discovery, where experiments to probe mechanisms and interventions are costly and slow \citep{Sertkaya2024}: \textit{in silico} dynamics can guide experiment design and prioritize targets \citep{Yue2022}.

\textbf{Challenges of inferring cellular dynamics.} Learning the trajectories of individual cells, i.e. the task of \emph{trajectory inference} \citep{Bunne2024}, requires reconstructing smooth dynamics from unaligned snapshots. This presents a unique challenge: because measurements are destructive, the same cell cannot be observed at multiple time points. These difficulties are further exacerbated by imbalanced cell populations and the noisy, sparse nature of gene expression \citep{Adil2021,Schiebinger2019WOT}.

\textbf{From graph heuristics to couplings.}
Classical approaches build a cell--cell $k$NN graph and extract pseudotime and branches via diffusion distances or spanning-tree heuristics \citep{Haghverdi2016DPT,Street2018Slingshot}. These locality-based methods assume that proximity within a snapshot reflects temporal adjacency, which can yield biased pseudotimes and spurious lineage structure \citep{weiler2022guide}. To address these limitations, more recent methods recast alignment as the task of finding a coupling between \emph{distributions}.

\textbf{Biologically meaningful cost functions.} A popular approach for distributional alignment is Optimal Transport (OT) \citep{peyre2019computational}. While OT makes the search for a coupling computationally tractable, the biological validity of the result hinges entirely on the choice of a cost function. As noted by \citet{Bunne2024}, incorporating meaningful priors via this cost is a \emph{central bottleneck} in single-cell and spatial omics. Standard OT approaches rely on gene-expression distances, effectively enforcing a principle of least action, assuming that cells evolve smoothly via the shortest path in expression space.

\textbf{ In this work, we ask:} \emph{Can we design a biologically meaningful prior for trajectory inference, which is orthogonal to the principle of least action in gene expression?}

We begin with a key observation: feature-only OT, which relies solely on gene expression distances, implicitly treats cells as independent particles. This discards structured \emph{cell--cell interactions} (CCIs) and ignores the biological reality that trajectories are shaped by intercellular signaling. Specifically, directed CCIs mediated by ligand--receptor (LR) pairs drive development and disease \citep{He2020,Liu2023}. We posit that the \emph{relational structure} of these interactions can also evolve smoothly over time, and hence can offer a robust signal for alignment.

We incorporate this prior via \name\ (\textit{Cell-Based Regularized Interaction-Driven Gene Expression}). To avoid reliance on spatial data, we construct \emph{proxy} communication networks within each snapshot by scoring directed LR pairs across local expression neighborhoods. We then frame the search of a coupling as a \emph{Fused Gromov--Wasserstein} (FGW) problem. FGW simultaneously minimizes the cost of transport in gene expression space and the structural distortion of these inferred communication networks.

Importantly, our interaction-aware prior is orthogonal to standard priors (such as least action in gene expressions or unbalanced transport). As a consequence, this modularity enables \name~ to be seamlessly integrated into state-of-the-art pipelines for velocity field regression \citep{lipman2024flow, kapusniak2024metric, tong2023simulation}. In our experiments across synthetic and real-world datasets, we demonstrate that \name~ leads to improved trajectory inference, with best results obtained when paired with orthogonal priors. To summarize, our contributions are the following:

\begin{contribution}[Contributions]
\textbf{Cost-function view.} We identify OT cost design as a key design choice for snapshot alignment, and propose the smoothness of directed, typed cell--cell communication as a biologically grounded and interpretable prior, which complements gene expression smoothness. 
\textbf{Typed, directed FGW.} We generalize FGW to \emph{multi-relation} (vector-valued) directed interaction structure derived from ligand--receptor signaling, yielding interaction-aware couplings between snapshots. 

\textbf{Broad applicability.} Interaction-aware couplings improve trajectory inference across various trajectory inference frameworks, showing that CCI structure is a general-purpose prior orthogonal to conventional assumptions.

\textbf{Empirical and mechanistic evidence.} We demonstrate improved performance on synthetic and real scRNA-seq datasets, and show interpretable \emph{in silico} perturbations: silencing specific LR pairs induces trajectory shifts consistent with targeted pathway inhibition.
\end{contribution}

%% file: sections/background.tex
\section{Background} \label{sec:background}

\textbf{Problem formulation: cell trajectory inference.}
We consider \(k\) population snapshots \(\{\mathcal{D}_i\}_{i=1}^k\), where each \(\mathcal{D}_i \subset \mathbb{R}^d\) is a set of single-cell states measured at time \(t_i\). The goal is to learn a time-continuous flow \(\psi:\mathbb{R}^d \times \mathbb{R}_{+}\rightarrow \mathbb{R}^d\) such that \(\psi(x,t)\) returns the state obtained by evolving an initial state \(x\) to time \(t\).
Because scRNA-seq is \emph{destructive}, the same cell cannot be observed at two times, so there is no one-to-one correspondence between cells in \(\mathcal{D}_i\) and \(\mathcal{D}_{i+1}\).
Classical time-series and ODE-fitting methods that require repeated observations of the same object are thus not directly applicable; trajectory inference must instead recover dynamics from \emph{unaligned snapshots}.

\textbf{Global alignment of snapshots.}
Rather than inferring trajectories from neighborhoods within a single snapshot \citep{Haghverdi2016DPT}, recent work aligns \emph{multiple snapshots at the population level} \citep{Schiebinger2019WOT}, treating each snapshot as a probability distribution over cell states. This alignment is inherently underdetermined: without additional structure, many matchings between snapshots are equally compatible with the observed marginals. 

\textbf{Standard OT for snapshots.}
For two timepoints \(t_0<t_1\) with datasets \(\mathcal{D}_{0}=\{x_i\}_{i=1}^{n_0}\) and \(\mathcal{D}_{1}=\{y_j\}_{j=1}^{n_1}\), where \(x_i,y_j\in\mathbb{R}^d\) are gene-expression vectors, we form the empirical measures
\(
\rho_0 \;=\; \sum_{i=1}^{n_0} a_i\,\delta_{x_i}\)
and
\(\rho_1 \;=\; \sum_{j=1}^{n_1} b_j\,\delta_{y_j},
\)
with \(a\in\Sigma_{n_0}\), \(b\in\Sigma_{n_1}\), and \(\Sigma_n := \{w\in\mathbb{R}_+^n:\sum_{k=1}^n w_k=1\}\) (e.g., \(a_i=1/n_0\) for uniform weights). The alignment problem seeks a coupling \(\Gamma^\star \in \Pi(a,b)\) between \(\rho_0\) and \(\rho_1\) that respects the marginals $a$ and $b$:
\begin{equation}
\Gamma^\star \;:=\; \bigl\{\,\Gamma\in\mathbb{R}_+^{n_0\times n_1}\ \big|\ \Gamma\,\mathbf{1}_{n_1}=a,\ \Gamma^\top \mathbf{1}_{n_0}=b \,\bigr\}.
\end{equation}

\textbf{OT as a regularization principle.}
Among all couplings in \(\Pi(a,b)\), how do we select biologically plausible ones?
OT answers this via an optimization problem where the cost function is based on the \emph{least-action prior}: cellular states should evolve smoothly over time, so matchings that incur small feature-wise changes are more likely \citep{villani2008optimal,Bunne2024}. Given a cost matrix \(C \in \mathbb{R}_+^{n_0 \times n_1}\), where \(C_{ij} = c(x_i, y_j)\) is the cost of transporting a unit of mass from \(x_i\) to \(y_j\), the discrete Kantorovich formulation solves
\begin{equation} \label{eq:kantorovich}
\Gamma^\star \in \arg\min_{\Gamma \in \Pi(a, b)} \langle \Gamma, C \rangle_F,
\end{equation}

where \(\langle \cdot, \cdot \rangle_F\) denotes the Frobenius inner product. The optimizer \(\Gamma^\star\) is a \emph{soft} alignment that minimizes expected transport cost under \(c(\cdot,\cdot)\). However, because \(C\) depends only on expression features, feature-only OT cannot exploit intra-snapshot structure such as cell--cell interactions (CCIs) that coordinate population dynamics.

\begin{table*}[!t]
\centering
\caption{
\textbf{Related work.} A comparison of inductive biases for aligning single-cell population snapshots.
}
\label{tab:summary_related}
\small
\setlength{\tabcolsep}{6pt}
\renewcommand{\arraystretch}{1.15}
\begin{tabularx}{\textwidth}{p{0.20\textwidth} Y Y Y}
\toprule
\textbf{Method family} & \textbf{Intuition} & \textbf{Signal used for alignment} & \textbf{How \name{} differs} \\
\midrule
\textbf{Feature-based TI / OT}
& Development is assumed to be smooth in gene-expression space.
& Expression similarity, pseudotime graphs, or least-action OT costs.
& Uses intercellular communication structure rather than only cell-intrinsic transcriptional similarity. \\
\addlinespace[3pt]

\textbf{Velocity / dynamic priors}
& Transitions should follow a forward-time direction and may include growth or death.
& RNA velocity, fate probabilities, proliferation, apoptosis, or unbalanced dynamics.
& Regularizes the coupling with directed, typed CCI rather than only temporal directionality. \\
\addlinespace[3pt]

\textbf{Spatial / geometric OT}
& Alignment should preserve physical proximity or relational geometry.
& Spatial coordinates, neighborhood graphs, GW/FGW structural costs.
& Does not require spatial measurements and uses ligand--receptor channels rather than generic geometry. \\
\addlinespace[3pt]

\textbf{Communication-aware models}
& Cell transitions may depend on signals exchanged with other cells.
& Learned interaction effects or ligand--receptor-derived features.
& Uses interpretable, directed, typed CCI as a plug-and-play OT regularizer. \\
\bottomrule
\end{tabularx}
\vspace{-15pt}
\end{table*}

\textbf{Incorporating intra-snapshot structure.}
Beyond inter-snapshot feature distances, it is common to have access to structural information within each snapshot (e.g., communication, adjacency, or interaction motifs). The Kantorovich objective in \Cref{eq:kantorovich} cannot exploit such information, since it depends only on cross-snapshot costs \(C\). The Gromov--Wasserstein (GW) problem extends OT by comparing distributions through their \emph{pairwise relational} structure, favoring couplings that approximately preserve within-snapshot relations across time. This can be viewed as a \emph{structure-stationarity prior}: over the relevant time window, key relational patterns are assumed to evolve smoothly to constrain the otherwise underdetermined alignment. Intuitively, if cell populations maintain consistent communication motifs across snapshots, then cells participating in similar interaction patterns at \(t_0\) should map to cells with similar patterns at \(t_1\).

We represent intra-snapshot structure by relational matrices \(G^{(0)} \in \mathbb{R}^{n_0\times n_0}\) (source) and \(G^{(1)} \in \mathbb{R}^{n_1\times n_1}\) (target). GW then seeks a coupling \(\Gamma^\star \in \Pi(a,b)\) that minimizes the distortion between \(G^{(0)}\) and \(G^{(1)}\):
\begin{equation}
\min_{\Gamma \in \Pi(a, b)} \sum_{i,k=1}^{n_0} \sum_{j,l=1}^{n_1} L(G^{(0)}_{ik},G^{(1)}_{jl}) \Gamma_{ij} \Gamma_{kl}
\end{equation}

Here $L$ denotes a pairwise distortion function. We treat $G^{(0)}$ and $G^{(1)}$ as generic relational matrices (not necessarily symmetric), with the choice of $L$ determining the notion of structure preservation. Finally, the Fused Gromov--Wasserstein (FGW) problem combines feature matching with structure preservation via a parameter \(\alpha \in [0,1]\):
\begin{equation}
\min_{\Gamma \in \Pi(a, b)} (1-\alpha) \langle \Gamma, C \rangle_F + \alpha \sum_{i,k=1}^{n_0} \sum_{j,l=1}^{n_1} L(G^{(0)}_{ik},G^{(1)}_{jl}) \Gamma_{ij} \Gamma_{kl}   
\end{equation}

Setting \(\alpha=0\) recovers feature-only OT, while \(\alpha=1\) recovers GW. In our setting, \(G^{(0)}\) and \(G^{(1)}\) are not generic neighborhood graphs but directed, typed ligand--receptor communication networks: mechanistically interpretable structure obtainable from snapshot scRNA-seq using curated LR catalogs, and editable for counterfactual analyses.

%% file: sections/relatedworks.tex
\begin{figure*}[!t]
    \centering
    \includegraphics[width=0.9\linewidth]{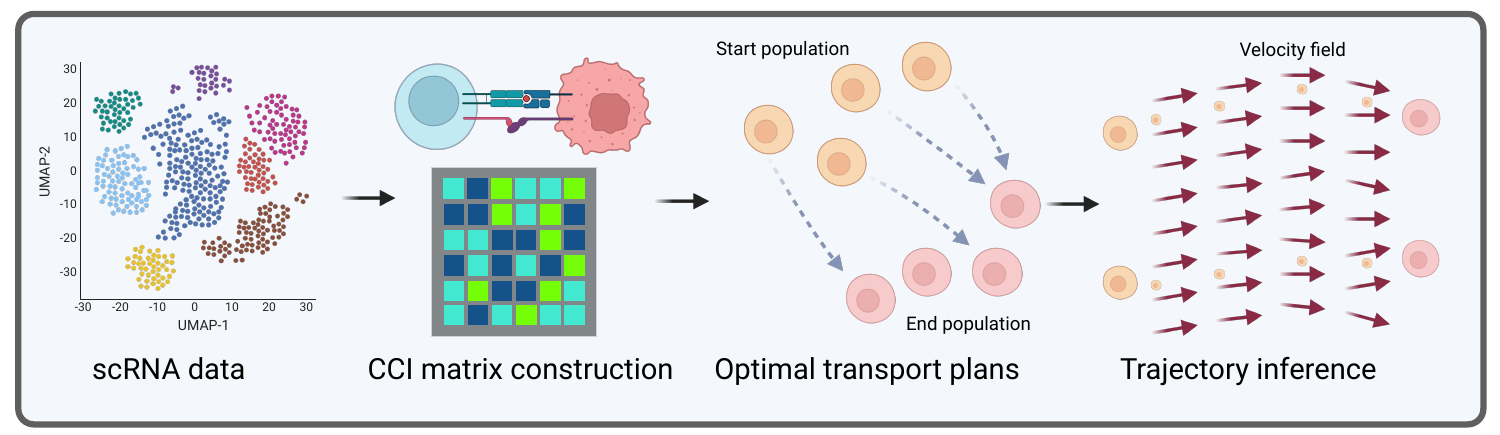}
    \caption{\textbf{Overview of \name.} From an LR catalogue, we build directed CCI matrices. A structure-aware OT problem balances feature similarity with interaction structure to produce snapshot couplings used to train a vector field to recover cell trajectories.}
    \label{fig:iadot_overview}
\end{figure*}

\section{Related Work}

Inferring cellular trajectories from population snapshots is inherently underdetermined: many couplings between consecutive timepoints may be consistent with the observed marginal distributions. Existing methods address this ambiguity by imposing different \textit{inductive biases} on the alignment problem. Table~\ref{tab:summary_related} summarizes these biases and contrasts them with our communication-aware regularizer. We refer to \Cref{app:extended_related} for a more detailed discussion, and provide a short summary in what follows.

Classical trajectory-inference methods infer developmental structure from transcriptomic neighborhoods, using pseudotime, branching, or graph abstractions to order cells along putative progressions \citep{Qiu2017Monocle2,Wolf2019PAGA}. Optimal-transport approaches \citep{peyre2019computational} provide a population-level alternative by coupling distributions across timepoints under a least-action principle in gene-expression space, as in Waddington-OT and related continuous-time formulations \citep{Schiebinger2019WOT,Tong2020TrajectoryNet}. These methods are effective when transcriptional change is smooth, but their alignment signal is primarily cell-intrinsic.

A complementary line of work adds temporal directionality or population-level constraints. RNA velocity and its extensions use spliced and unspliced counts to infer local directions of motion, which can then be combined with transcriptomic similarity to estimate fate probabilities \citep{LaManno2018RNAVelocity,Bergen2020scVelo,Lange2022CellRank}. Other approaches relax mass conservation or explicitly model growth and death, thereby accounting for changes in population size across time. Such dynamic priors help orient trajectories, but they typically constrain transitions through cell-intrinsic dynamics rather than through structured interactions between cells.

Spatial and geometric methods instead regularize alignment using relational structure. Spatial OT methods exploit measured tissue coordinates to relate cells across modalities or timepoints \citep{Cang2020SpaOTsc}. These approaches show that relational information can be a powerful alignment signal. However, spatial coordinates are unavailable in standard dissociated scRNA-seq, and generic neighborhood graphs do not specify which cells signal to which others, nor through which ligand--receptor channels.

Our work introduces a complementary communication-based inductive bias. From ligand--receptor expression, we construct a directed and typed representation of cell--cell communication and use it to regularize the OT coupling. Thus, unlike feature-based, velocity-based, or spatial/geometric priors, our method encourages temporal alignments that preserve smoothly evolving signaling roles across time. Related communication-aware models also recognize that cell transitions may depend on intercellular effects \citep{zhang2025modeling}. However, our contribution is to inject interpretable ligand--receptor communication structure directly at the coupling level, making the regularizer plug-and-play with downstream trajectory learners. More generally, our work integrates information from biological knowledge, an approach followed for other tasks (e.g. cell annotation \citep{ wang2021leveraging, tang2024knowledge, di2026improving}, or gene regulatory networks \citep{hossain2024biologically}).

%% file: sections/method.tex
\section{\name: Interaction-Aware Optimal Transport}

\textbf{Overview.}
We study whether incorporating a structural prior on cell--cell interactions (CCIs), i.e.\ favoring couplings that encode smoothly evolving communication structure across snapshots, improves trajectory inference.
We introduce \name, which integrates gene-expression features and interaction networks into a unified OT objective.
Given source and target snapshots $\mathcal D_0$ and $\mathcal D_1$, \name\ first computes a cross-snapshot coupling that assigns source to target cells probabilistically.
The coupling is designed to satisfy two desiderata: \textbf{(D1) feature coherence}, preserving smooth evolution in expression space, and \textbf{(D2) communication evolution smoothness}, favoring couplings that encode a smooth evolution of the directed CCI geometry induced by ligand--receptor expression.
We achieve this by solving a Fused Gromov--Wasserstein (FGW) problem that balances feature similarity and CCI structure, yielding $\Gamma^\star$.
Because the interaction prior is encoded at the coupling level, $\Gamma^\star$ can be reused as a plug-and-play input to downstream continuous-time models, including flow matching, diffusion Schr\"odinger bridges (DSB), and unbalanced extensions.

\subsection{Interaction-aware transport via multi LR-pair FGW} \label{sec:fgw}

\textbf{Modeling cell--cell interactions from scRNA.}
Given a ligand--receptor (LR) catalog $\mathcal{P} = \{ (l_p, r_p) \mid p \in [P]\}$ of $P$ ligand--receptor pairs and a dataset of $n$ cells, our aim is to construct a directed, nonnegative CCI tensor
$G \in \mathbb{R}_{\ge 0}^{n\times n \times P}$
that summarizes potential signaling from any sender cell \(i\) to receiver cell \(j\).
Starting from raw expression counts, we first apply library-size normalization, i.e. rescaling each cell’s counts by its total count and multiplying by a fixed scale factor, so that ligand and receptor expression levels are comparable across cells.
Rather than a $\log(1{+}\cdot)$ transform, which compresses high-expression signals logarithmically and can obscure fold-change differences at high abundances, we map each gene to $[0,1]$ using a Hill saturation function.
The Hill form captures saturation/occupancy effects common in receptor systems and yields bounded activations while preserving rank ordering.
For gene $g$ and cell $c$, we define
\(
s_{cg} = (x_{cg}^{h_g})/(x_{cg}^{h_g}+\kappa_g^{h_g}),
\)
with robust scale $\kappa_g$ (e.g., the $q{=}0.9$ quantile of nonzero values in $\{x_{cg}\mid c\in[n]\}$) and exponent $h_g$ controlling sharpness (we use $h_g=1$ in our experiments).

This gives bounded activations where near-saturating expression contributes strongly.
For an LR pair $p=(l_p,r_p)$ and cells $i$ (sender) and $j$ (receiver), we score the interaction as $
z^{(p)}_{i\to j}=s_{i l_p}\,s_{j r_p}$, capturing the intuitive requirement that ligand availability and receptor readiness must co-occur.
We then set \(G_{ij p}=z^{(p)}_{i\to j}\).
The CCI tensor \(G\) serves as the directed, multi-channel structure we aim to preserve during cross-snapshot alignment \footnote{
The preprocessing described here is used only for constructing the CCI tensors. Gene-expression features used for the OT feature cost, downstream models, and baselines follow the standard preprocessing pipeline described in \Cref{app:data_processing}.
}.

\textbf{Interaction-aware transport via multi LR-pair FGW.}
Given two snapshots \(\mathcal{D}_{0}=\{x_i\}_{i=1}^{n_{0}}\) and \(\mathcal{D}_{1}=\{y_j\}_{j=1}^{n_{1}}\), we define a feature cost matrix \(C\in\mathbb{R}_{\ge0}^{n_0\times n_1}\) such that \(C_{ij}=c(x_i,y_j)\) (typically squared Euclidean distance).
From the CCI construction above, we obtain directed, nonnegative tensors
\(G^{(0)}\in\mathbb{R}_{\ge0}^{n_0\times n_0\times P}\) and
\(G^{(1)}\in\mathbb{R}_{\ge0}^{n_1\times n_1\times P}\)
for the source and target snapshots.
Our objective is to find a coupling \(\Gamma\in\mathbb{R}_{\ge 0}^{n_0\times n_1}\) that aligns cells while respecting CCI structure.
Let $g^{(0)}_{i\to k}:=G^{(0)}_{ik:}\in\mathbb{R}_{\ge0}^{P}$ and $g^{(1)}_{j\to \ell}:=G^{(1)}_{j\ell:}\in\mathbb{R}_{\ge0}^{P}$ denote LR-channel vectors for directed edges. We seek a coupling that is a solution to the following optimization problem: 
\begin{equation}
\label{eq:fgw}
\begin{aligned}
\min_{\Gamma \in \Pi(a,b)}\quad
& (1-\alpha)\,\mathcal{F}(\Gamma)
+ \alpha\,\mathcal{S}(\Gamma), \\
\mathcal{F}(\Gamma)
&= \langle \Gamma, C \rangle, \\
\mathcal{S}(\Gamma)
&= \sum_{i,k=1}^{n_0}\sum_{j,\ell=1}^{n_1}
\varphi\big(g^{(0)}_{i\to k},g^{(1)}_{j\to \ell}\big)
\Gamma_{ij}\Gamma_{k\ell}.
\end{aligned}
\end{equation}
The similarity $\varphi(\cdot,\cdot)$ measures how well the \emph{interaction profile} between a sender--receiver pair in the source snapshot matches that of a pair in the target snapshot.
Intuitively, we penalize couplings that map cells in one snapshot to cells in the other snapshot when doing so would mismatch their directed, multi-channel signaling context.
By default, we use squared Euclidean distance
\(\varphi(u,v)=\|u-v\|_2^2\),
which is natural here because interaction vectors are bounded in $[0,1]^P$ and the loss decomposes across LR channels. 
Unlike the classical FGW setting \citep{vayer2020fused}, \name\ operates on \emph{multi-typed} interactions, since each directed relation is a vector in $\mathbb{R}^{P}$ rather than a scalar.

\textbf{Optimization.}
\Cref{eq:fgw} is non-convex due to the quadratic structure term (note that we do not use entropic regularization in our objective because it would favor more diffuse couplings).
We solve it using a customized conditional-gradient routine adapted from \citet{braun2022conditional}, detailed in \Cref{app:ot_solver}.
Because the feature costs $C$ and interaction tensors $G^{(m)}$ can have different magnitudes and units, the trade-off parameter $\alpha$ in \Cref{eq:fgw} does not, by itself, guarantee a meaningful balance.
Without normalization, one term can dominate the objective, making $\alpha$ difficult to interpret and tune.
To balance the two contributions, we normalize by \emph{endpoints}: we solve the feature-only problem ($\alpha=0$) and the structure-only problem ($\alpha=1$), obtaining reference objective values
$F^\star=\langle \Gamma^\star_{\alpha=0}, C\rangle_F$
and
$S^\star=\mathcal S(\Gamma^\star_{\alpha=1})$.
We then rescale the objective so that intermediate $\alpha$ values interpolate comparably between these extremes (details in \Cref{app:normalization}).

\subsection{From couplings to continuous dynamics}
\label{sec:downstream}

\textbf{Coupling-level prior.}
A key design principle of \name\ is that interaction structure is injected
\emph{only at the level of the cross-snapshot coupling}.
Solving the interaction-aware FGW problem in \Cref{eq:fgw} yields a coupling
$\Gamma^\star$ between $\mathcal D_0$ and $\mathcal D_1$.
Crucially, $\Gamma^\star$ can be reused by any downstream method that learns
continuous-time dynamics from paired endpoints.
Thus, \name\ provides a \emph{plug-and-play structural prior} that is orthogonal
to the choice of dynamics model.

\textbf{Coupling-induced endpoint distribution.} From the coupling $\Gamma^\star$ we form a joint distribution
\(
\Pi \;=\; \sum_{i,j} \bar\Gamma^\star_{ij}\,\delta_{(x_i,y_j)},
\qquad
\bar\Gamma^\star_{ij} := \Gamma^\star_{ij}/M,\ \ 
M:=\sum_{i,j}\Gamma^\star_{ij},
\)
whose marginals match the empirical snapshot measures $\rho_0$ and $\rho_1$
induced by $\mathcal D_0$ and $\mathcal D_1$.
Downstream continuous-time models differ in how they construct intermediate-time
states between paired endpoints. We capture this choice through a (possibly stochastic)
\emph{interpolant} $I_t$ \citep{albergo2023stochastic}:
\((X,Y)\sim\Pi,\qquad Z_t = I_t(X,Y,\xi),
\qquad \rho_t := \Law(Z_t),
\)
where $\xi$ denotes auxiliary randomness (absent for deterministic interpolants).
Intuitively, $\Pi$ fixes \emph{which} endpoints are paired (and with what mass),
while the interpolant specifies \emph{how} mass is distributed at intermediate times.

Many trajectory-learning objectives can be written as regressing one or more model fields
to targets induced by the chosen interpolant.
Let $f_\theta^{(m)}(z,t)$ denote model outputs (e.g.\ a velocity head and/or a score head),
and let $\tau_t^{(m)}(z\mid X,Y,\xi)$ be the corresponding interpolant-induced targets
(e.g.\ conditional velocity, conditional drift, or conditional score).
We consider the generic objective
\vspace{-5pt}
\begin{equation}
\label{eq:generic-interpolant-match}
\begin{aligned}
\mathcal L(\theta)
&=
\mathbb E_{\Pi,t,\xi}
\!\left[
\sum_{m\in\mathcal M}\lambda_m(t)
\left\|
f_\theta^{(m)}(Z_t,t)
-
T_t^{(m)}
\right\|_{(t)}^2
\right], \\
T_t^{(m)}
&=
\tau_t^{(m)}(Z_t\mid X,Y,\xi).
\end{aligned}
\end{equation}
where the expectation is over $(X,Y)\sim\Pi$, $t\sim\mathrm{Unif}[0,1]$,
$\xi$, and $Z_t=I_t(X,Y,\xi)$. Here, $\lambda_m(t)$ are scalar weights and
$\|\cdot\|_{(t)}$ is either Euclidean or metric-induced. CFM
\citep{lipman2024flow}, SF2M \citep{tong2023simulation}, and MFM
\citep{kapusniak2024metric} correspond to particular choices of interpolant
$I_t$ and targets $\tau_t^{(m)}$, while \name\ specifies the endpoint
distribution $\Pi$.

\textbf{Conditional Flow Matching (CFM).}
CFM \citep{lipmanflow} is traditionally implemented with the deterministic affine interpolant $I_t(X,Y)=(1-t)X+tY$,
i.e.\ $Z_t=(1-t)X+tY$.
With a single velocity head $f_\theta^{(v)}=v_\theta$, the target is constant
$\tau_t^{(v)}(Z_t\mid X,Y)=Y-X$, yielding:
\begin{equation}
\label{eq:cfm-main}
\mathcal L_{\mathrm{CFM}}(\theta)
=
\mathbb{E}_{\substack{(X,Y)\sim \Pi \\ t\sim\mathrm{Unif}[0,1]}}
\Big[\, \big\|\, v_\theta\big(Z_t,t\big)\;-\;(Y-X)\,\big\|_2^2 \,\Big].
\end{equation}
Trajectories are then obtained by integrating $\dot z(t)=v_\theta(z(t),t)$.

\textbf{Stochastic bridges via SF2M.}
SF2M \citep{tong2023simulation} replaces the deterministic affine interpolant by a stochastic interpolant,
e.g.\ sampling $Z_t$ from a Brownian bridge between endpoints $(X,Y)\sim\Pi$,
which provides closed-form conditional drifts and scores at intermediate times.
We combine \name\ with SF2M by instantiating the endpoint distribution with $\Pi$
(i.e.\ sampling $(X,Y)\sim\Pi$), while keeping the SF2M training objective and its
interpolant-induced targets unchanged. We provide more details in \Cref{app:details_sf2m_iadot}.

\textbf{Geometric priors via Metric Flow Matching (MFM).}
MFM \citep{kapusniak2024metric} corresponds to a geometry-aware interpolant in which intermediate states follow
geodesic interpolants under a learned, data-dependent Riemannian metric.
In \Cref{eq:generic-interpolant-match}, this amounts to choosing
$Z_t = I_t^{\mathrm{geo}}(X,Y)$ (or its learned approximation), using a metric-induced
norm $\|\cdot\|_{(t)}$, and setting the target to the associated geodesic velocity
$\tau_t^{(v)}=\partial_t I_t^{\mathrm{geo}}(X,Y)$.
We combine \name\ with MFM by reusing the same endpoint pairs sampled from $\Pi$,
while replacing linear interpolation with the learned geodesic interpolant (see \Cref{app:details_mfm_iadot}).

\textbf{Unbalanced dynamics.}
When total population mass changes between snapshots (e.g.\ proliferation or apoptosis),
we extend \name\ with an unbalanced OT formulation to infer non-uniform marginals prior to
solving the interaction-aware FGW problem (see \Cref{app:details_unbalanced_iadot}). The resulting coupling (and thus $\Pi$) can again
be passed unchanged into any downstream choice of interpolant and matching objective,
including CFM, SF2M, or MFM.

%% file: sections/experiments.tex
\section{Experiments}
\label{sec:all_exp}

We evaluate whether incorporating cell--cell interaction (CCI) structure improves cross-snapshot alignment and downstream continuous-time trajectory inference, and whether the resulting inductive bias is biologically grounded.
Our evaluation is organized around four questions:
\textbf{(Q1) Couplings:} does \name\ produce more faithful transport maps than feature-only baselines?
\textbf{(Q2) Trajectories:} do improved couplings translate into improved continuous-time dynamics?
\textbf{(Q3) Grounding:} are the gains driven by biologically meaningful ligand--receptor (LR) structure?
\textbf{(Q4) Failure modes:} when does the interaction prior become uninformative or harmful?

\subsection{Do interaction-aware couplings improve cross-snapshot alignment?}
\label{sec:synthetic}

Throughout this section, we evaluate the quality of a cross-snapshot coupling by \emph{held-out interpolation}.
Given two endpoint snapshots at $t_0<t_2$, we infer a coupling $\Gamma$ between $\mathcal D_0$ and $\mathcal D_2$.
We then define an intermediate distribution at $t_1$ via affine interpolation along endpoint pairs sampled from the coupling and compare the interpolated marginal to the empirical snapshot at $t_1$.
This isolates coupling quality \emph{independently of any downstream dynamics model}.
In what follows, the coupling is computed once at the full snapshot level, i.e. using the full source and target snapshots rather than mini-batches \footnote{Mini-batching is used only when we train the downstream flow model in \Cref{sec:downstream}, and we sample from the joint distribution induced by the coupling to define the batches.}.

\textbf{Synthetic setup.}
We consider two 2D snapshots, each composed of three clusters.
The second snapshot is obtained by translating each cluster by a distinct vector, inducing a known one-to-one ground-truth transport. We define an interaction structure with $P=2$ types : the middle cluster points to the left (\textit{Pathway 1}) and to the right (\textit{Pathway 2}), mirrored in the target snapshot (see \Cref{app:synthetic_details} for more details). We then obtain a coupling for each $\alpha\in\{0,0.1,\ldots,1\}$ by solving the FGW problem defined in \Cref{eq:fgw} with the ground-truth interaction structures.

\begin{figure}[!htb]
    \centering
    \includegraphics[width=\linewidth]{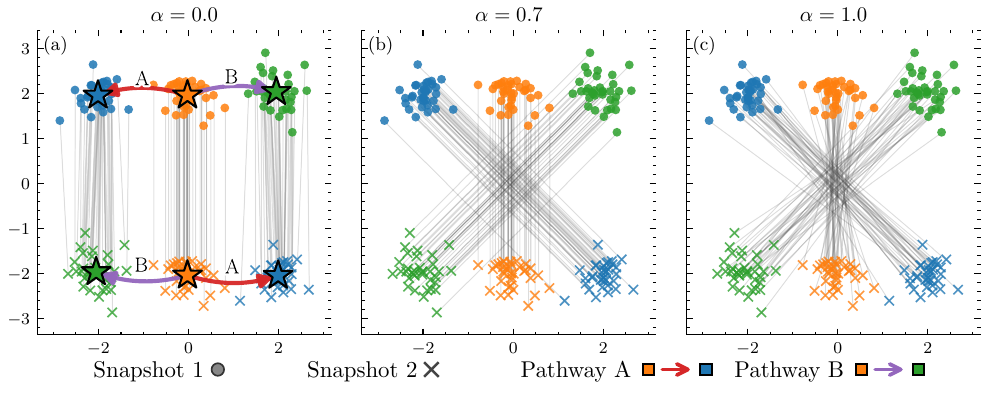}
    \caption{\textbf{Structure-aware coupling recovers the ground-truth transport map.} We consider within-snapshot interactions with two channels A (orange to blue) and B (orange to green).}
    \label{fig:synthetic_experiment_different_alpha}
\end{figure}

\textbf{Synthetic results.}
Representative couplings across $\alpha$ are shown in \Cref{fig:synthetic_experiment_different_alpha}.
With $\alpha=0$ (feature-only), the interaction structure is ignored and clusters are misaligned; with $\alpha=1$ (structure-only), interaction types are satisfied but geometry is distorted.
An intermediate setting ($\alpha\approx 0.7$) preserves the directed relations while maintaining within-interaction geometry.
We refer to \Cref{app:synthetic-theory} for a theoretical analysis of this synthetic setup.
\begin{figure*}[!htb]
  \centering
  \begin{subfigure}[t]{0.32\linewidth}
    \centering
    \includegraphics[width=\linewidth]{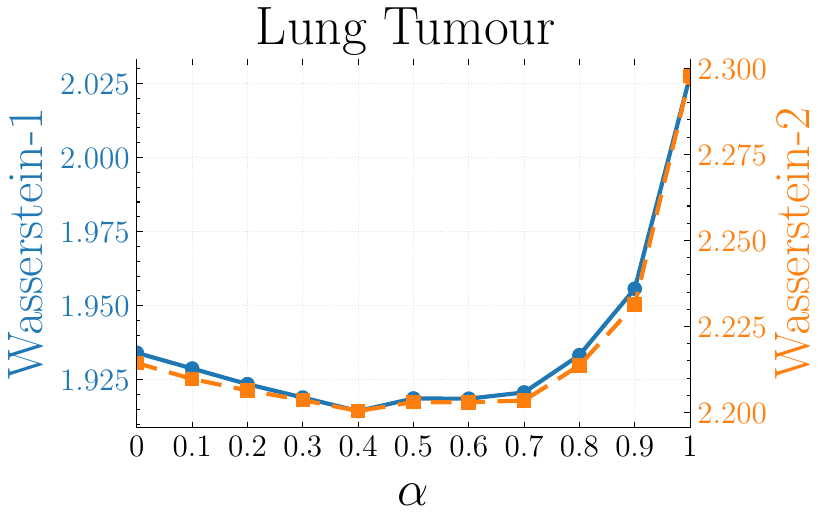}
    \label{fig:cancer_alpha_sweep}
  \end{subfigure}
  \hfill
  \begin{subfigure}[t]{0.32\linewidth}
    \centering
    \includegraphics[width=\linewidth]{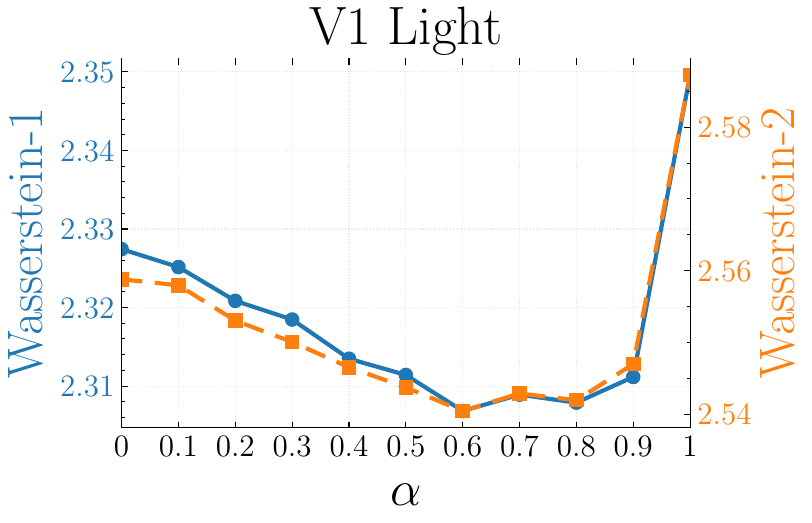}
    \label{fig:embryo_alpha_sweep}
  \end{subfigure}
  \hfill
  \begin{subfigure}[t]{0.32\linewidth}
    \centering
    \includegraphics[width=\linewidth]{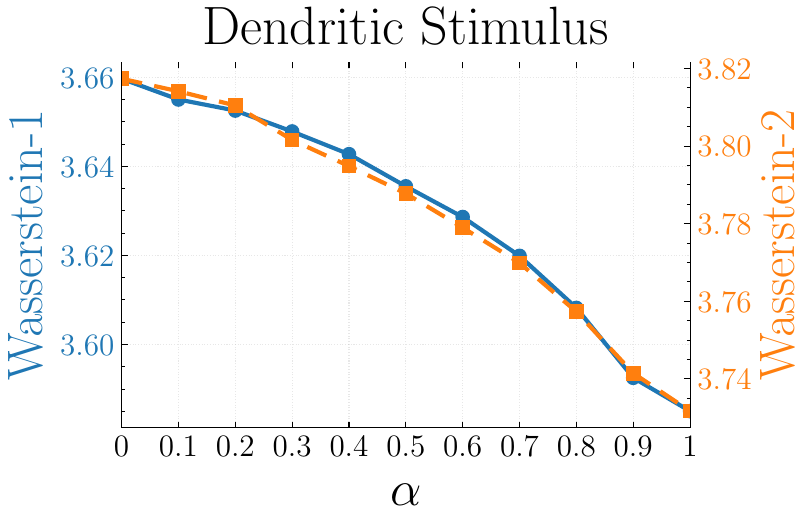}
    \label{fig:dataset_c_alpha_sweep}
  \end{subfigure}
  \vspace{-15pt}
  \caption{\textbf{Coupling quality via held-out interpolation.}
  We plot the $W_1$ and $W_2$ distances between the interpolated and empirical $t_1$ snapshots as $\alpha$ varies.
  Optimal performance occurs at dataset-specific $\alpha^\ast>0$.}
  \label{fig:alpha_sweep}
\end{figure*}

\begin{table*}[t]
\centering
\caption{\textbf{Interpolation error for continuous-time dynamics (lower is better).}
We report mean$\pm$std over $5$ runs.}
\label{tab:flow_matching_results}
\footnotesize
\setlength{\tabcolsep}{4pt}
\renewcommand{\arraystretch}{0.9}
\resizebox{\textwidth}{!}{%
\begin{tabular}{llrrrrrr}
\toprule
\multicolumn{2}{c}{} &
\multicolumn{2}{c}{\textbf{V1 Light}} &
\multicolumn{2}{c}{\textbf{Dendritic Stimulus}} &
\multicolumn{2}{c}{\textbf{Lung tumor}} \\
\cmidrule(lr){3-4}\cmidrule(lr){5-6}\cmidrule(lr){7-8}
\multicolumn{1}{c}{\textbf{Method}} &
\multicolumn{1}{c}{\(\boldsymbol{\alpha}\)} &
\multicolumn{1}{c}{$W_1$} & \multicolumn{1}{c}{$W_2$} &
\multicolumn{1}{c}{$W_1$} & \multicolumn{1}{c}{$W_2$} &
\multicolumn{1}{c}{$W_1$} & \multicolumn{1}{c}{$W_2$} \\
\midrule
\texttt{TrajectoryNet} & \multicolumn{1}{c}{\textemdash} &
\meanstd{3.022}{0.061} & \meanstd{3.338}{0.056} &
\meanstd{4.410}{0.102} & \meanstd{4.607}{0.107} &
\meanstd{2.712}{0.090} & \meanstd{3.056}{0.099} \\
\texttt{DSB}           & \multicolumn{1}{c}{\textemdash} &
\meanstd{3.819}{0.152} & \meanstd{3.875}{0.143} &
\meanstd{4.099}{0.155} & \meanstd{4.249}{0.153} &
\meanstd{3.700}{0.116} & \meanstd{3.967}{0.102} \\
\texttt{VGFM} & \multicolumn{1}{c}{\textemdash} &
\meanstd{6.446}{0.114} & \meanstd{6.745}{0.102} &
\meanstd{7.087}{0.022} & \meanstd{7.261}{0.026} &
\meanstd{2.175}{0.017} & \meanstd{2.478}{0.019} \\
\texttt{MIOFlow} & \multicolumn{1}{c}{\textemdash} &
\meanstd{6.360}{0.010} & \meanstd{6.655}{0.009} &
\meanstd{6.970}{0.043} & \meanstd{7.159}{0.034} &
\meanstd{2.001}{0.003} & \meanstd{2.316}{0.009} \\
\texttt{SnapMMD} & \multicolumn{1}{c}{\textemdash} & \meanstd{2.420}{0.005} & \meanstd{2.657}{0.005} & \meanstd{3.863}{0.036} & \meanstd{4.022}{0.048} & \meanstd{2.237}{0.143} & \meanstd{2.520}{0.115} \\

\texttt{Moscot} & \multicolumn{1}{c}{\textemdash} &
\meanstd{6.242}{0.000} & \meanstd{6.545}{0.000} &
\meanstd{7.115}{0.000} & \meanstd{7.331}{0.000} &
\meanstd{2.000}{0.000} & \meanstd{2.335}{0.000} \\
\texttt{OT-CFM} & \multicolumn{1}{c}{\textemdash} &
\meanstd{2.392}{0.005} & \meanstd{2.625}{0.007} &
\meanstd{3.696}{0.007} & \meanstd{3.857}{0.009} &
\meanstd{1.993}{0.004} & \meanstd{2.275}{0.005} \\
\texttt{OT-MFM} & \multicolumn{1}{c}{\textemdash} &
\meanstd{2.401}{0.003} & \meanstd{2.636}{0.003} &
\meanstd{3.714}{0.008} & \meanstd{3.880}{0.009} &
\meanstd{1.984}{0.004} & \meanstd{2.285}{0.004} \\
\texttt{UOT-FM} & \multicolumn{1}{c}{\textemdash} &
\meanstd{2.411}{0.005} & \meanstd{2.649}{0.006} &
\meanstd{3.701}{0.006} & \meanstd{3.867}{0.007} &
\meanstd{1.998}{0.004} & \meanstd{2.348}{0.004} \\
\texttt{SF2M} & \multicolumn{1}{c}{\textemdash} &
\meanstd{3.254}{0.192} & \meanstd{3.368}{0.182} &
\meanstd{4.333}{0.279} & \meanstd{4.436}{0.282} &
\meanstd{3.826}{0.265} & \meanstd{3.974}{0.308} \\

\midrule
& 0.5 &
\meanstd{3.199}{0.117} & \meanstd{3.315}{0.110} &
\meanstd{4.303}{0.213} & \meanstd{4.397}{0.205} &
\meanstd{3.809}{0.302} & \meanstd{3.968}{0.374} \\
\multirow{-2}{*}{\name \ +\texttt{SF2M}}
& 1 &
\meanstd{3.226}{0.075} & \meanstd{3.339}{0.073} &
\meanstd{4.289}{0.110} & \meanstd{4.387}{0.108} &
\meanstd{3.638}{0.308} & \meanstd{3.739}{0.335} \\
\midrule
& 0.5 &
\meanstd{2.393}{0.007} & \meanstd{2.631}{0.008} &
\meanstd{3.679}{0.007} & \meanstd{3.838}{0.009} &
\meanstd{1.978}{0.004} & \meanstd{2.277}{0.003} \\
\multirow{-2}{*}{\name \ +\texttt{MFM}}
& 1 &
\meanstd{2.363}{0.002} & \meanstd{2.606}{0.002} &
\meanstd{3.668}{0.010} & \meanstd{3.824}{0.011} &
\meanstd{2.013}{0.003} & \meanstd{2.304}{0.003} \\
\midrule
& 0.5 &
\meanstd{2.377}{0.004} & \meanstd{2.619}{0.005} &
\meanstd{3.688}{0.012} & \meanstd{3.854}{0.012} &
\textbf{\boldmath \meanstd{1.971}{0.005}} & \meanstd{2.322}{0.005} \\
\multirow{-2}{*}{\name \ +\texttt{UOT-FM}}
& 1 &
\textbf{\boldmath \meanstd{2.360}{0.002}} & \meanstd{2.605}{0.001} &
\textbf{\boldmath \meanstd{3.624}{0.004}} & \textbf{\boldmath \meanstd{3.780}{0.002}} &
\meanstd{1.993}{0.004} & \meanstd{2.335}{0.005} \\
\midrule
\multirow{2}{*}{\name \ + \texttt{CFM}} & 0.5 &
\meanstd{2.381}{0.004} & \meanstd{2.618}{0.003} &
\meanstd{3.679}{0.009} & \meanstd{3.835}{0.010} &
\meanstd{1.989}{0.004} & \textbf{\boldmath \meanstd{2.272}{0.005}} \\
& 1 &
\meanstd{2.362}{0.003} & \textbf{\boldmath \meanstd{2.601}{0.005}} &
\meanstd{3.639}{0.021} & \meanstd{3.788}{0.021} &
\meanstd{2.057}{0.005} & \meanstd{2.329}{0.005} \\
\bottomrule
\end{tabular}}
\vspace{-10pt}
\end{table*}

\textbf{Real-world datasets.}
We evaluate \name\ on six real-world scRNA-seq datasets whose characteristics are summarized in \Cref{tab:datasets}.
We selected these datasets because their temporal coverage provides a favorable window in which ligand--receptor (LR) interactions are expected to remain approximately persistent.
Following standard preprocessing, we project gene-expression profiles onto the top $d=20$ principal components (\Cref{app:data_processing}) and standardize them as in \citet{tong2024improving}.
Additional details on dataset collection are provided in \Cref{app:datasets}, and results on further datasets are in \Cref{app:additional_results}.

\textbf{Setup.}
We build CCI tensors by selecting dataset-specific ligand--receptor pairs via an automated procedure that accounts for stability of expression levels across snapshots (cf.\ \Cref{app:selection_ligand_receptor} for more details).
Given three time points $t_0<t_1<t_2$, we hold out the snapshot at $t_1$.
Using only $t_0$ and $t_2$, and for a chosen LR catalog $\mathcal P$ and hyperparameter $\alpha\in\{0,0.1,\ldots,1.0\}$, we obtain a coupling $\Gamma(\alpha,\mathcal P)$ by solving the OT problem defined in \Cref{eq:fgw}.
We define the marginal at $t_1$ by affine interpolation and denote it by $\rho_{t_1}(\alpha,\mathcal P)$.
For each $\alpha$ and $\mathcal P$, we compare $\rho_{t_1}(\alpha,\mathcal P)$ with the empirical distribution $\rho_{t_1}$ observed at $t_1$, computing the Wasserstein-1 and Wasserstein-2 distances.

\textbf{Results.}
Across datasets, incorporating CCI structure improves alignment, with optimal performance at a dataset-specific $\alpha^\ast>0$ (see \Cref{fig:alpha_sweep}).
We observe two regimes: a U-shaped curve with $0<\alpha^\ast<1$, indicating that combining CCI with feature-only OT is best, and an almost monotonic decrease with a minimum at $\alpha^\ast=1$ for the \textit{Dendritic Stimulus} dataset. For the latter, this may reflect the dataset’s smaller size and coherent stimulation response, which make feature-only OT comparatively less informative than the interaction structure.
Taken together with the synthetic results, these findings support (Q1): encoding directed, typed CCI structure improves coupling fidelity in settings where interaction geometry is (approximately) persistent.

\subsection{Cross-snapshot trajectory inference with learned velocity fields}
\label{sec:exp_flow_matching}

\textbf{Setup.}
We evaluate \name\ against a comprehensive set of state-of-the-art trajectory inference methods to assess whether improvements in coupling quality translate into improved continuous-time dynamics.
Starting from a coupling $\Gamma$ inferred between $t_0$ and $t_2$, we learn a time-dependent velocity field and integrate it to transport cells from $t_0$ to the held-out snapshot at $t_1$.
Performance is measured by the Wasserstein-1 and Wasserstein-2 distances between the transported distribution and the empirical distribution at $t_1$.
We report results for $\alpha\in\{0,0.5,1\}$. We benchmark against representative methods spanning multiple paradigms:
neural ODE models \citep{Tong2020TrajectoryNet, huguet2022manifold},
stochastic Schr\"odinger bridges \citep{de2021diffusion,tong2023simulation},
flow-matching approaches \citep{kapusniak2024metric, eyring2023unbalancedness, wang2025joint}, OT baselines \citep{klein2025mapping} and kernel-based methods \citep{berlinghieri2025oh}. Because performance in continuous-time models can be sensitive to how endpoint correspondences are established, we evaluate \name\ as a \emph{plug-and-play coupling prior}.
Specifically, for four representative baselines (\texttt{CFM}, \texttt{MFM}, \texttt{UOT-FM}, and \texttt{SF2M}), we replace the default coupling or matching mechanism with the interaction-aware coupling yielding ``\name\texttt{+}'' variants.
In these settings, the dynamics model, architecture, and training objective are held fixed, isolating the effect of coupling quality.
Implementation details are provided in \Cref{app:details_mfm_iadot,app:details_unbalanced_iadot,app:details_sf2m_iadot}.

\textbf{Results.}
\Cref{tab:flow_matching_results} reports interpolation error at the held-out time $t_1$.
Across datasets and trajectory-learning paradigms, we observe consistent improvements when replacing feature-only or entropic couplings with the \name\ coupling. Both $\alpha=0.5$ and $\alpha=1$ generally improve over the feature-only case ($\alpha=0$), with the balanced setting $\alpha=0.5$ providing the most consistent gains across the paired comparisons.  A paired Wilcoxon significance analysis confirms that these improvements remain significant after Holm correction, with adjusted $p < 0.05$ for all model families and Cohen's $d_z$ values ranging from $0.27$ to $0.63$, corresponding to small-to-moderate paired effect sizes. Notably, the same interaction-aware coupling improves multiple downstream methods, demonstrating that gains arise from improved cross-snapshot alignment rather than from method-specific architectural choices.

\subsection{Are the gains driven by biologically meaningful ligand--receptor structure?}
\label{sec:insilico}
We use \name\ to simulate intercellular perturbations on the \textit{Lung Tumor} dataset by ablating specific pathways from the ligand--receptor catalog, recomputing the CCI tensors, and re-solving \Cref{eq:fgw}. This intervention alters only the interaction prior (with baseline expression at $t=0$ fixed), mimicking a pharmacological blockade prior to transcriptional adaptation \citep{Lee2016}. We quantify trajectory shifts relative to the unperturbed baseline using the 20 Hallmarks of Cancer gene sets (Appendix~\ref{app:insilico}) over a 24h interpolation window.

\textbf{Results.}
\Cref{fig:interventions} shows the relative decrease in tumour-associated progression scores under different catalog edits.
Attenuating signaling through EGFR, ALK, or MET produces measurable reductions (up to 15.5\%), indicating that the inferred trajectories are sensitive to these pathways.
This aligns with their established therapeutic relevance in non--small cell lung cancer, where EGFR inhibitors (e.g., gefitinib, osimertinib), ALK inhibitors (e.g., crizotinib, alectinib), and MET inhibitors (e.g., capmatinib, tepotinib) are used clinically \citep{Domvri2013}.
By contrast, edits to unrelated cardio--renal pathways (RAAS, vasopressin, natriuretic peptides) yield negligible changes, suggesting that \name\ responds specifically to biologically relevant ligand--receptor structure rather than arbitrary perturbations.

\begin{figure}[!htb]
  \centering
  \begin{subfigure}[t]{0.49\linewidth}
    \centering
    \includegraphics[width=\linewidth]{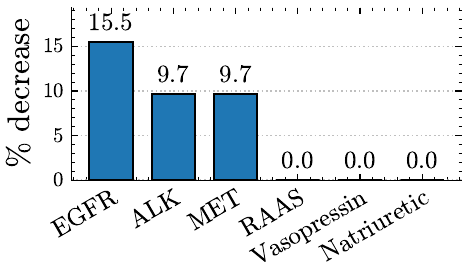}
    \caption{\textbf{In \textit{silico} interventions.}}
    \label{fig:interventions}
  \end{subfigure}\hfill
  \begin{subfigure}[t]{0.49\linewidth}
    \centering
    \includegraphics[width=\linewidth]{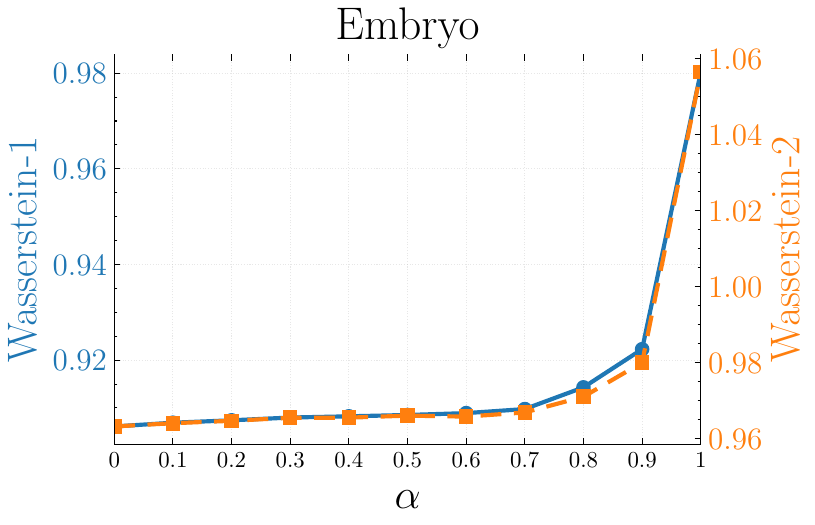}
    \caption{\textbf{Embryo dataset.}}
    \label{fig:embryo}
  \end{subfigure}
  \caption{\textbf{When interaction priors help and when they do not.}
  Left: pathway-specific LR edits produce measurable trajectory shifts.
  Right: in rapidly remodeling development, CCI structure may be non-persistent and provides no benefit.}
  \label{fig:interventions_and_embryo}
\end{figure}

\textbf{Ablations.} Motivated by the observation that editing the LR catalog shifts inferred trajectories,
we test whether gains are driven by coherent LR structure rather than arbitrary regularization by applying three controlled perturbations to the CCI construction  at $\alpha=1$:
\emph{Random LR catalog}---replace the curated LR catalog with a random subset of the same size;
\emph{Shuffling}---randomly permute all entries of the CCI tensors, destroying coherent structure;
\emph{Metacells}---aggregate cells into metacells before constructing CCIs and then lift interactions back to the cell level (see \Cref{app:metacell}), thereby smoothing the signal.  We report the results in Table~\ref{tab:ablation_results}, where we compute the metrics based on the interpolation setup described in \Cref{sec:synthetic}. Shuffling the CCI leads to a performance drop, confirming that the \emph{structural organization} of LR interactions drives the gains.
Using a random LR catalog also degrades interpolation, highlighting the importance of \textit{LR specificity}.
Metacell-based CCI yields intermediate performance by smoothing dropout noise, but can oversmooth, degrading performance, consistent with prior spatiotemporal analyses \citep{klein2025mapping}.

\begin{table*}[t]
\centering
\caption{\textbf{Sensitivity analysis on the CCI construction.} We report results mimicking the marginal interpolation setup followed in \Cref{sec:synthetic}. Baselines are reported as mean $\pm$ standard deviation over five seeds, while \name\ is deterministic.}
\label{tab:ablation_results}
\footnotesize
\setlength{\tabcolsep}{4pt}
\renewcommand{\arraystretch}{0.9}
\begin{tabular}{lcccccc}
\toprule
\multicolumn{1}{c}{} &
\multicolumn{2}{c}{\textbf{V1 Light}} &
\multicolumn{2}{c}{\textbf{Dendritic Stimulus}} &
\multicolumn{2}{c}{\textbf{Lung tumor}} \\
\cmidrule(lr){2-3}\cmidrule(lr){4-5}\cmidrule(lr){6-7}
\multicolumn{1}{c}{\textbf{Method}} &
\multicolumn{1}{c}{$W_1$} & \multicolumn{1}{c}{$W_2$} &
\multicolumn{1}{c}{$W_1$} & \multicolumn{1}{c}{$W_2$} &
\multicolumn{1}{c}{$W_1$} & \multicolumn{1}{c}{$W_2$} \\
\midrule
\texttt{Shuffle} &
\meanstd{2.441}{0.002} & \meanstd{2.646}{0.002} &
\meanstd{3.644}{0.004} & \meanstd{3.748}{0.007} &
\meanstd{2.188}{0.007} & \meanstd{2.392}{0.007} \\

\texttt{Random LR} &
\meanstd{2.446}{0.006} & \meanstd{2.655}{0.008} &
\meanstd{3.602}{0.022} & \meanstd{3.729}{0.019} &
\meanstd{2.171}{0.030} & \meanstd{2.388}{0.033} \\

\texttt{Metacell} &
\meanstd{2.329}{0.000} & \meanstd{2.567}{0.002} &
\meanstd{3.587}{0.005} & \meanstd{3.725}{0.000} &
\meanstd{2.052}{0.002} & \meanstd{2.344}{0.002} \\
\midrule
\name &
2.350 & 2.587 &
3.585 & 3.732 &
2.028 & 2.298 \\
\bottomrule
\end{tabular}
\end{table*}

\subsection{Does structure always help?}
\label{sec:failure}
\textbf{Setup.}
Our formulation does not assume static cell--cell interaction (CCI) structure across time. Instead, as in standard optimal transport, we impose a smoothness principle: among admissible couplings, we favor those that minimize feature displacement while approximately preserving directed interaction geometry. This is appropriate when snapshots are separated by modest temporal gaps and the system evolves smoothly. Under rapid, large-scale remodeling this assumption can fail, a known limitation of OT-based alignment rather than a \name-specific issue \citep{bunne2023learning}. We illustrate this regime on a developing mouse embryo dataset \citep{moon2019visualizing}, where tissue composition, size, and function shift abruptly and snapshots are six days apart \citep{Qiu2024}.

\textbf{Results.} \Cref{fig:embryo} shows that \name\ provides no improvement over feature-only OT ($\alpha=0$) on this dataset: interaction structure is not transferable across six-day developmental intervals and becomes uninformative. Accordingly, performance is essentially flat in $\alpha$, with degradation at larger $\alpha$. Practically, when cross-snapshot interaction geometry does not persist, the structural term should be downweighted or omitted.

%% file: sections/discussion.tex
\section{Discussion}

\textbf{Disambiguating alignment via biological structure.}
OT--based alignment is often \emph{underdetermined}.
\name\ addresses this bottleneck by injecting a biological inductive bias: a least action principle of ligand--receptor communication.
Crucially, LR-derived signaling provides \emph{complementary} information to expression similarity: cells may be transcriptionally similar yet play distinct signaling roles, or conversely exhibit different expression profiles while participating in similar communication patterns.
By formulating alignment as a multi-channel FGW objective, \name\ produces couplings that are simultaneously feature-coherent and consistent with directed, typed communication structure.
A practical consequence of making structure an \emph{explicit} prior is editability: because the prior is expressed through the LR catalog used to construct the CCI tensor, \name\ enables mechanism-specific counterfactuals by quantifying how pathway-level catalog edits shift inferred trajectories.

\textbf{A coupling-level, modular prior.}
A central design principle of \name\ is the separation between \emph{structural priors} and \emph{trajectory parameterization}.
\name\ encodes signaling structure once at the level of the cross-snapshot coupling and exposes this as a reusable interface to downstream  models.
Empirically, we find that \name\ improves performance across deterministic flows (CFM), geometry-aware interpolation (MFM), stochastic bridge dynamics (SF2M), and unbalanced transport (UOT-FM), and that replacing baseline couplings with the \name\ couplings improves those methods.
These results suggest that interaction-aware couplings complement advances in generative modeling: they refine the \emph{endpoint correspondence} problem many trajectory learners rely on, without changing architectures or objectives.
Finally, negative controls (e.g., random LR assignments or permuted channels) support that gains often depend on biologically meaningful communication structure rather than arbitrary regularization.

\textbf{Limitations, extensions, and practical guidance.}
\name\ is most appropriate when cross-snapshot communication structure is at least partially conserved.
In rapidly remodeling systems with substantial composition shifts or long temporal gaps, signaling patterns may be non-persistent and the structural term should be downweighted.
Several extensions could broaden applicability: (1) \emph{time-varying LR catalogs} to handle non-stationary signaling programs, and (2) tighter \emph{integration with spatial transcriptomics} to validate and refine CCI proxies when spatial coordinates are available.
More broadly, scalability to human atlas-scale datasets remains an important direction, both computationally (large-$n$ coupling optimization) and statistically (robust CCI estimation under extreme sparsity).

\textbf{Broader impact.}
\name\ provides a general mechanism for population alignment by injecting typed interaction priors into OT.
Beyond biology, the same principle applies whenever entities interact through directed, typed relations.
For example, in financial networks, directed transaction patterns could regularize market states across regime shifts by constraining correspondences.
Analogous ideas apply to social and multi-agent systems, where preserving directed relational structure can reduce ambiguity in cross-time alignment and improve downstream dynamics modeling.

%% file: sections/appendix.tex
\title{Supplementary Material for \name}
\section{Extended Related Works}

\label{app:extended_related}
We situate our framework within trajectory inference and optimal transport by organizing prior methods according to the \emph{inductive bias} they impose to resolve the intrinsic underdetermination of aligning population snapshots. Existing approaches regularize alignment through feature smoothness, dynamical directionality, geometric/spatial structure, or conservation laws. Our contribution introduces a complementary bias: smoothness of a \emph{directed, typed cell--cell communication structure}, injected as a plug-and-play regularizer at the level of the OT coupling.

\textbf{Feature-based priors: gene-expression smoothness.}
Classical trajectory-inference methods reconstruct cellular progressions from neighborhood graphs using pseudotime and branching heuristics \citep{Qiu2017Monocle2,Haghverdi2016DPT,Street2018Slingshot,Wolf2019PAGA}. When applied to time-course data, these methods typically pool cells from all observed timepoints into a single expression-space graph and then infer pseudotime or branching structure from local transcriptomic neighborhoods.  Optimal transport (OT) provides a population-level alternative by coupling entire distributions across timepoints under a least-action bias in gene-expression space \citep{villani2008optimal,peyre2019computational}. Waddington-OT (WOT) extends this idea to sequences of snapshots via adjacent-time couplings \citep{Schiebinger2019WOT}, while continuous-time models such as TrajectoryNet learn neural ODE flows constrained by transport \citep{Tong2020TrajectoryNet}. More recent work connects OT couplings to continuous-time generative dynamics using flow matching or diffusion-based formulations \citep{klein2024genot}. These approaches resolve underdetermination by favoring \emph{feature-smooth} matchings, with alignment costs defined in expression space.

\textbf{Dynamic priors: directionality and population-level constraints.}
A separate line of work encodes \emph{directional} priors or population-scale constraints. RNA velocity and its extensions infer directionality from spliced/unspliced counts and propagate it over $k$NN graphs \citep{LaManno2018RNAVelocity,Bergen2020scVelo}, while CellRank combines velocity with transcriptomic similarity to estimate fate probabilities \citep{Lange2022CellRank}. Other approaches modify the mass-conservation assumption: unbalanced OT and flow-matching methods such as UOT-FM relax exact mass preservation to account for proliferation or apoptosis \citep{eyring2023unbalancedness}, while VGFM explicitly models cellular growth rates within generative flows \citep{wang2025joint}. Related biologically informed NeuralODE frameworks incorporate mechanistic gene-regulatory priors into continuous-time expression dynamics. For example, PHOENIX \citep{hossain2024biologically} uses Hill--Langmuir-inspired neural dynamics and prior GRN structure to learn sparse, interpretable genome-scale regulatory ODEs. However, such models regularize intracellular gene--gene regulatory dynamics rather than the intercellular, directed ligand--receptor communication structure used in our coupling-level prior. Meta Flow Matching \citep{atanackovic2024meta} learns amortized vector fields using graph neural networks but requires multiple datasets for training, limiting applicability when only a single time-series experiment is available. 

\textbf{Geometric priors: spatial and structural alignment.}
Spatial optimal transport methods leverage physical proximity to infer communication or alignment, using spatial transcriptomics measurements \citep{Cang2020SpaOTsc}. For example, NicheFlow models microenvironment-mediated effects but assumes access to spatial coordinates \citep{sakalyan2025modeling}. This requirement is a practical barrier in the most common setting of dissociated scRNA-seq, where spatial coordinates are unavailable; in contrast, our approach enables communication-aware alignment \emph{from dissociated scRNA-seq alone} by using curated ligand--receptor knowledge as a mechanistically grounded prior. More generally, Gromov--Wasserstein (GW) and Fused GW formulations compare samples via relational structure rather than raw features \citep{vayer2020fused}. In single-cell applications, the relational structure is often instantiated as scalar similarity graphs capturing generic topology; such graphs are useful geometric summaries, but they do not encode \emph{who signals to whom via which pathway}. Recent OT toolkits such as MosCOT provide scalable solvers for linear and fused OT, yet standard pipelines largely rely on feature-space or spatial distances, leaving the rich landscape of interaction-driven constraints unexplored \citep{klein2025mapping}. Related flow-based approaches incorporate geometric priors by restricting dynamics to the data manifold or a learned Riemannian metric \citep{huguet2022manifold,kapusniak2024metric}.\\

\textbf{Communication priors (ours): directed interaction structure.}
A large literature infers putative cell--cell communication from dissociated scRNA-seq using curated ligand--receptor catalogs and expression-based heuristics, providing biologically grounded \emph{priors} on plausible signaling even without spatial coordinates. Tools such as CellPhoneDB systematically enumerate LR co-expression across cell types \citep{Efremova2020}. Meta-frameworks like LIANA+ unify and standardize CCI scoring across multiple LR resources and methods, facilitating method-agnostic comparisons and consensus analyses \citep{Dimitrov2024}.
In our work, we derive a directed, typed interaction representation from ligand--receptor expression and inject it into an FGW objective, favoring couplings that encode smoothly evolving channel-specific signaling context across time. Relatedly, CytoBridge \citep{zhang2025modeling} formulates an unbalanced mean-field Schrödinger bridge that learns interaction effects via a neural interaction potential alongside growth and transition dynamics, but it does not leverage \textit{interpretable}, directed, typed ligand–receptor channels as an explicit coupling regularizer. Furthermore, because our contribution operates at the coupling level, it is agnostic to the choice of downstream continuous-time learner, unlike CytoBridge. In particular, \name~ can be used as a plug-and-play regularizer on top of the priors discussed above.

\paragraph{Comparison with Related Works}

Table~\ref{tab:capabilities} provides a non-exhaustive comparison between \name\ and different methods across five key capabilities essential for modeling complex cellular dynamics. We define these criteria as follows:

\begin{itemize}
\item \textbf{Dynamic} indicates whether the method explicitly models temporal evolution across multiple experimental timepoints, as opposed to inferring dynamics or trajectories from a single static snapshot.
\item \textbf{Trajectories} distinguishes methods that recover a continuous smooth path enabling predictions at unobserved intermediate timepoints from those that solely compute discrete couplings or transport maps between timepoints.
\item \textbf{In-silico Perturbation} refers to the capability to perform principled interventions, allowing users to simulate and predict the system's response to specific stimuli or perturbations.
\item \textbf{Structure-Aware} assesses whether the optimization objective explicitly models interactions between cells (e.g., via cell-cell communication or topological constraints) rather than treating cells as independent, isolated entities.
\item \textbf{scRNA Data Sufficient} confirms whether the method can operate effectively using standard single-cell RNA sequencing inputs alone, without requiring auxiliary spatial transcriptomics data or multi-modal integration that are often unavailable.

\end{itemize}

\begin{table}[!htb]
\centering
\small
\setlength{\tabcolsep}{6pt}
\renewcommand{\arraystretch}{1}

\caption{\textbf{Capability matrix for trajectory inference methods.}
Comparison of graph-based, optimal transport, and continuous-time generative approaches for learning cellular dynamics from population snapshots.
\name\ uniquely integrates typed interaction structure into transport-based trajectory inference using scRNA-seq data only.}
\label{tab:capabilities}

\resizebox{\columnwidth}{!}{%
\begin{tabular}{lccccccc}
\toprule
\textbf{Method} &
\textbf{Dynamic} &
\textbf{Trajectories} &
\textbf{In-silico Perturbation} &
\textbf{Structure-Aware} &
\textbf{scRNA Only} &
\textbf{Mechanism} &
\textbf{Example Reference} \\
\midrule

\multicolumn{8}{l}{\textit{Graph-Based and Pseudotime Methods}} \\
PAGA (Scanpy)        & \cbad  & \cbad  & \cbad  & \cbad  & \cgood & Graph heuristics & \citet{Wolf2019PAGA} \\
Monocle / DPT        & \cbad  & \cbad  & \cbad  & \cbad  & \cgood & Pseudotime graphs & \citet{Haghverdi2016DPT} \\
scVelo               & \cgood & \cbad  & \cbad  & \cbad  & \cgood & RNA velocity & \citet{Bergen2020scVelo} \\

\midrule
\multicolumn{8}{l}{\textit{Optimal Transport Alignment}} \\
Waddington-OT        & \cgood & \cpart & \cbad  & \cbad  & \cgood & Feature OT & \citet{Schiebinger2019WOT} \\
SCOT                 & \cbad  & \cbad  & \cbad  & \cpart & \cbad  & GW alignment & \citep{Demetci2022} \\
MOSCOT               & \cbad  & \cbad  & \cbad  & \cpart & \cgood & GW / FGW OT & \citep{klein2025mapping} \\

\midrule
\multicolumn{8}{l}{\textit{OT-Based Continuous-Time Dynamics}} \\
TrajectoryNet        & \cgood & \cgood & \cbad  & \cbad  & \cgood & Neural ODE + OT & \citep{Tong2020TrajectoryNet} \\
OT-CFM               & \cgood & \cgood & \cbad  & \cbad  & \cgood & Flow matching & \citep{tong2024improving} \\
Diffusion SB (DSB)   & \cgood & \cgood & \cbad  & \cbad  & \cgood & Schr\"odinger bridge & \citep{de2021diffusion} \\

\midrule
\multicolumn{8}{l}{\textit{Perturbation and Conditional Transport}} \\
CellOT               & \cbad  & \cbad  & \cgood & \cbad  & \cbad  & Conditional OT & \citep{bunne2023learning} \\
NicheFlow / Spatial OT
                     & \cbad  & \cbad  & \cpart & \cpart & \cbad  & Spatial structure & \citep{sakalyan2025modeling} \\

\midrule
\rowcolor{blue!5}
\textbf{\name\ (Ours)} &
\cgood & \cgood & \cgood & \cgood & \cgood &
Interaction-aware FGW + FM & -- \\

\bottomrule
\end{tabular}}

\end{table}

\section{Potential Applications of \name}
Snapshots of cellular systems using single-cell RNA sequencing are now pervasive across diverse areas of biology and medicine. A few representative longitudinal datasets are summarized in Table~\ref{tab:longitudinal_sc_apps}. \name\ provides a principled framework to analyze such data by combining snapshot measurements with biologically typed ligand–receptor structure. This enables the reconstruction of coherent cell-state trajectories through optimal transport couplings and a learned continuous flow, as well as the exploration of counterfactual scenarios by selectively re-weighting interaction channels. The resulting outputs (shifts in lineage fate, changes in pathway usage, and differences in progression timing) offer interpretable readouts that can guide mechanistic hypotheses and help prioritize therapeutic strategies before experimental validation.

\begin{table*}[t]
\centering
\footnotesize
\setlength{\tabcolsep}{5pt}
\renewcommand{\arraystretch}{1.08}
\caption{\textbf{Examples of public longitudinal single-cell datasets.}
Each row summarizes a biomedical area, the available longitudinal single-cell setting, and representative references. The list is illustrative rather than exhaustive.}
\label{tab:longitudinal_sc_apps}
\begin{tabular}{@{}p{2.0cm}p{9.0cm}p{5.2cm}@{}}
\toprule
\textbf{Area} & \textbf{Dataset description} & \textbf{Representative references} \\
\midrule
Virology &
Longitudinal PBMC or tissue scRNA-seq during viral infection, vaccination, or challenge studies, capturing early immune activation, peak response, and recovery. &
Dengue virus: \citep{Zanini2018};
vaccine response: \citep{Arunachalam2021} \\
\midrule
Neurology &
Brain single-cell time courses, including organoid and tissue systems that profile neuronal, glial, and immune-state changes over development or disease progression. &
Brain organoids: \citep{Camp2015} \\
\midrule
Cardiology &
Cardiac and vascular single-cell time courses following injury or during disease progression, capturing inflammation, remodeling, and repair. &
Post-MI heart: \citep{Farbehi2019};
atherosclerosis: \citep{Pan2020} \\
\midrule
Immunology &
Tissue and immune-cell scRNA-seq across baseline, inflammatory activation, and resolution or recovery in experimental model systems. &
Lung inflammation: \citep{Goldfarbmuren2020} \\
\midrule
Development &
Human iPSC or hPSC differentiation series that track lineage commitment, maturation, and cell-state transitions over multiple sampling stages. &
Cardiomyocytes: \citep{Strober2019};
blood cells: \citep{Tusi2018} \\
\midrule
Regeneration &
Injury-response time courses in organs such as liver, kidney, or muscle, capturing damage response, repair, and cellular remodeling. &
Liver injury: \citep{Chen2023} \\
\bottomrule
\end{tabular}
\vspace{-0.5em}
\end{table*}

\section{Datasets} \label{app:datasets}
In addition to the synthetic dataset, we used 6 real-world scRNA datasets to showcase the effectiveness and limitations of our method. Details on the number of genes and the number of cells in each dataset can be found in Table \ref{tab:datasets}.

\begin{table}[!htb]
\centering
\small
\setlength{\tabcolsep}{10pt} 
\renewcommand{\arraystretch}{1.1} 
\caption{\textbf{Datasets used in our experiments.} Counts reflect the preprocessed objects used by \name. Time points indicate the observed sampling times for each dataset. The mouse cell atlas spans embryonic day E3.5 to E13.5.}
\label{tab:datasets}
\begin{tabular}{l l l r r}
\toprule
\textit{Dataset} & \textit{Reference} & \textit{Time points} & \textit{\#Cells} & \textit{\#Genes} \\
\midrule
Tumour & \citep{Kortlever2017} & 0, 8, 24, 168 (h) & 31{,}536 & 22{,}681 \\
V1 Cortex & \citep{hrvatin2018single} & 0, 1, 4 (h)& 6{,}505 & 17{,}008 \\
Dendritic Stimulus & \citep{shalek2014single}  & 0, 1, 2, 4, 6 (h) & 2{,}382 & 10{,}972 \\
Mouse embryo & \citep{moon2019visualizing} & 0, 6, 12, 18, 24 (d) & 18{,}203 & 17{,}789 \\
Macrophage Stimulus &  \citep{wierenga2022single} & 0, 3, 5 (h) & 223 & 478 \\
Mouse Cell Atlas & \citep{Qiu2022}& E3.5 - E13.5 & 1.7 M & 29{,}452\\
\bottomrule
\end{tabular}
\end{table}

\subsection{Synthetic example}
\label{app:synthetic_details}
In this section we detail the synthetic setup used in \Cref{sec:synthetic}. We construct \(\mathcal{D}_0\) as three 2D Gaussian clusters,
\[
\mathcal{D}_0=\bigcup_{k=0}^{2}\mathcal{S}_k,\qquad 
\mathcal{S}_k=\{X_i^{(k)}\}_{i=1}^{35},\qquad 
X_i^{(k)}\stackrel{\text{i.i.d.}}{\sim}\mathcal{N}(\mu_k,\,0.1\,I_2),
\]
with centers \(\mu_0=(-2,2)\), \(\mu_1=(0,2)\), and \(\mu_2=(2,2)\). The target snapshot \(\mathcal{D}_1=\bigcup_{k=0}^{2}\mathcal{S}'_k\) is obtained by translating each cluster via
\[
T_0(x)=x+(4,-4),\quad T_1(x)=x+(0,-4),\quad T_2(x)=x+(-4,-4),
\]
so that \(\mathcal{S}'_k=\{T_k(X):\,X\in\mathcal{S}_k\}\).

For structure, we define two-channel, directed relation tensors \(G,G'\in\{0,1\}^{105\times105\times2}\) over \(\mathcal{D}_0\) and \(\mathcal{D}_1\), respectively. Writing \(G^{(c)}\) for channel \(c\), we set
\[
G^{(1)}_{ij}=\mathbf{1}\{X_i\in\mathcal{S}_1,\;X_j\in\mathcal{S}_0\},\qquad
G^{(2)}_{ij}=\mathbf{1}\{X_i\in\mathcal{S}_1,\;X_j\in\mathcal{S}_2\},
\]
with \(G'\) defined analogously on \(\mathcal{D}_1\). Thus, channel 1 encodes \(\mathcal{S}_1\!\to\!\mathcal{S}_0\) and channel 2 encodes \(\mathcal{S}_1\!\to\!\mathcal{S}_2\).

\subsection{Lung Tumor}
We use a private scRNA-seq dataset to study rapid tumour progression driven by RAS--MYC signalling using a \textit{Kras}\textsuperscript{G12D} lung tumour model with tamoxifen-inducible MycER. Samples were collected at 0~h (vehicle), 8~h, 24~h (\(n=8\) biological replicates per condition; 0~h is time zero). Lungs from LSL-\textit{Kras}\textsuperscript{G12D} \citep{Jackson2001} and LSL-\textit{Rosa26}\textsuperscript{MIE/MIE} (\textit{MycERT2}) mice \citep{Murphy2008} were dissociated to single cells, red blood cells removed, filtered (70~\(\mu\)m), and 6{,}000 cells per sample were loaded for 10x Chromium \(3'\) v3 libraries. Libraries were sequenced on a NovaSeq~6000 and processed with Cell Ranger v6.1.1 against \texttt{mm10}. All animal work complied with institutional ethical regulations of the Francis Crick Institute.

\subsection{V1 Cortex - Light stimulation}
Adult (6–8\,week) mice were dark-adapted for 7\,days, then either euthanized in darkness (\(0\)h, control) or exposed to ambient light for \(1\)h or \(4\)h \citep{hrvatin2018single}. The visual cortex was profiled by scRNA-seq to capture early transcriptional responses to sensory input. We treat \(0\)h as the source snapshot, \(4\)h as the target snapshot, and use \(1\)h as an intermediate time point for interpolation/validation.

\subsection{Dendritic-cell stimulus}
\label{app:dendritic_stimulus}
We use the Shalek et al.~\citep{shalek2014single} dendritic-cell stimulus-response dataset, which profiles primary mouse bone-marrow-derived dendritic cells under innate immune stimulation. The dataset includes wild-type cells stimulated with LPS across early response time points, as well as matched knockout conditions used to dissect paracrine signalling. In our interpolation experiments, we use wild-type unstimulated cells as the source snapshot, wild-type LPS-stimulated cells at 4\,h as the target snapshot, and intermediate LPS-stimulated time points for validation where applicable. For the perturbation analysis in \Cref{app:dendritic_perturbation}, we additionally use the experimentally observed 4\,h knockout populations for \textit{Ifnar1}, \textit{Stat1}, and \textit{Tnfr} as held-out perturbed targets.

\subsection{Macrophage stimulus} \label{app:macrophage_dha_lps} We use the single-cell RNA-seq dataset of Wierenga et al.~\citep{wierenga2022single}, which profiles murine fetal liver-derived macrophages exposed to LPS with or without 24\,h pre-treatment with docosahexaenoic acid (DHA, 25\,$\mu$M). Cells were treated with LPS (20\,ng/mL) and collected at 0\,h, 1\,h, and 4\,h, then sequenced using the 10x Chromium platform. In our interpolation experiments, we use 0\,h as the source snapshot, 4\,h as the target snapshot, and 1\,h as the held-out intermediate snapshot. Where condition-specific analyses are performed, cells are stratified by vehicle versus DHA pre-treatment before subsampling.

\subsection{Embryo development}
In \Cref{sec:failure}, we analyze a mouse embryoid body (EB) differentiation time course used in \citet{moon2019visualizing}, which profiles embryonic stem cells differentiating toward germ layers over 27\,days by scRNA-seq. We use the first (Day 0) and third (Day 12) snapshot to infer the cellular dynamics, reserving data at Day 6 for interpolation/validation.

\subsection{Embryo cell atlas}

To evaluate scalability on atlas-scale data and assess performance under challenging developmental dynamics, we additionally considered the mouse embryo cell atlas of \cite{Qiu2022}. We constructed two held-out interpolation tasks: E7.5$\rightarrow$E8, holding out E7.75, and E7.75$\rightarrow$E8.25, holding out E8. These transitions span rapid embryonic cell-state diversification and tissue remodeling, providing a challenging benchmark for trajectory inference methods.

\section{Experimental Details}
In what follows, we provide details about our experiments presented in \Cref{sec:all_exp}. 

\subsection{Data pre-processing}
\label{app:data_processing}
Raw scRNA-seq files for all datasets were converted to AnnData to standardize processing. We applied basic QC, removing cells with $<300$ detected genes and genes expressed in $<3$ cells. Counts were library-size normalized per cell (fixed total), then log-normalized. We then selected the $2000$ highly variable genes and computed a $20$-component PCA on these features. Finally, we performed Harmony batch correction in PCA space (retaining both corrected and uncorrected embeddings for downstream analyses).

\subsection{Constructing CCIs using metacells}\label{app:metacell}
We detail how we construct CCIs using metacells in the ablation presented in \Cref{sec:insilico}.
Without loss of generality and to keep the presentation simple (with matrix multiplications), we assume \(K=1\) (i.e., one LR pair) reducing the CCI tensors to matrices. Before constructing the CCI matrices, we cluster the cells in each snapshot using Leiden community detection on a \(k\)-nearest-neighbour (kNN) graph built from the PCA representations with Euclidean distances and \(k=10\). 
An example of the Leiden clustering with subsequent cell annotations is provided in Figure~\ref{fig:scrna_annotation}.
We select the resolution \(\rho^\star\) by scanning a small grid of resolutions and choosing the value whose \emph{median} cluster size is closest to a target of \(n^\star=40\) cells.

Let \(S\in\mathbb{R}_{\ge 0}^{n\times g}\) be the membership matrix of the resulting \(g\) clusters (rows sum to 1 and correspond to one-hot assignments). We obtain metacell-level activations by averaging the \(s_{cg}\) within clusters and form the metacell CCI in $\mathbb{R}^{g \times g}$ similarly as in the setting with individual cells.

Having constructed the metacell CCI matrix $\bar{G}$, we lift it back to the cell level via
\[
\tilde{G} \;=\; S\,(S^\top S)^{-1}\,\bar G\,(S^\top S)^{-1}\,S^\top,
\] This lifting operation ensures \(S^\top \tilde{G} S = \bar G\). In contrast to \(G\), the matrix \(\tilde{G}\) is constrained to lie in the subspace \(\{S M S^\top \mid M \in \mathbb{R}^{g\times g}\}\), i.e., cell–cell interactions in \(\tilde{G}\) are entirely mediated by metacell–metacell interactions.

\begin{figure}[!htb]
    \centering
    \includegraphics[width=\linewidth]{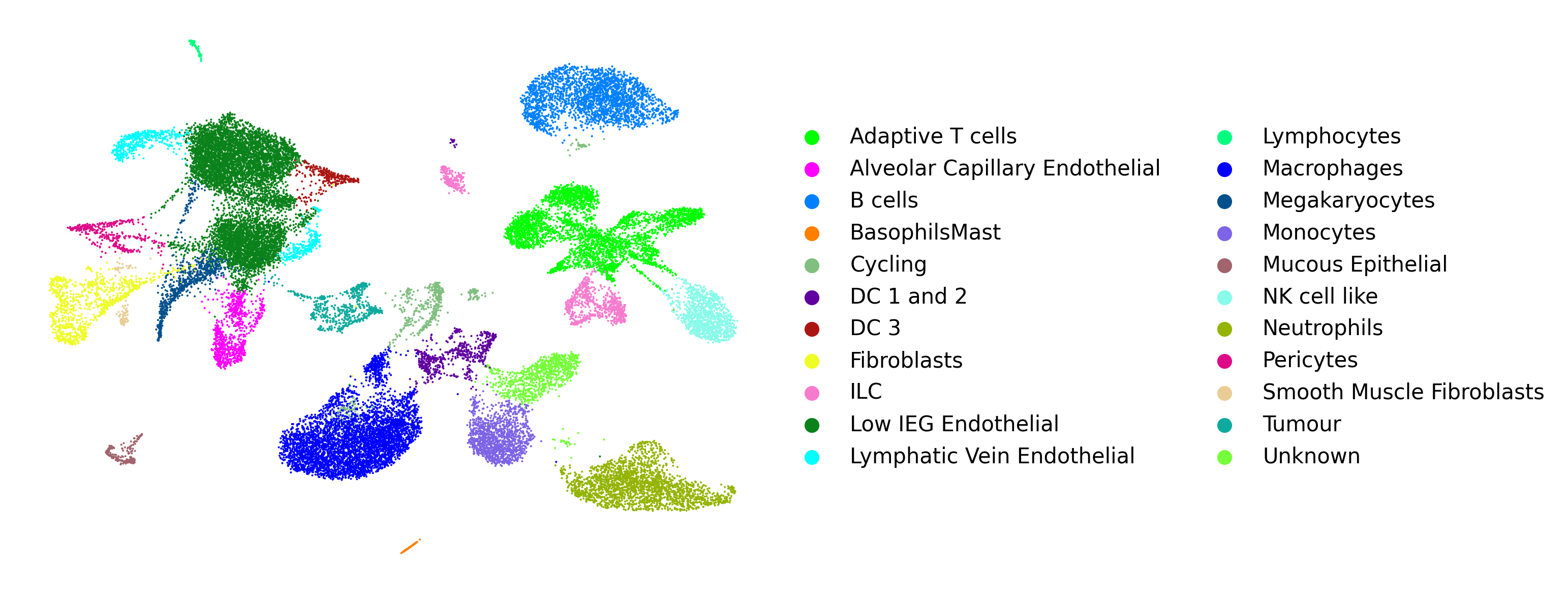}
   \caption{\textbf{Metacell construction example.} UMAP visualization of the single-cell RNA-seq data of the lung cancer dataset after Leiden clustering. Each point corresponds to an individual cell, colored by its assigned cluster and annotated with the corresponding cell type based on marker genes.}

    \label{fig:scrna_annotation}
\end{figure}

\subsection{Optimal transport solver}\label{app:ot_solver}
We extend POT’s \citep{flamary2024pot} conditional-gradient (Frank--Wolfe) solver to handle multi-channel interactions.
Given structure tensors \(C_1\!\in\!\mathbb{R}^{n_s\times n_s\times d}\), \(C_2\!\in\!\mathbb{R}^{n_t\times n_t\times d}\), marginals \(p,q\) (uniform by default), and a matrix  \(\Sigma\succeq0\in\mathbb{R}^{d\times d}\), we measure discrepancies with the Mahalanobis norm $\|x\|_{\Sigma}=\sqrt{x^\top \Sigma^{-1} x}$.

Let \(\langle A,B\rangle=\sum_{i,j}A_{ij}B_{ij}\) and write \(C^{(r)}\) for the \(r\)-th channel of a structure tensor \(C\).
The GW quadratic term is
\[
\mathcal{Q}(\Gamma)
=\sum_{i,k,j,l}\!\big\|C_1[i,k]-C_2[j,l]\big\|_{\Sigma}^{2}\,\Gamma_{ij}\Gamma_{kl}
=\langle \mathrm{constC},\Gamma\rangle-\langle \mathcal{B}(\Gamma),\Gamma\rangle,
\]
with
\[
\mathrm{constC}_{ij}=\sum_{k}\!\|C_1[i,k]\|_{\Sigma}^{2}p_k+\sum_{l}\!\|C_2[j,l]\|_{\Sigma}^{2}q_l,
\qquad
\mathcal{B}(\Gamma)=\sum_{r=1}^{d} C_{1}^{(r)}\,\Gamma\,\big(C_{2}^{(r)}\big)^{\!\top}.
\]
The gradient computed by the solver is
\begin{equation} \label{eq:gradient_fgw}
\nabla\mathcal{Q}(\Gamma)=2\big(\mathrm{constC}-\mathcal{B}(\Gamma)\big)
\end{equation}

We keep POT’s CG loop, stopping criteria, and line-search options unchanged.

We minimize
\[
\min_{\Gamma\in\Pi(p,q)}\;\,(1-\alpha)\,\langle M,\Gamma\rangle +\alpha\mathcal{Q}(\Gamma),
\]
with the same CG loop, where this objective is linearized using the gradient in \Cref{eq:gradient_fgw}.

When \(d=1\) (scalar edges), the method reduces to the original POT solver.

\subsection{Normalization} \label{app:normalization}
To balance the contributions of the feature term and the structure term in the objective described at \Cref{eq:fgw}, we rescale the feature cost matrix $C$ and the CCI tensors $G^{(0)}$ and $G^{(1)}$. We first compute the two endpoint couplings by solving the feature-only ($\alpha=0$) and structure-only ($\alpha=1$) problems, yielding $\Gamma^{\star}_{\alpha=0}$ and $\Gamma^{\star}_{\alpha=1}$.
We then define the scaling factors as follows: 
\begin{align}
\Delta\mathcal{F} &:= \mathcal{F}\!\left(\Gamma^{\star}_{\alpha=0}\right) - \mathcal{F}\!\left(\Gamma^{\star}_{\alpha=1}\right),\\
\Delta\mathcal{S} &:= \mathcal{S}\!\left(\Gamma^{\star}_{\alpha=1}\right) - \mathcal{S}\!\left(\Gamma^{\star}_{\alpha=0}\right),\\[0.3em]
\end{align}
and rescale the feature cost matrix and the CCI tensors:
\begin{align}
C &\leftarrow \frac{C}{\,\lvert \Delta\mathcal{F} \rvert\,},\\
G^{(0)} &\leftarrow \frac{G^{(0)}}{\sqrt{\Delta\mathcal{S}}}, \qquad
G^{(1)} \leftarrow \frac{G^{(1)}}{\sqrt{\Delta\mathcal{S}}}.
\end{align}
This places the terms on comparable scales so that $\alpha$ meaningfully reflects the feature/structure trade-off, and increasing $\alpha$ from $0$ to $1$ smoothly interpolates between the Kantorovich and the Gromov--Wasserstein problems.

 \subsection{Selection of ligand / receptor pairs} \label{app:liana}
We apply LIANA’s \citep{Dimitrov2024} consensus rank aggregation with \texttt{expr\_prop} = 0.1 to obtain per–cell-type interaction scores. We retain interactions with $\text{\texttt{cellphone\_pvals}} \le 0.05$ and $\text{\texttt{lr\_logfc}} \ge 0$, then keep ligand–receptor pairs whose \text{\texttt{expr\_prod}} exceeds the median within that significant set. We require the same significance criteria in each snapshot. For every surviving pair, we aggregate LIANA results across significant edges to compute the mean expression product, average specificity ranks, counts of significant source$\rightarrow$target edges, and the numbers of unique source and target cell types. We define coverage as $\text{\texttt{coverage}} = \text{\texttt{n\_edges}} / N_{\text{sig edges}}$ and retain only pairs with $0.10 \le \text{\texttt{coverage}} \le 0.40$ and at least two sources and two targets. We compute a standardized score $s = 0.6\,z(\text{\texttt{mean\_expr}}) + 0.4\,z(-\text{\texttt{spec\_rank}})$ and greedily select pairs in descending $s$ while preventing repeated ligands or receptors. We keep the top $10$ pairs for each dataset.

\label{app:selection_ligand_receptor}

\subsection{Baselines}\label{sec:baselines}

\begin{table*}[!htb]
\centering
\small
\caption{\textbf{Flow-based baseline hyperparameters.}
Baselines based on flow matching, diffusion bridges, or unbalanced optimal transport.}
\label{tab:flow_based_hyperparameters}

\begin{tabular}{llll}
\toprule
\textbf{Method} & \textbf{Category} & \textbf{Hyperparameter} & \textbf{Setting / Notes} \\
\midrule

\multicolumn{4}{l}{\textbf{Diffusion Schrödinger Bridge (DSB)}} \\
DSB & Model & Score network & Encoder [16, 32], Decoder [64, 64, 64], latent dim 16 \\
    & Training & IPF iterations & 10 outer IPF rounds \\
    &          & Optimisation steps & 10{,}000 gradient updates \\
    &          & Langevin steps & 12 per bridge trajectory \\
    &          & Batch size & 128 \\
    &          & Learning rate & $1\times10^{-4}$ \\
    & Regularization & $\gamma$ schedule & $\gamma_{\min}=\gamma_{\max}=10^{-3}$ \\
    &               & Mean matching & Enabled \\
    &               & EMA & Disabled \\
\midrule

\multicolumn{4}{l}{\textbf{MFM (Flow Matching + Riemannian Correction)}} \\
MFM & Velocity net & Architecture & MLP (hidden dim 64, depth 3) \\
    &              & Time embedding & Sinusoidal (dim 16) \\
    & Training & Epochs & 500 \\
    &          & Batch size & 128 (train), 2048 (val) \\
    &          & Optimizer & AdamW ($\text{lr}=10^{-3}$, wd $10^{-4}$) \\
    &          & Grad clipping & 1.0 \\
    & GeoPath & Architecture & MLP (hidden dim 128, depth 3) \\
    &         & Activation & SELU \\
    &         & Optimizer & Adam ($\text{lr}=10^{-4}$) \\
    & Metric & LAND & $\gamma=0.2$, $\rho=10^{-3}$, $\alpha=1.0$ \\
    &        & Max samples & 4096 \\
\midrule

\multicolumn{4}{l}{\textbf{SF2M}} \\
SF2M & Model & Velocity + score MLP & Hidden dim 64, depth 3 \\
     &       & Time embedding & Sinusoidal (dim 16) \\
     & Distribution & $\sigma_{\text{bridge}}$ & 1.0 \\
     &              & $\sigma_{\text{sample}}$ & 1.0 \\
     & Training & Epochs & 500 \\
     &          & Batch size & 128 (train), 2048 (val) \\
     &          & Optimizer & AdamW ($\text{lr}=10^{-3}$, wd $10^{-4}$) \\
     &          & Grad clipping & 1.0 \\
\midrule

\multicolumn{4}{l}{\textbf{UOT-FM}} \\
UOT-FM & Model & Velocity MLP & Hidden dim 64, depth 3 \\
       &       & Time embedding & Sinusoidal (dim 16) \\
       & Training & Epochs & 500 \\
       &          & Batch size & 128 (train), 2048 (val) \\
       &          & Optimizer & AdamW ($\text{lr}=10^{-3}$, wd $10^{-4}$) \\
       & OT & Convergence tol. & $10^{-9}$ (rel./abs.) \\
       &    & Marginal reg. & 1.0 \\
\midrule

\multicolumn{4}{l}{\textbf{Flow Matching}} \\
Flow Matching & Model & Velocity MLP & Hidden dim 64, depth 3 \\
              &       & Time embedding & Sinusoidal (dim 16) \\
              & Training & Epochs & 500 \\
              &          & Batch size & 128 (train), 2048 (val) \\
              &          & Optimizer & AdamW ($\text{lr}=10^{-3}$, wd $10^{-4}$) \\
\bottomrule
\end{tabular}
\vspace{-10pt}
\end{table*}

\begin{table*}[!htb]
\centering
\small
\caption{\textbf{Non–flow-based baseline hyperparameters.}
Baselines not relying on flow matching or diffusion bridges.}
\label{tab:nonflow_hyperparameters}

\begin{tabular}{llll}
\toprule
\textbf{Method} & \textbf{Category} & \textbf{Hyperparameter} & \textbf{Setting / Notes} \\
\midrule

\multicolumn{4}{l}{\textbf{MioFlow}} \\
 & Model & Network layers & [64, 64, 64] \\
        & Training & Learning rate & $1\times10^{-4}$ \\
        &          & Total epochs & 20 \\
        &          & Local epochs & 5 \\
        &          & Post-local epochs & 5 \\
        &          & Batch size & 256 \\
        &          & Batches / epoch & 100 \\
\midrule

\multicolumn{4}{l}{\textbf{Moscot}} \\
 & OT & Entropic reg.\ ($\epsilon$) & 0.001 \\
       &    & Source reg.\ ($\tau_a$) & 1.0 \\
       &    & Target reg.\ ($\tau_b$) & 1.0 \\
\midrule

\multicolumn{4}{l}{\textbf{VGFM}} \\
 & Model & Hidden dimension & 64 \\
     &       & Hidden layers & 3 \\
     &       & Activation & Tanh \\
     & Training & Pre-train epochs & 300 \\
     &          & Training epochs & 50 \\
     &          & Batch size & 256 \\
     &          & Learning rate (init) & $1\times10^{-3}$ \\
     &          & Learning rate (second) & $1\times10^{-4}$ \\
     & Solver & Step size & 0.01 \\
\midrule

\multicolumn{4}{l}{\textbf{TrajectoryNet}} \\
 & Training & Iterations & 1000 \\
               &          & Batch size & 1000 \\
               &          & Learning rate & $1\times10^{-3}$ \\
               &          & Weight decay & $1\times10^{-5}$ \\
               & Model & Architecture & 1 block, concatsquash layers (64--64--64) \\
               & Regularization & $s_{\mathrm{L2int}}$ & $1\times10^{-3}$ \\
               &               & $k_{\text{top}}$ & $1\times10^{-2}$ \\
               &               & Training noise & 0.1 \\
               & ODE solver & Solver & dopri5 \\
               &           & Time scale & 0.4 (5 integration points) \\
               &           & Tolerances & $\text{rtol}=\text{atol}=10^{-5}$ \\
\bottomrule
\end{tabular}
\end{table*}

\textbf{Conditional Flow Matching hyperparameters} 
\label{app:fm_hyperparameters}
We detail the hyperparameters used for downstream  in \Cref{tab:flow_based_hyperparameters}, which we kept fixed across the datasets. Given a $0.9 /0.1$ train/val split, we keep the checkpoint that minimizes the validation loss over the run.

\textbf{TrajectoryNet.} We use the implementation from the authors \citep{Tong2020TrajectoryNet} available at \url{https://github.com/KrishnaswamyLab/TrajectoryNet}. We summarize the hyperparameters used in \Cref{tab:nonflow_hyperparameters}.

\textbf{Diffusion Schrodinger Bridges.} We use the implementation from the authors \citep{de2021diffusion} available at \url{https://github.com/JTT94/diffusion_schrodinger_bridge}. We summarize the hyperparameters used in \Cref{tab:flow_based_hyperparameters}.

 \textbf{MIOFlow}
 We use the implementation from the authors \citep{huguet2022manifold}
available at \url{https://github.com/KrishnaswamyLab/MIOFlow}. We summarize the hyperparameters used in \Cref{tab:nonflow_hyperparameters}.

 \textbf{Moscot}  We use the implementation from the authors \citep{klein2025mapping} available at \url{https://github.com/theislab/moscot}. We summarize the hyperparameters used in \Cref{tab:nonflow_hyperparameters}.

 \textbf{VGFM} We use the implementation from the authors \citep{wang2025joint} available at
\url{https://github.com/DongyiWang-66/VGFM}. We summarize the hyperparameters used in \Cref{tab:nonflow_hyperparameters}.

 \textbf{MFM} We use the implementation from the authors \citep{kapusniak2024metric} available at \url{https://github.com/kkapusniak/metric-flow-matching}. We summarize the hyperparameters used in \Cref{tab:flow_based_hyperparameters}.
 
\textbf{SF2M} We developed a custom implementation of the SF2M framework \citep{tong2023simulation} to enable the integration of the \name\ structural prior into the simulation-free training objective. We summarize the hyperparameters used in \Cref{tab:flow_based_hyperparameters}.

\textbf{UOT-FM} To incorporate the interaction-aware coupling in an unbalanced setting, we utilized a custom implementation of UOT-FM based on the original formulation \citep{eyring2023unbalancedness}. We summarize the hyperparameters used in \Cref{tab:flow_based_hyperparameters}.

\subsection{Lung cancer data experiment}

For the experiment described in \Cref{sec:insilico}, we annotated the lung cancer dataset using canonical lineage and state markers (Table~\ref{tab:markers}); an overview of the full dataset is shown in Fig.~\ref{fig:scrna_annotation}. Because whole-lung profiling dilutes treatment effects (the tumour comprises only a small fraction of total cells), we constructed a focused \emph{tumour-niche} subset to increase sensitivity and interpretability. Concretely, we retained all tumour cells and subsampled an equal number of T cells, B cells, fibroblasts, and endothelial cells from the same specimens to form a minimal viable tumour microenvironment. We then reused the analysis pipeline described earlier with matched timepoints at $0$\,h, $8$\,h, and $24$\,h. The only modification was to the ligand–receptor (LR) library: for pathway-specific probes, we toggled custom LR pairs to mimic the presence or absence of a given ligand (e.g., EGFR) and quantified the resulting changes in inferred communication and downstream dynamics. Marker definitions are provided in Table~\ref{tab:markers}, and a dot-plot confirming marker specificity and minimal cross-lineage leakage is shown in Fig.~\ref{fig:marker-dotplot}.

\begin{table}[!htb]
\centering
\small
\caption{Curated panel of positive marker genes used for per-cell scoring and assignment in the lung cancer dataset.}
\begin{tabular}{@{}ll@{}}
\toprule
\textbf{Cell Type} & \textbf{Positive Markers} \\
\midrule
Differentiated AT1                 & RTKN2, AGER \\
AT1                                & CLDN18 \\
Tumour      (AT2)                        & SFTPD, LAMP3, SCGB3A2 \\
Mucous Epithelial                  & DNAH12, AZGP1 \\
Endothelial                        & SEMA3G \\
Low IEG Endothelial                & CDH5 \\
Alveolar Capillary Endothelial     & EDNRB, RPRML \\
Lymphatic Vein Endothelial         & LYVE1, SELE, VWF \\
Fibroblasts                        & COL1A2, PDGFRA \\
Smooth Muscle Fibroblasts          & ACTA2, LGR6 \\
Fibroblast Subset                  & DCN \\
Pericytes                          & CSPG4 \\
Megakaryocytes                     & PPBP, PF4 \\
Erythrocytes                       & ALAS2 \\
Lymphocytes                        & CCL21A \\
Cycling                            & TOP2A \\
Neutrophils                        & S100A9, RETNLG \\
Basophils \& Mast cells            & MCPT8, MS4A2 \\
Macrophages                        & MARCO \\
Monocytes                          & LY6I \\
DC 1 and 2                         & CLEC9A, XCR1, C1QA, SIGLECH \\
DC 3                               & FSCN1, IL12B \\
NK cell like                       & NCR1, EOMES, TBX21 \\
ILC                                & RORA, RORC, IL2RA \\
Adaptive T cells                   & FOXP3, CD4, CD8A \\
B cells                            & CD79A \\
\bottomrule
\end{tabular}

\label{tab:markers}
\end{table}

\begin{figure}
    \centering
    \includegraphics[width=0.9\linewidth]{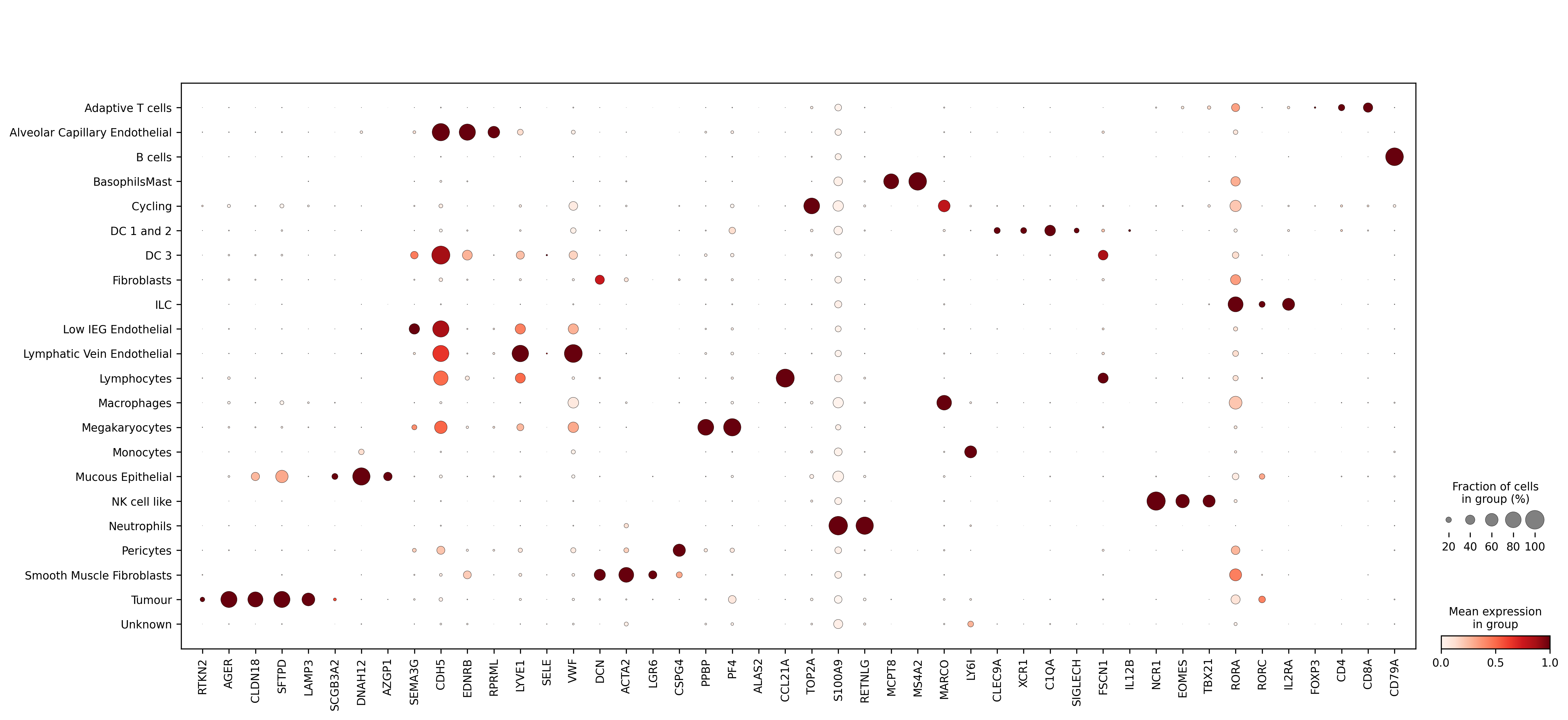}
    \caption{Dot‐plot validation of curated marker genes across annotated cell types. Each column corresponds to a marker gene and each row to a cell‐type label. Dot size encodes the fraction of cells expressing that gene, while color intensity represents its standardized expression level.}
    \label{fig:marker-dotplot}
\end{figure}

\subsubsection{Tumour progression quantification using Hallmark gene sets}
\label{app:insilico}
There is no single, universally accepted definition of tumour progression. Clinical assessments typically use lesion size, extent of metastasis, and histopathology. While we observe distinct cellular changes and invasion over our 24\,h window, these measures are not applicable at single-cell resolution. Instead, we construct an approximate \emph{tumour differentiation} score based on the Hallmarks of Cancer \citep{Hanahan2011}, using the MSigDB \emph{Hallmark} gene sets \citep{Liberzon2015}.

For each hallmark, we compute a per-cell score as the \emph{median} expression across its member genes (chosen over the mean for robustness to sparsity and outliers). The overall progression score is then the mean across the 20 retained hallmarks. The full hallmark definitions are available in MSigDB \citep{Liberzon2015}; the selected hallmarks, their gene counts, and five example genes each are listed in Table~\ref{tab:hallmarks}. Hallmarks not applicable to our tumour context (e.g., hormonal signalling for breast/prostate, long-term metabolic programs) were excluded.

As a baseline check, we verify that tumour cells exhibit coherent changes along the selected hallmarks over $0$\,h$\rightarrow 24$\,h; see Fig.~\ref{fig:hallmarkchanges}.

\begin{table*}[!htb]
\centering
\caption{\textbf{Hallmark gene sets used for trajectory summarization.} We list each set’s size and five randomly sampled member genes.}
\small
\setlength{\tabcolsep}{6pt}
\renewcommand{\arraystretch}{1}
\begin{tabular}{l c p{0.5\textwidth}}
\toprule
\textbf{Gene set} & \textbf{\# Genes} & \textbf{Random gene examples (5)} \\
\midrule
Angiogenesis & 36  & TIMP1, POSTN, VTN, THBD, NRP1 \\
Apoptosis & 161 & ERBB2, IL1B, DPYD, NEDD9, MADD \\
DNA Repair & 150 & GTF2B, RAE1, ADCY6, POLA2, TAF1C \\
E2F Targets & 200 & MCM7, PCNA, MCM4, RFC2, GINS1 \\
Epithelial--Mesenchymal Transition & 200 & SPP1, GPX7, LOX, THBS1, SLC6A8 \\
G2M Checkpoint & 200 & RBM14, AMD1, CDC27, UCK2, NDC80 \\
Glycolysis & 200 & SPAG4, PKP2, SLC25A13, PRPS1, ZNF292 \\
Hypoxia & 200 & S100A4, CSRP2, DTNA, PIM1, TPST2 \\
KRAS Signaling v1 & 200 & FSHB, YPEL1, BARD1, SLC6A3, ATP6V1B1 \\
KRAS Signaling v2 & 200 & CIDEA, KIF5C, LAT2, PDCD1LG2, PIGR \\
MYC Targets v1 & 200 & RAD23B, USP1, NAP1L1, NDUFAB1, SNRPA1 \\
MYC Targets v2 & 58  & PRMT3, AIMP2, SRM, EXOSC5, SUPV3L1 \\
Myogenesis & 200 & EIF4A2, PDE4DIP, ANKRD2, EPHB3, ATP6AP1 \\
Notch Signaling & 32  & SKP1, MAML2, HES1, FBXW11, DTX1 \\
Oxidative Phosphorylation & 200 & NDUFS8, VDAC1, UQCRQ, NDUFB3, NDUFB2 \\
p53 Pathway & 200 & TNNI1, SLC35D1, BTG1, FDXR, JAG2 \\
Peroxisome & 104 & IDH2, FIS1, EPHX2, SLC23A2, SLC25A4 \\
Reactive Oxygen Species Pathway & 49 & PRNP, OXSR1, SOD1, PDLIM1, TXN \\
TNF$\alpha$ Signaling via NF$\kappa$B & 200 & DUSP2, CEBPB, OLR1, CCL20, IL1A \\
Xenobiotic Metabolism & 200 & SSR3, HACL1, ARPP19, AHCY, GSR \\
\bottomrule
\end{tabular}

\label{tab:hallmarks}
\end{table*}

\begin{figure}[!htb]
    \centering
    \includegraphics[width=0.8\linewidth]{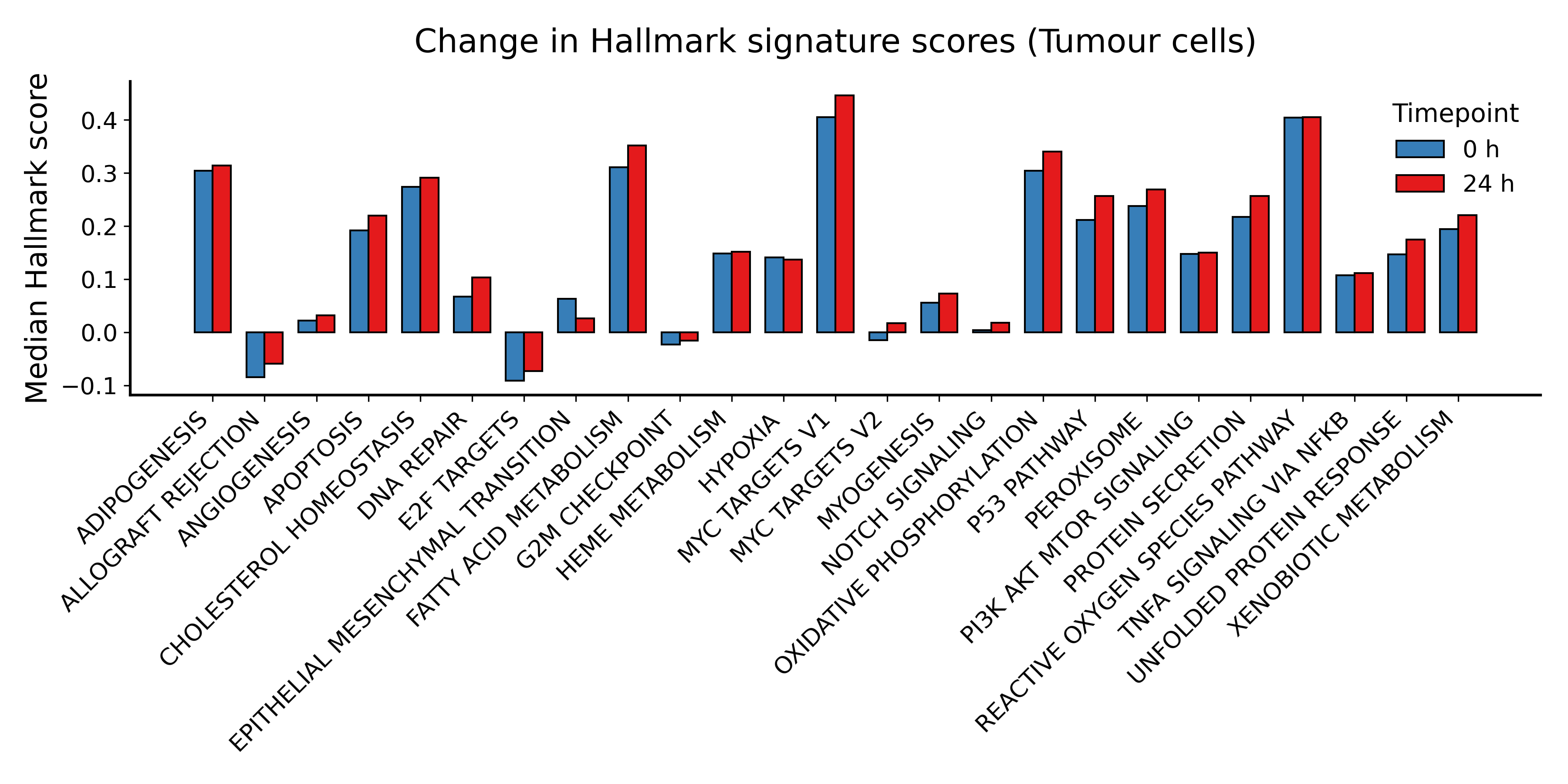}
 \caption{\textbf{Hallmark changes.} Changes in our dataset over 24\,h following combined KRAS and MYC signalling across the 20 selected Hallmark gene sets.}

    \label{fig:hallmarkchanges}
\end{figure}

\subsection{Computational and memory costs}

\paragraph{Complexity of the full OT solver.}
We solve \Cref{eq:fgw} with a custom conditional-gradient (Frank--Wolfe) solver detailed in \Cref{app:ot_solver}. Let $n_0$ and $n_1$ be the numbers of cells in the two snapshots and $K$ the number of ligand--receptor (LR) pairs (interaction channels).

Each Frank--Wolfe iteration consists of two main steps:

\begin{enumerate}
    \item \textbf{Gradient computation.}
     This yields a per-iteration cost
    \[
        \mathcal{O}\bigl(K\,n_0 n_1 (n_0 + n_1)\bigr)
    \] because it requires performing the matrix multiplication of $C^{(r)}_1 \Gamma$ and $\bigl(C^{(r)}_1 \Gamma\bigr)\bigl(C^{(r)}_2\bigr)^\top$ for each channel $r$.

    \item \textbf{Linear OT subproblem.}
    Given the linearized objective, we solve a linear OT problem over $\Pi(a,b)$ using POT's \citep{flamary2024pot} existing OT routine. Its complexity is
    \[
        \mathcal{O}\bigl(\mathrm{cost}_{\mathrm{OT}}(n_0,n_1)\bigr),
    \]
     (e.g. cubic in $n$ for a network-simplex LP, or $\mathcal{O}(T_{\mathrm{Sinkhorn}}\,n_0 n_1)$ for entropic OT).
\end{enumerate}

If $T_{\mathrm{CG}}$ denotes the number of Frank--Wolfe iterations required to reach the desired tolerance, the total complexity of the \name\ OT stage is
\[
    \mathcal{O}\Bigl(
        T_{\mathrm{CG}}\bigl[
            K\,n_0 n_1 (n_0 + n_1)
            +
            \mathrm{cost}_{\mathrm{OT}}(n_0,n_1)
        \bigr]
    \Bigr).
\]

\textbf{Wall-clock runtimes.} We report the wall-clock runtimes (seconds) in \Cref{tab:iadot-runtime}, decomposing it into the OT part (finding the coupling $\Gamma^*$) and the Flow matching part (fitting the velocity model).

\begin{table}[t]
\centering
\caption{Wall-clock runtime (in seconds) of \name, decomposing into the OT part and the Flow matching part.}
\label{tab:iadot-runtime}
\begin{tabular}{lcccc}
\toprule
 & Lung Tumour & V1 Light & Dendritic Stimulus  & Mouse Cell Atlas \\
\midrule
OT [s] & 211.3 & 107.0 & 4.3 & 2656 \\
FM [s]                 & 189.1 & 186.0 & 24.2 & 4739 \\
\bottomrule
\end{tabular}
\end{table}

\paragraph{Memory footprint of the OT stage.}
The dominant memory costs come from: (i) the coupling $\Gamma \in \mathbb{R}^{n_0 \times n_1}$, (ii) the feature cost matrix $C \in \mathbb{R}^{n_0 \times n_1}$, (iii) the multi-channel structure tensors $C_1 \in \mathbb{R}^{n_0 \times n_0 \times K}$ and $C_2 \in \mathbb{R}^{n_1 \times n_1 \times K}$ (corresponding to the CCI tensors $G^{(0)}$ and $G^{(1)}$), and (iv) a small number of auxiliary matrices of size $n_0 \times n_1$ (e.g. $\mathrm{constC}$, $B(\Gamma)$, and the gradient). Crucially, we never construct the full tensor of pairwise structure discrepancies in \Cref{eq:fgw}. Instead, the structure term is implemented through the matrix products $C^{(r)}_1 \Gamma \bigl(C^{(r)}_2\bigr)^\top$. As a result, the memory complexity of the OT solver scales as
\[
\mathcal{O}\bigl(K (n_0^2 + n_1^2) + n_0 n_1\bigr),
\]
Hence it is quadratic in the number of cells per snapshot and linear in the number of LR pairs $K$. In comparison, a feature-only OT solver ($\alpha = 0$) needs $C$ and $\Gamma$, with memory $\mathcal{O}(n_0 n_1)$.

\textbf{Using \name\ with large-scale datasets.}
While the computational and memory cost remained reasonable across the datasets we used, for very large datasets it can be mitigated using standard scalable techniques that are orthogonal to our formulation:
\begin{itemize}
    \item adding entropic regularization  \citep{peyre2019computational} and using Sinkhorn-type solvers, which make the problem easier to optimize and reduce memory at the price of a small, controllable bias
    \item employing mini-batch optimization \citep{fatras2021minibatch}, where the CCI prior is estimated from couplings computed on minibatches instead of the whole dataset
    \item constructing metacells, with details provided in \Cref{app:metacell}. Using metacells reduces the effective sample size. We evaluate this variant in \Cref{sec:insilico}.
\end{itemize}
These strategies preserve the form of the \name\ prior while substantially improving scalability for large-scale datasets.

\section{Downstream Dynamics with \name}
\label{app:downstream}

\textbf{Overview.}
This appendix details how the interaction-aware coupling produced by \name\
can be combined with different continuous-time trajectory models.
Across all variants, the interaction prior is encoded exclusively in the
cross-snapshot coupling $\Gamma^\star$ (or its induced joint distribution $\Pi$),
while the downstream dynamics model determines how trajectories are parameterized
between coupled endpoints.
We consider four settings: deterministic flows via Conditional Flow Matching (CFM),
geometry-aware flows via Metric Flow Matching (MFM), stochastic Schr\"odinger bridges
via SF2M, and population-changing dynamics via unbalanced OT.
Importantly, no modification of the interaction-aware FGW objective is required
when switching between these mechanisms.

\subsection{Conditional Flow Matching (CFM)}
\label{app:details_cfm}

We begin with Conditional Flow Matching (CFM), which serves as the simplest
and default downstream instantiation of \name.
CFM learns a deterministic, time-dependent velocity field whose induced flow
matches a prescribed probability path between two endpoint distributions.

\paragraph{Coupling-induced probability path.}
Let $\rho_0$ and $\rho_1$ denote the empirical distributions associated with
$\mathcal D_0$ and $\mathcal D_1$.
Let $\Gamma^\star \in \mathbb{R}^{n_0 \times n_1}_+$ be the optimal coupling obtained
from the interaction-aware FGW problem \Cref{eq:fgw}, and define
$M=\sum_{i,j}\Gamma^\star_{ij}$.
We introduce the joint distribution
\[
\Pi
\;=\;
\sum_{i=1}^{n_0}\sum_{j=1}^{n_1}
\frac{\Gamma^\star_{ij}}{M}\,\delta_{(x_i,y_j)},
\]
whose marginals are $\rho_0$ and $\rho_1$.
For $t\in[0,1]$, we define the affine interpolation
\[
Z_t = (1-t)X + tY,
\qquad (X,Y)\sim\Pi,
\]
and let $\rho_t=\mathcal L(Z_t)$.
By construction, $\{\rho_t\}_{t\in[0,1]}$ forms a probability path connecting
$\rho_0$ and $\rho_1$.

\paragraph{Learning the velocity field.}
We learn a time-dependent velocity field
$v_\theta:\mathbb R^d\times[0,1]\to\mathbb R^d$
that generates the path $\{\rho_t\}$.
For $(X,Y)\sim\Pi$ and $Z_t=(1-t)X+tY$, the interpolation implies a constant
conditional drift
\[
u_t(Z_t\mid X,Y)=Y-X.
\]
We train $v_\theta$ by minimizing the Conditional Flow Matching objective
\begin{align}
\label{eq:cfm-loss-app}
\mathcal L_{\mathrm{CFM}}(\theta)
&=
\mathbb E_{\substack{(X,Y)\sim\Pi \\ t\sim\mathrm{Unif}[0,1]}}
\Big[
\big\| v_\theta(Z_t,t) - u_t(Z_t\mid X,Y) \big\|_2^2
\Big]
\\[4pt]
&=
\mathbb E_{\substack{(X,Y)\sim\Pi \\ t\sim\mathrm{Unif}[0,1]}}
\Big[
\big\| v_\theta(Z_t,t) - (Y-X) \big\|_2^2
\Big].
\end{align}
As shown in \citet{lipman2024flow}, the minimizer of this objective generates the
target probability path.
After training, trajectories are obtained by integrating the ODE
$\dot z(t)=v_\theta(z(t),t)$ with initial condition $z(0)=x$.

\subsection{Unbalanced interaction-aware dynamics}
\label{app:details_unbalanced_iadot}

We next describe how \name\ can be extended to settings where the total population
mass changes between snapshots, e.g.\ due to cell proliferation or apoptosis.
Rather than enforcing exact marginal constraints, we adopt an unbalanced OT
formulation that relaxes mass conservation.

\paragraph{Step 1: inferring non-uniform marginals.}
We first ignore interaction structure and solve an unbalanced feature-only OT problem
\begin{equation}
\label{eq:unbalanced-step1-app}
\Gamma^{\mathrm{u}}
\;\in\;
\argmin_{\Gamma \in \mathbb{R}_{\ge 0}^{n_0 \times n_1}}
\Big\{
\langle \Gamma, C \rangle
+ \lambda_{0}\,\mathrm{KL}(\Gamma \mathbf{1} \,\|\, a)
+ \lambda_{1}\,\mathrm{KL}(\Gamma^{\top} \mathbf{1} \,\|\, b)
\Big\},
\end{equation}
where $\mathbf 1$ denotes the all-ones vector.
From the optimal solution we extract the reweighted marginals
\[
\tilde a = \Gamma^{\mathrm{u}}\mathbf 1,
\qquad
\tilde b = (\Gamma^{\mathrm{u}})^\top\mathbf 1,
\]
which are renormalized to sum to one.
This step is independent of the interaction weight $\alpha$,
allowing the same marginals to be reused across different FGW trade-offs.

\paragraph{Step 2: interaction-aware FGW with frozen marginals.}
In a second step, we fix $(\tilde a,\tilde b)$ and solve the interaction-aware
FGW problem
\begin{equation}
\label{eq:fgw-unbalanced-step2-app}
\min_{\Gamma \in \Pi(\tilde a,\tilde b)}
(1-\alpha)\,\mathcal F(\Gamma)
+
\alpha\,\mathcal S(\Gamma),
\end{equation}
where $\mathcal F$ and $\mathcal S$ are defined as in \Cref{eq:fgw}.
The resulting coupling preserves multi-LR-pair interaction structure
while allowing unequal total mass between snapshots.
This coupling can be passed unchanged to any downstream dynamics model,
including CFM, MFM, or SF2M.

\subsection{Combining \name\ with Metric Flow Matching (MFM)}
\label{app:details_mfm_iadot}

We now describe how the interaction-aware coupling produced by \name\
can be combined with Metric Flow Matching (MFM) \citep{kapusniak2024metric}.
MFM generalizes CFM by encouraging trajectories to follow geodesics of a
data-dependent Riemannian metric $g$.

Given a coupling $q$ between endpoint distributions, MFM first learns
interpolants
\[
x_{t,\eta}
=
(1-t)x_0 + t x_1 + t(1-t)\phi_{t,\eta}(x_0,x_1),
\]
by minimizing the geodesic energy
\[
L_g(\eta)
=
\mathbb E_{(x_0,x_1)\sim q,\,t}
\big[
\dot x_{t,\eta}^\top
G(x_{t,\eta};\mathcal D)
\dot x_{t,\eta}
\big],
\]
where $G(\cdot;\mathcal D)$ denotes the coordinate representation of $g$.
In our experiments we use the LAND metric $g_{\mathrm{LAND}}$
\citep{arvanitidis2016locally}.

After fitting $\eta^\star$, we train a velocity field using the interaction-aware
coupling $\Pi$ via
\begin{align}
\label{eq:mfm-loss-app}
\mathcal L_{\mathrm{MFM}}(\theta,\eta)
&=
\mathbb E_{\substack{(X,Y)\sim\Pi \\ t\sim\mathrm{Unif}[0,1]}}
\Big[
\big\|
v_\theta(Z_{t,\eta},t)
-
\dot x_{t,\eta}(X,Y)
\big\|_{g(Z_{t,\eta})}^2
\Big],
\end{align}
where $\|\cdot\|_{g(Z_{t,\eta})}$ denotes the norm induced by the metric at
$Z_{t,\eta}$.

\subsection{Combining \name\ with SF2M}
\label{app:details_sf2m_iadot}

Finally, we describe how \name\ can be combined with SF2M
\citep{tong2023simulation} to obtain interaction-aware Schr\"odinger-bridge
dynamics.
SF2M learns both a drift $v_\theta$ and a score $s_\theta$ by regressing to the
conditional drift and score of a mixture of Brownian bridges between coupled
endpoints.

For a single bridge with diffusion $\sigma$, the conditional marginal at time
$t\in(0,1)$ is
\[
p_t(x\mid x_0,x_1)
=
\mathcal N\!\left(
x;\,(1-t)x_0+tx_1,\,
\sigma^2 t(1-t) I_d
\right),
\]
with closed-form drift and score.
To combine SF2M with \name, we simply replace the entropic OT endpoint coupling
with the interaction-aware coupling $\Pi$.
Training samples are generated by
\[
t\sim\mathrm{Unif}[0,1],
\qquad
(X,Y)\sim\Pi,
\qquad
Z_t\sim p_t(\cdot\mid X,Y).
\]
The resulting SF2M objective is
\begin{align}
\mathcal L_{\mathrm{SF}^2\mathrm{M}}(\theta)
&=
\mathbb E
\Big[
\| v_\theta(t,Z_t) - u_t^\circ(Z_t\mid X,Y) \|_2^2
\\
&\qquad
+
\lambda(t)^2
\| s_\theta(t,Z_t) - \nabla_z \log p_t(Z_t\mid X,Y) \|_2^2
\Big],
\end{align}
where $\lambda(t)=2\sqrt{t(1-t)}/\sigma$.

\section{Additional Results} 
\label{app:additional_results}
\subsection{Synthetic dataset}
On synthetic data, where the ground-truth is known, we also evaluate direct matching metrics (Hits@1 and Transport Rank Error (TRE)). The results are reported in \Cref{fig:tse}.
\begin{figure}[!htb]
    \centering
    \includegraphics[width=0.4\linewidth]{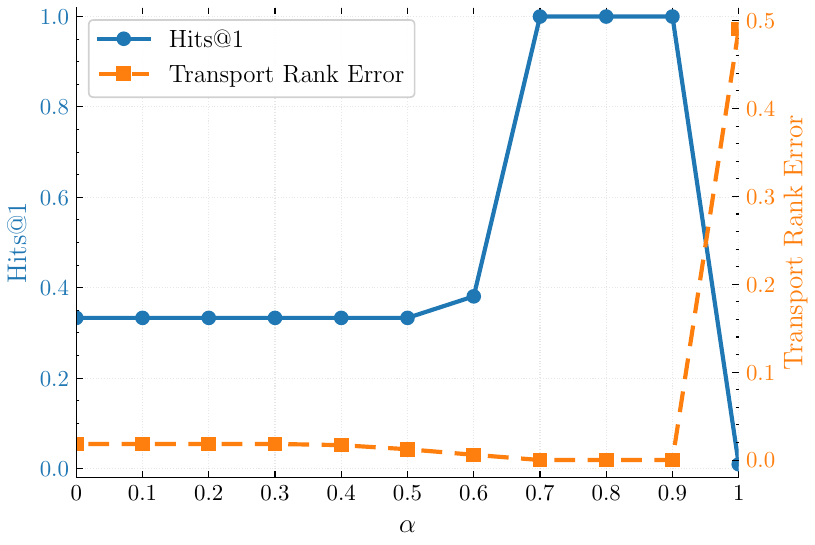}
    \caption{\textbf{Structure-aware coupling recovers the ground-truth
transport map on the synthetic dataset.}}
    \label{fig:tse}
\end{figure}

\subsection{Stimulus datasets} \label{app:stimuli_dataset}

We reproduce the experimental setup described in \Cref{sec:synthetic} and \Cref{sec:exp_flow_matching} with the macrophage stimulus-response dataset. We report the results in \Cref{fig:interpolation_stimulus} and \Cref{tab:flow_matching_results_stim_only}, which are consistent with the findings on the other datasets.

\begin{figure}[!htb]
  \centering

  \begin{subfigure}[t]{0.48\textwidth}
    \centering
    \includegraphics[width=\linewidth]{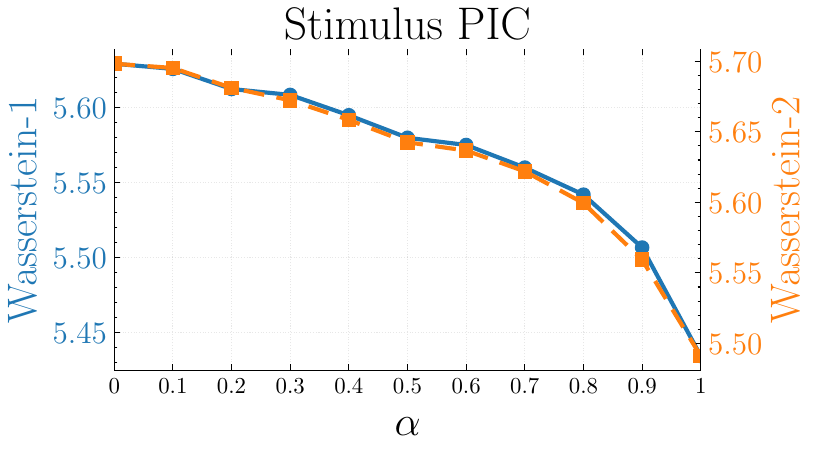}
    \label{fig:ex-a}
  \end{subfigure}\hfill
  \begin{subfigure}[t]{0.48\textwidth}
    \centering
    \includegraphics[width=\linewidth]{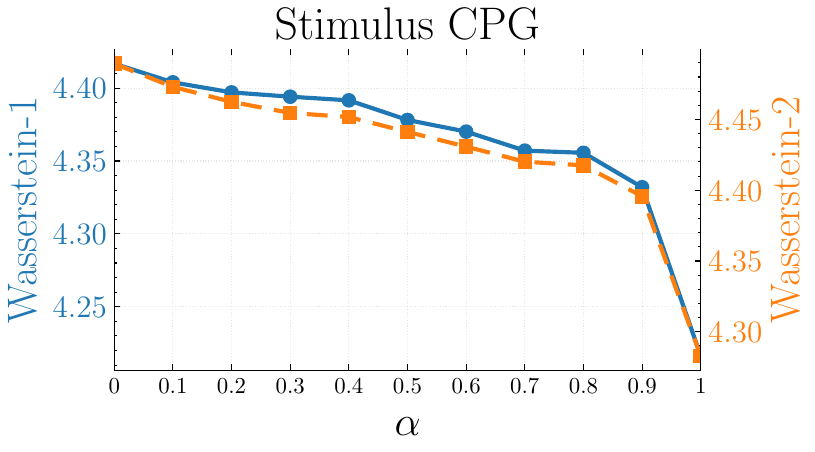}
    \label{fig:ex-b}
  \end{subfigure}

  \vspace{0.5em}

  \begin{subfigure}[t]{0.48\textwidth}
    \centering
    \includegraphics[width=\linewidth]{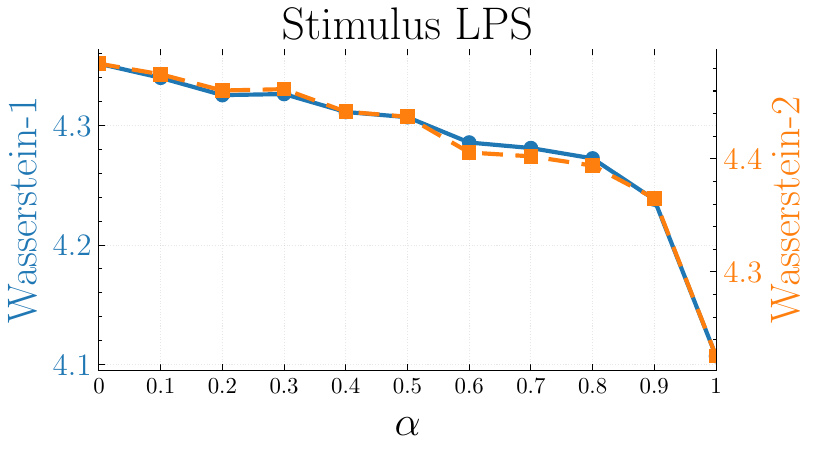}
    \label{fig:ex-c}
  \end{subfigure}\hfill
  \begin{subfigure}[t]{0.48\textwidth}
    \centering
    \includegraphics[width=\linewidth]{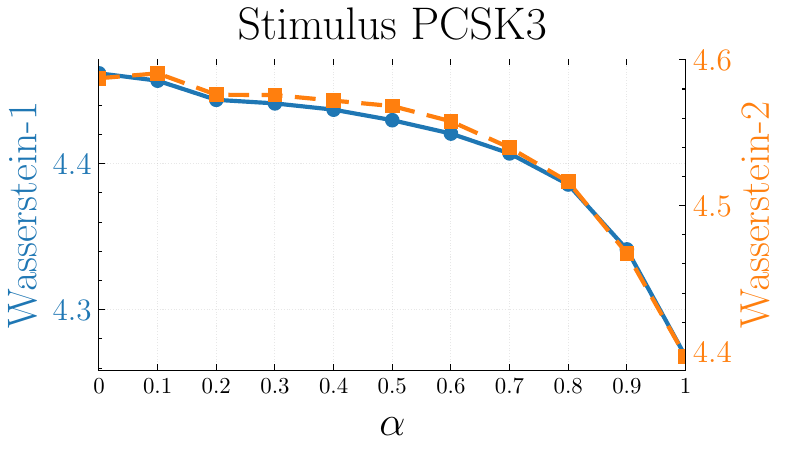}
    \label{fig:ex-d}
  \end{subfigure}

  \caption{\textbf{Interpolation} results for the macrophage stimulus datasets.}
  \label{fig:interpolation_stimulus}
\end{figure}

\begin{table}[!htb]
\centering
\caption{\textbf{Interpolation error for continuous time dynamics (lower is better).} \name\ with varying structure weight \(\alpha\) vs.\ baselines for the macrophage stimulus datasets. We report mean\(\pm\)std over $5$ runs. MIOFlow failed to converge on stimulus CPG (numerical instability).}
\label{tab:flow_matching_results_stim_only}
\scriptsize
\setlength{\tabcolsep}{2pt}
\renewcommand{\arraystretch}{0.9}
\resizebox{\columnwidth}{!}{%
\begin{tabular}{llrrrrrrrr}
\toprule
\multicolumn{2}{c}{} & \multicolumn{2}{c}{\textbf{Stimulus PIC}} & \multicolumn{2}{c}{\textbf{Stimulus CPG}} & \multicolumn{2}{c}{\textbf{Stimulus LPS}} & \multicolumn{2}{c}{\textbf{Stimulus PCSK3}} \\
\cmidrule(lr){3-4}\cmidrule(lr){5-6}\cmidrule(lr){7-8}\cmidrule(lr){9-10}
\multicolumn{1}{c}{\textbf{Method}} & \multicolumn{1}{c}{\(\boldsymbol{\alpha}\)} & \multicolumn{1}{c}{$W_1$} & \multicolumn{1}{c}{$W_2$} & \multicolumn{1}{c}{$W_1$} & \multicolumn{1}{c}{$W_2$} & \multicolumn{1}{c}{$W_1$} & \multicolumn{1}{c}{$W_2$} & \multicolumn{1}{c}{$W_1$} & \multicolumn{1}{c}{$W_2$} \\
\midrule
\texttt{TrajectoryNet} & \multicolumn{1}{c}{\textemdash} & \meanstd{5.628}{0.055} & \meanstd{5.930}{0.049} & \meanstd{5.361}{0.085} & \meanstd{5.826}{0.080} & \meanstd{5.087}{0.109} & \meanstd{5.589}{0.078} & \meanstd{5.033}{0.051} & \meanstd{5.434}{0.051} \\
\texttt{DSB} & \multicolumn{1}{c}{\textemdash} & \meanstd{5.796}{0.574} & \meanstd{5.833}{0.571} & \meanstd{4.500}{0.128} & \meanstd{4.594}{0.114} & \meanstd{4.685}{0.533} & \meanstd{4.815}{0.528} & \meanstd{4.648}{0.328} & \meanstd{4.749}{0.324} \\
\texttt{OT-CFM} & \multicolumn{1}{c}{\textemdash} & \meanstd{5.544}{0.038} & \meanstd{5.614}{0.036} & \meanstd{4.430}{0.019} & \meanstd{4.512}{0.020} & \meanstd{4.434}{0.052} & \meanstd{4.582}{0.058} & \meanstd{4.423}{0.020} & \meanstd{4.551}{0.018} \\
 \texttt{OT-MFM} & \multicolumn{1}{c}{\textemdash} & \meanstd{5.501}{0.020} & \meanstd{5.566}{0.022} & \meanstd{4.464}{0.013} & \meanstd{4.556}{0.015} & \meanstd{4.440}{0.033} & \meanstd{4.592}{0.038} & \meanstd{4.384}{0.008} & \meanstd{4.505}{0.008} \\
 \texttt{UOT-FM} & \multicolumn{1}{c}{\textemdash} & \meanstd{5.414}{0.027} & \meanstd{5.492}{0.024} & \meanstd{4.572}{0.033} & \meanstd{4.733}{0.045} & \meanstd{4.570}{0.122} & \meanstd{4.785}{0.143} & \meanstd{4.332}{0.007} & \textbf{\boldmath \meanstd{4.483}{0.008}} \\
 \texttt{SF2M} & \multicolumn{1}{c}{\textemdash} & \meanstd{6.132}{0.059} & \meanstd{6.212}{0.058} & \meanstd{4.959}{0.044} & \meanstd{5.053}{0.043} & \meanstd{4.930}{0.051} & \meanstd{5.057}{0.054} & \meanstd{5.048}{0.031} & \meanstd{5.155}{0.032} \\
 \texttt{VGFM} & \multicolumn{1}{c}{\textemdash} & \meanstd{9.796}{0.658} & \meanstd{9.863}{0.683} & \meanstd{8.242}{0.192} & \meanstd{8.362}{0.205} & \meanstd{8.026}{0.135} & \meanstd{8.158}{0.1469} & \meanstd{8.205}{0.1101} & \meanstd{8.297}{0.110} \\
 \texttt{MIOFlow} & \multicolumn{1}{c}{\textemdash} & \meanstd{9.365}{0.382} & \meanstd{9.440}{0.404} & \meanstd{16899}{22805} & \meanstd{22007}{31264} & \meanstd{7.807}{0.038} & \meanstd{7.927}{0.034} & \meanstd{8.179}{0.150} & \meanstd{8.281}{0.166} \\
\texttt{Moscot} & \multicolumn{1}{c}{\textemdash} & \meanstd{7.213}{0.000} & \meanstd{7.244}{0.0000} & \meanstd{6.983}{0.000} & \meanstd{7.087}{0.000} & \meanstd{7.663}{0.000} & \meanstd{7.786}{0.000} & \meanstd{6.734}{0.000} & \meanstd{6.832}{0.000} \\
\midrule
& 0.5 & \meanstd{6.109}{0.081} & \meanstd{6.193}{0.086} & \meanstd{4.948}{0.066} & \meanstd{5.027}{0.063} & \meanstd{4.846}{0.046} & \meanstd{4.973}{0.046} & \meanstd{4.960}{0.058} & \meanstd{5.084}{0.058} \\
 \multirow{-2}{*}{\texttt{\name+SF2M}} & 1 & \meanstd{6.110}{0.062} & \meanstd{6.197}{0.063} & \meanstd{4.951}{0.056} & \meanstd{5.037}{0.058} & \meanstd{4.874}{0.068} & \meanstd{5.016}{0.065} & \meanstd{5.016}{0.029} & \meanstd{5.173}{0.039} \\
 & 0.5 & \meanstd{5.483}{0.012} & \meanstd{5.544}{0.014} & \meanstd{4.448}{0.007} & \meanstd{4.527}{0.006} & \meanstd{4.442}{0.035} & \meanstd{4.589}{0.041} & \meanstd{4.386}{0.016} & \meanstd{4.520}{0.013} \\
 \multirow{-2}{*}{\texttt{\name+MFM}} & 1 & \meanstd{5.376}{0.021} & \textbf{\boldmath \meanstd{5.440}{0.022}} & \meanstd{4.460}{0.079} & \meanstd{4.543}{0.082} & \meanstd{4.477}{0.050} & \meanstd{4.641}{0.062} & \meanstd{4.329}{0.023} & \meanstd{4.507}{0.029} \\
 & 0.5 & \meanstd{5.355}{0.021} & \meanstd{5.451}{0.018} & \meanstd{4.544}{0.033} & \meanstd{4.700}{0.035} & \meanstd{4.489}{0.054} & \meanstd{4.692}{0.061} & \meanstd{4.343}{0.025} & \meanstd{4.522}{0.033} \\ \multirow{-2}{*}{\texttt{\name+UOT-FM}} & 1 & \textbf{\boldmath \meanstd{5.346}{0.024}} & \meanstd{5.487}{0.038} & \meanstd{4.547}{0.035} & \meanstd{4.755}{0.040} & \meanstd{4.489}{0.038} & \meanstd{4.716}{0.041} & \textbf{\boldmath \meanstd{4.325}{0.038}} & \meanstd{4.531}{0.038} \\
\multirow{2}{*}{\texttt{\name+CFM}} & 0.5 & \meanstd{5.490}{0.018} & \meanstd{5.555}{0.019} & \textbf{\boldmath \meanstd{4.427}{0.021}} & \textbf{\boldmath \meanstd{4.502}{0.025}} & \textbf{\boldmath \meanstd{4.380}{0.021}} & \textbf{\boldmath \meanstd{4.517}{0.021}} & \meanstd{4.396}{0.023} & \meanstd{4.531}{0.019} \\
& 1 & \meanstd{5.446}{0.018} & \meanstd{5.512}{0.016} & \meanstd{4.440}{0.041} & \meanstd{4.518}{0.044} & \meanstd{4.431}{0.126} & \meanstd{4.577}{0.141} & \meanstd{4.352}{0.020} & \meanstd{4.530}{0.033} \\
\bottomrule
\end{tabular}%
}
\end{table}

\subsection{Scaling \name\ to cell atlas level datasets}
We extended the evaluation to substantially larger datasets with two additional interpolation tasks on a mouse development cell atlas \citep{Qiu2022}. These results show that \name\ remains feasible in this larger-scale regime (see Table \ref{tab:embryo_transition_results}). We did not include UOT-FM on this benchmark because the unbalanced OT solver did not scale to datasets of this size in our experiments.

\begin{table}[!htb]
\centering
\caption{\textbf{Interpolation error for embryo developmental transitions (lower is better).}
We report mean \(\pm\) std over runs for \(W_1/W_2\).}
\label{tab:embryo_transition_results}
\small
\setlength{\tabcolsep}{3pt}
\renewcommand{\arraystretch}{0.9}
\begin{tabular}{lrrrr}
\toprule
& \multicolumn{2}{c}{\textbf{E7.5\(\rightarrow\)E8}}
& \multicolumn{2}{c}{\textbf{E7.75\(\rightarrow\)E8.25}} \\
\cmidrule(lr){2-3}\cmidrule(lr){4-5}
\multicolumn{1}{c}{\textbf{Method}}
& \multicolumn{1}{c}{$W_1$}
& \multicolumn{1}{c}{$W_2$}
& \multicolumn{1}{c}{$W_1$}
& \multicolumn{1}{c}{$W_2$} \\
\midrule
\texttt{OT-CFM}
& \meanstd{3.266}{0.011} & \meanstd{3.575}{0.023}
& \meanstd{3.229}{0.018} & \textbf{\boldmath \meanstd{3.249}{0.010}} \\

\texttt{OT-MFM}
& \meanstd{3.249}{0.007} & \meanstd{3.586}{0.025}
& \meanstd{3.235}{0.015} & \meanstd{3.250}{0.007} \\

\texttt{SF2M}
& \meanstd{4.967}{0.078} & \meanstd{5.382}{0.078}
& \meanstd{4.748}{0.108} & \meanstd{4.884}{0.125} \\

\midrule
\texttt{\name+CFM}
& \meanstd{3.068}{0.007} & \meanstd{3.555}{0.005}
& \textbf{\boldmath \meanstd{3.131}{0.022}} & \meanstd{3.323}{0.037} \\

\texttt{\name+MFM}
& \textbf{\boldmath \meanstd{3.054}{0.009}} & \textbf{\boldmath \meanstd{3.546}{0.014}}
& \meanstd{3.140}{0.023} & \meanstd{3.384}{0.020} \\

\texttt{\name+SF2M}
& \meanstd{4.585}{0.110} & \meanstd{4.872}{0.132}
& \meanstd{4.767}{0.238} & \meanstd{4.947}{0.225} \\
\bottomrule
\end{tabular}%
\end{table}

\subsection{Sensitivity of couplings to catalog edits}
The experiment presented in \Cref{sec:insilico} involved perturbing the LR catalog by removing specific LR pairs. In \Cref{tab:argmax_changed_alpha1}, we show how the coupling changes, by computing the fraction of source cells whose target argmax differs between "active" vs.\ "inactive" LR libraries for each pathway.

\begin{table*}[!htb]
\centering
\setlength{\tabcolsep}{12pt} 
\renewcommand{\arraystretch}{1.1} 
\caption{\textbf{Coupling changes (argmax) at \(\alpha=1.0\).} 
Fraction of source cells whose target argmax differs between “active” vs.\ “inactive” LR libraries for each pathway; \(N{=}2195\) source cells. Targeted pathways (EGFR/ALK/MET) show large shifts, while cardio–renal controls (RAAS, Vasopressin, Natriuretic) show little or moderate effect, as expected.}
\label{tab:argmax_changed_alpha1}
\begin{tabular}{lcc}
\toprule
\textbf{Pathway / System} & \textbf{Coupling changed (count / \(N\))} & \textbf{Percent} \\
\midrule
EGFR (targeted)       & \(2071 / 2195\) & \(94.35\%\) \\
ALK (targeted)        & \(2164 / 2195\) & \(98.59\%\) \\
MET (targeted)        & \(2154 / 2195\) & \(98.13\%\) \\
\midrule
RAAS (control)        & \(0 / 2195\)    & \(0.00\%\) \\
Vasopressin (control) & \(0 / 2195\)    & \(0.00\%\) \\
Natriuretic (control) & \(1582 / 2195\) & \(72.07\%\) \\
\bottomrule
\end{tabular}
\end{table*}

\subsection{Comparing cell interaction types}

We further examined how different classes of molecular interactions influence the resulting transport couplings. Using our automated selection procedure (\Cref{app:liana}), we identified a top-ranking set of $10$ ligand-receptor pairs for each of the datasets. We contrasted this against a matched set of $10$ canonical long-range soluble cytokines and growth factors: (CXCL12-CXCR4, VEGFA-KDR, CCL5-CCR5, TGFB1-TGFBR2, IL6-IL6R, EGF-EGFR, TNF-TNFRSF1A, IGF1-IGF1R, CSF1-CSF1R, IFNG-IFNGR1). As shown in \Cref{tab:alpha_one_comparison}, the cytokine pairs exhibit slightly higher Wasserstein ($W_1$ and $W_2$) distances compared to the results obtained previously with our selection procedure. This suggests that the specific interaction modes we keep have more informative topological constraints on the transport map than generic diffusive signaling, effectively recovering structure-aware couplings that reflect the physical tissue architecture.

\begin{table}[!htb]
    \centering
    \caption{\textbf{Wasserstein distances for pure structural alignment ($\alpha=1$).} Comparison of interpolation performance using generic Long Range priors versus our automated selection procedure. Lower values indicate better alignment.}
    \label{tab:alpha_one_comparison}
    \begin{tabular}{llcc}
        \toprule
        \textbf{Dataset} & \textbf{Interaction Prior} & $\mathcal{W}_1 \downarrow$ & $\mathcal{W}_2 \downarrow$ \\
        \midrule
        \multirow{2}{*}{\textbf{V1 Light}} 
        & Long range & 2.42 & 2.63 \\
        & Dataset-specific & \textbf{2.35} & \textbf{2.59} \\
        \midrule
        \multirow{2}{*}{\textbf{Immune}} 
        & Long range & \textbf{3.58} & \textbf{3.73} \\
        & Dataset-specific & 3.59 & \textbf{3.73} \\
        \midrule
        \multirow{2}{*}{\textbf{Lung Cancer}} 
        & Long range & 2.10& 2.33 \\
        & Dataset-specific & \textbf{2.02} & \textbf{2.30} \\
        \bottomrule
    \end{tabular}
\end{table}

\subsection{Combining \name\ with other priors}
\label{app:combining_with_other_priors}
A key advantage of \name\ is its modularity: the CCI-derived prior only depends on the CCI tensors $(G^{(0)}, G^{(1)})$ and on a coupling~$\Gamma$, and is therefore largely orthogonal to how~$\Gamma$ is obtained. As a consequence, the CCI prior can be combined with a wide range of existing priors or architectural choices for trajectory inference. Here, we illustrate this flexibility by extending \name\  to two settings: (i) unbalanced OT, which explicitly accounts for cell birth and death between snapshots, and (ii) metric flow matching, which replaces the standard Euclidean flow-matching objective with a geometry-aware variant.
Details on both of these implementations can be found in  \Cref{app:details_unbalanced_iadot} and \Cref{app:details_mfm_iadot}.

We reproduce the experiment in \Cref{sec:synthetic} with these \name\ variants, and report the results in \Cref{fig:alpha_sweep_mfm} and \Cref{fig:alpha_sweep_uot}.
We notice the following:

\begin{itemize}
    \item \textbf{$\alpha > 0$ remains optimal.} For all datasets and the two \name\ variants, the best $W_1$/$W_2$ values occur at a non-zero structure weight $\alpha$, mirroring the behavior observed in \Cref{sec:synthetic}.
    \item \textbf{Complementary to other priors.} The fact that $\alpha > 0$ remains optimal shows that adding the CCI prior on top of MFM or UOT-FM yields consistent improvements over the corresponding feature-only baselines, highlighting that \name's gains are not tied to a specific OT or flow-matching objective, but rather come from the biological prior.
\end{itemize}

\begin{figure}[t]
  \centering

  \begin{subfigure}[t]{0.32\linewidth}
    \centering
    \includegraphics[width=\linewidth]{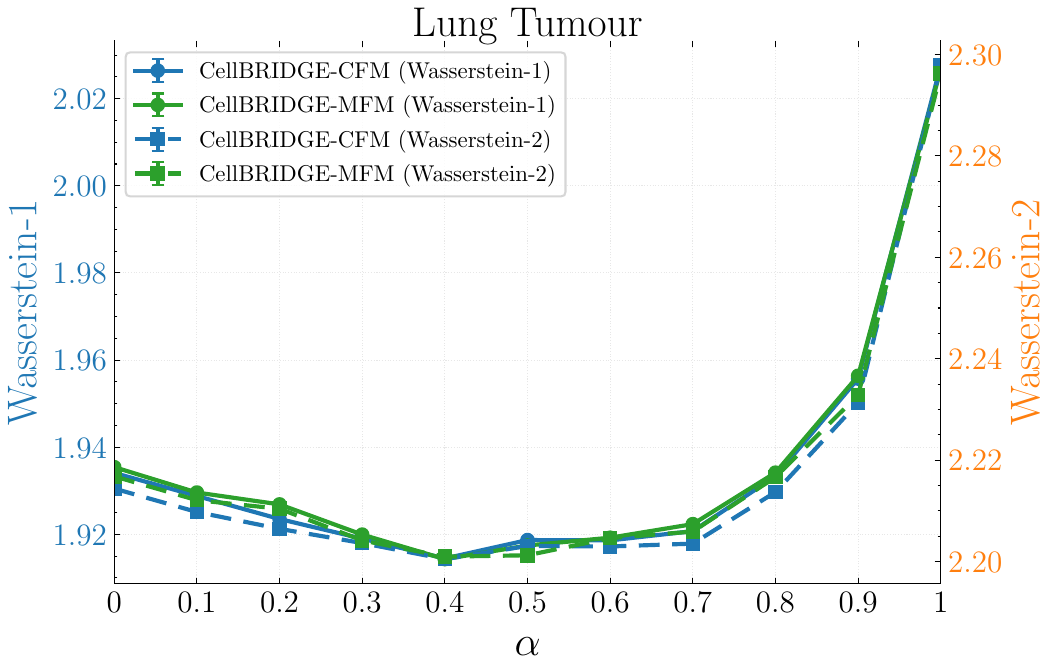}
    \label{fig:mfm_cancer_alpha_sweep}
  \end{subfigure}
  \hfill
  \begin{subfigure}[t]{0.32\linewidth}
    \centering
    \includegraphics[width=\linewidth]{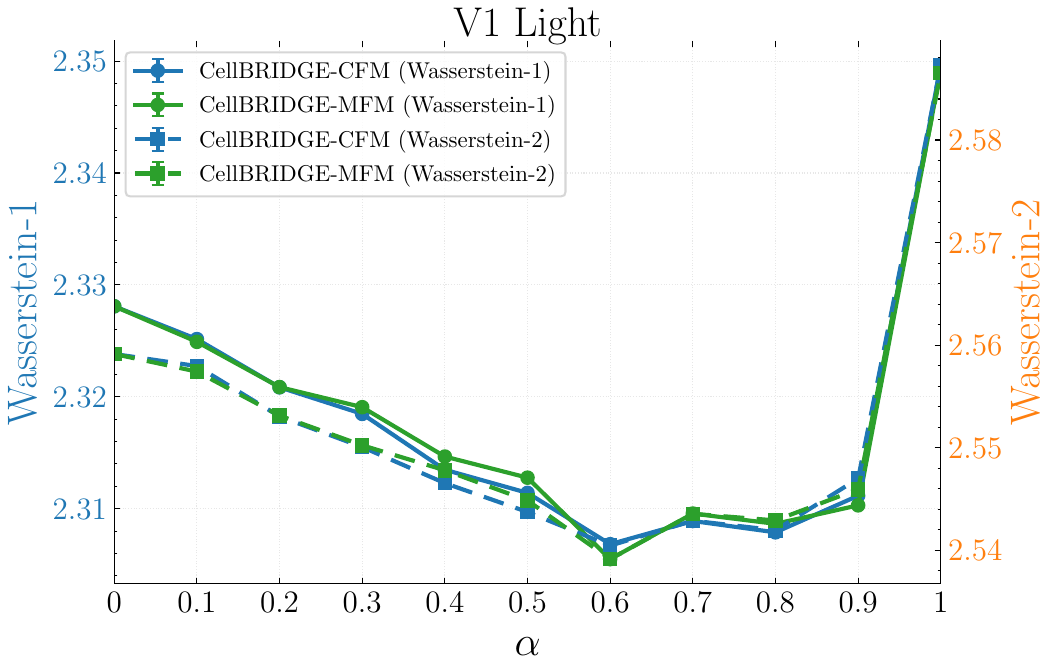}
    \label{fig:mfm_light_alpha_sweep}
  \end{subfigure}
  \hfill
  \begin{subfigure}[t]{0.32\linewidth}
    \centering
    \includegraphics[width=\linewidth]{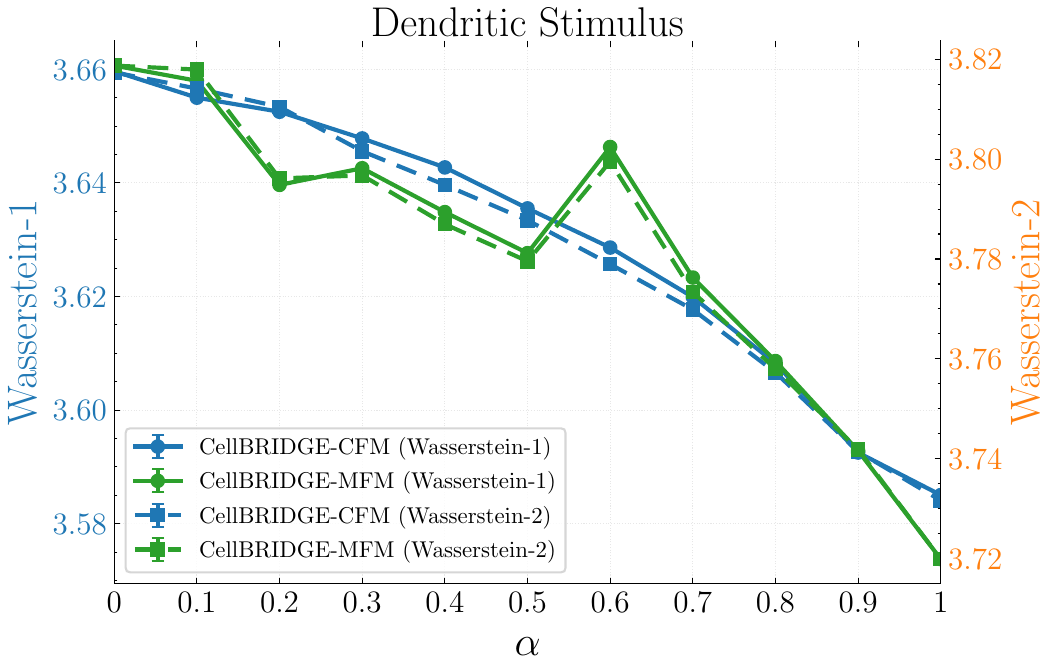}
    \label{fig:mfm_immune_alpha_sweep}
  \end{subfigure}

  \caption{\textbf{Incorporating the CCI prior with MFM.} We plot the \(W_1\) and \(W_2\) distances between the interpolated and empirical \(t_1\) snapshots as \(\alpha\) varies.}

  \label{fig:alpha_sweep_mfm}
\end{figure}

\begin{figure}[t]
  \centering

  \begin{subfigure}[t]{0.32\linewidth}
    \centering
    \includegraphics[width=\linewidth]{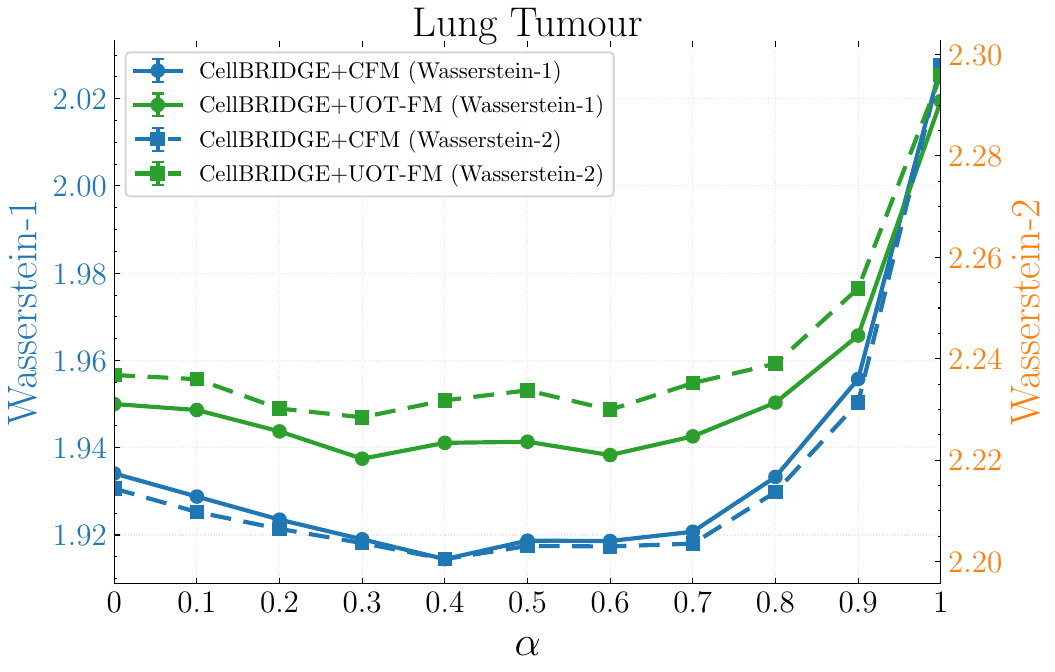}
    \label{fig:uot_cancer_alpha_sweep}
  \end{subfigure}
  \hfill
  \begin{subfigure}[t]{0.32\linewidth}
    \centering
    \includegraphics[width=\linewidth]{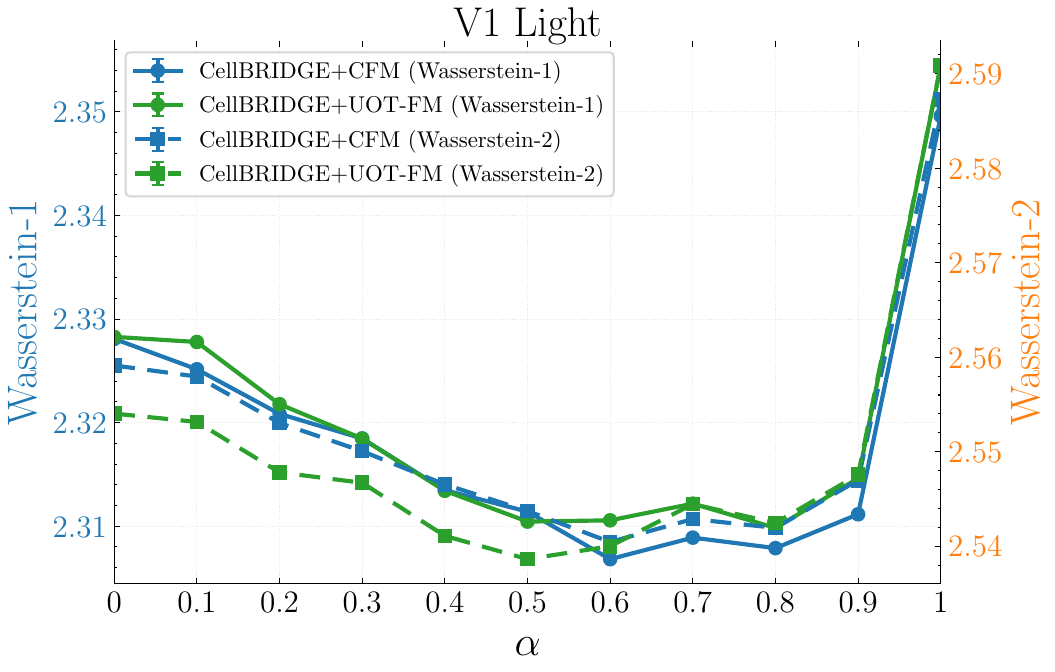}
    \label{fig:uot_light_alpha_sweep}
  \end{subfigure}
  \hfill
  \begin{subfigure}[t]{0.32\linewidth}
    \centering
    \includegraphics[width=\linewidth]{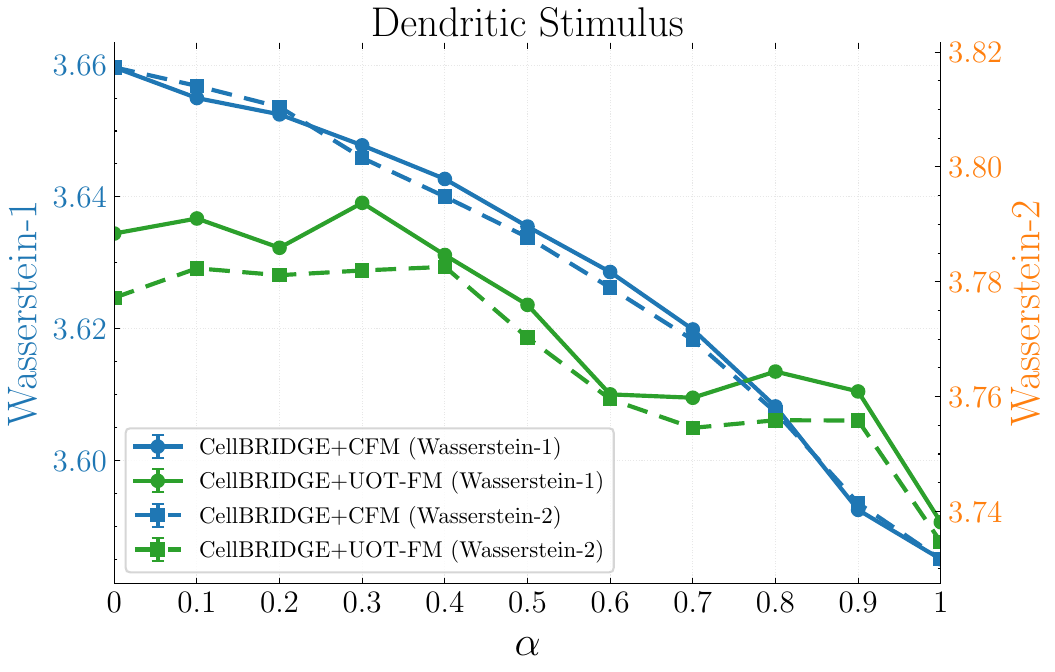}
    \label{fig:uot_immune_alpha_sweep}
  \end{subfigure}
    \caption{\textbf{Incorporating the CCI prior with UOT-CFM.}
  We plot the \(W_1\) and \(W_2\) distances between the interpolated and
  empirical \(t_1\) snapshots as \(\alpha\) varies.}

  \label{fig:alpha_sweep_uot}
\end{figure}

\subsection{Sensitivity analysis on the LR expressions}
In this section, we study the sensitivity of \name\ to measurement noise in the LR expressions. We inject this noise in LR genes expression by adding zero-mean Gaussian noise to the gene expressions before applying the Hill  transform  and clipping below by $0$, i.e. we define $\tilde{x}_{cg} = \max(0,x_{cg}+\epsilon_{cg})$ where $\epsilon_{cg} \sim \mathcal{N}(0,\sigma_g^2)$.  $\sigma_g^2$ denotes a gene-specific noise variance, defined as $\sigma_g = \beta \hat{\sigma}_g$, with $\hat{\sigma}_g$ the empirical standard deviation of $\{x_{cg} \mid c \in [n]\}$ (to take into account per-gene variance) and $\beta$ a scaling factor. From these perturbed expressions, we compute the activations $\tilde{s}_{cg}$ and we construct the CCI tensors with the entries \(\tilde{q}^{(p_k)}_{i\to j}=\tilde{s}_{i \ell_k}\,\tilde{s}_{j r_k}\)
and obtain the couplings by solving \Cref{eq:fgw}.

We report the results in  \Cref{fig:gaussian_noise_sensitivity}, where we sweep $\beta$ for different values across the interval $[0,2]$, with $\alpha=1$.

\begin{figure}[t]
  \centering

  \begin{subfigure}[t]{0.32\linewidth}
    \centering
    \includegraphics[width=\linewidth]{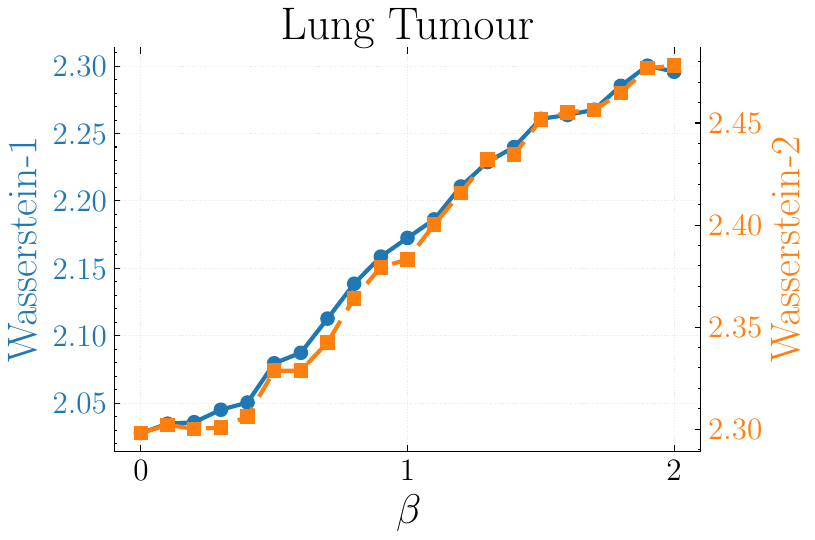}
    \label{fig:mfm_cancer_alpha_sweep}
  \end{subfigure}
  \hfill
  \begin{subfigure}[t]{0.32\linewidth}
    \centering
    \includegraphics[width=\linewidth]{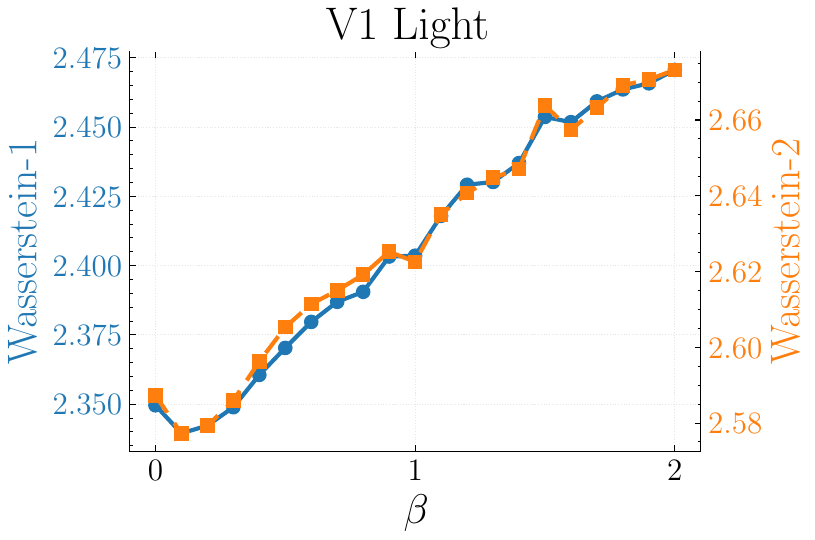}
    \label{fig:mfm_light_alpha_sweep}
  \end{subfigure}
  \hfill
  \begin{subfigure}[t]{0.32\linewidth}
    \centering
    \includegraphics[width=\linewidth]{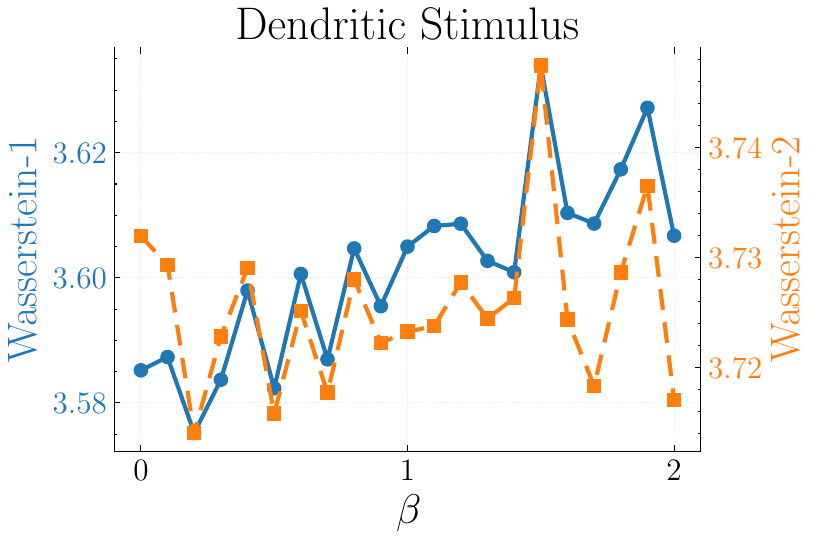}
    \label{fig:mfm_immune_alpha_sweep}
  \end{subfigure}

  \caption{\textbf{Perturbation of the LR expressions with Gaussian noise.} We plot the \(W_1\) and \(W_2\) distances between the interpolated and empirical \(t_1\) snapshots as the scaling factor of the noise $\beta$ increases.}

  \label{fig:gaussian_noise_sensitivity}
\end{figure}

As the noise scale $\beta$ increases, both $W_1$ and $W_2$ gradually deteriorate across all three datasets. This non-zero sensitivity is expected and desirable: if the CCI prior was irrelevant, corrupting the LR expressions would leave the interpolation error unchanged. Instead, adding noise worsens alignment, showing the benefits of the prior. 
Furthermore, the performance is relatively robust to small levels of noise for the Lung tumour and V1 Light datasets. Interestingly, for the V1 Light dataset, we see that $\beta \in \{0.1, 0.2 \}$ improves the results upon $\beta=0$, which we attribute to a small regularization / denoising effect. Adding a small amount of centered Gaussian noise before the Hill transform and clipping  makes low-intensity ligand or receptor expressions become zero while leaving strongly expressed pairs essentially unchanged.
The results are noisier for the Dendritic Stimulus dataset, which we attribute to the smaller size of the dataset.

\subsection{Sensitivity with respect to $K_g$ and $h_g$}
In this section, we conduct a sensitivity analysis on the hyperparameters $K_g$ and $h_g$, used to define the interaction scores in \Cref{sec:fgw} as \(s_{cg}=x_{cg}^{h_g}/(x_{cg}^{h_g}+K_g^{h_g})\).
We consider different values of the percentile level $p \in \{80, 90, 99\}$ (with $K_g = Q_g(p)$ denoting the $p$-th percentile of $\{ x_{cg} \mid c \in [n]\}$) and $h_g \in \{1,2,4\}$, for $\alpha = 0.5$. We report the interpolation results in \Cref{fig:hill_sweep}. We observe that the performance is largely insensitive to the specific choice of these parameters. This stability justifies the use of standard default values ($90$th percentile and $h_g=1$) across our experiments without the need for extensive per-dataset tuning.

\begin{figure}[t]
  \centering

  \begin{subfigure}[t]{0.32\linewidth}
    \centering
    \includegraphics[width=\linewidth]{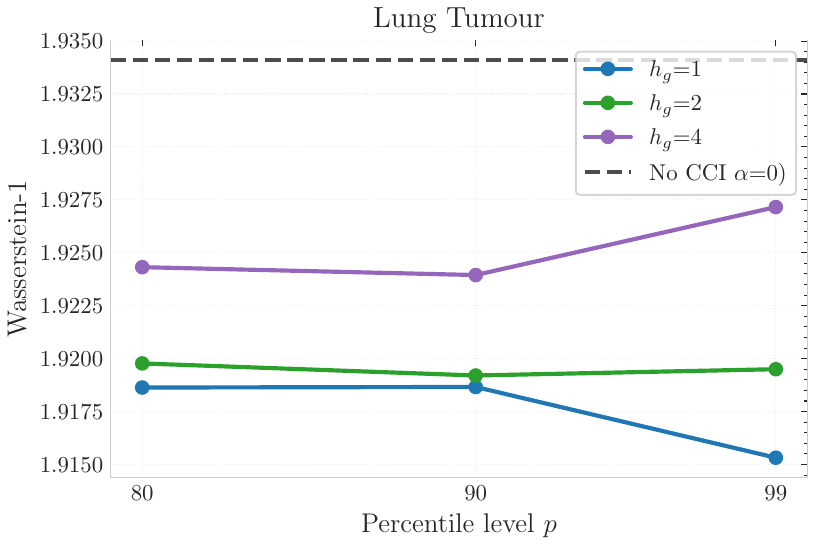}
    \label{fig:hill_cancer_sweep}
  \end{subfigure}
  \hfill
  \begin{subfigure}[t]{0.32\linewidth}
    \centering
    \includegraphics[width=\linewidth]{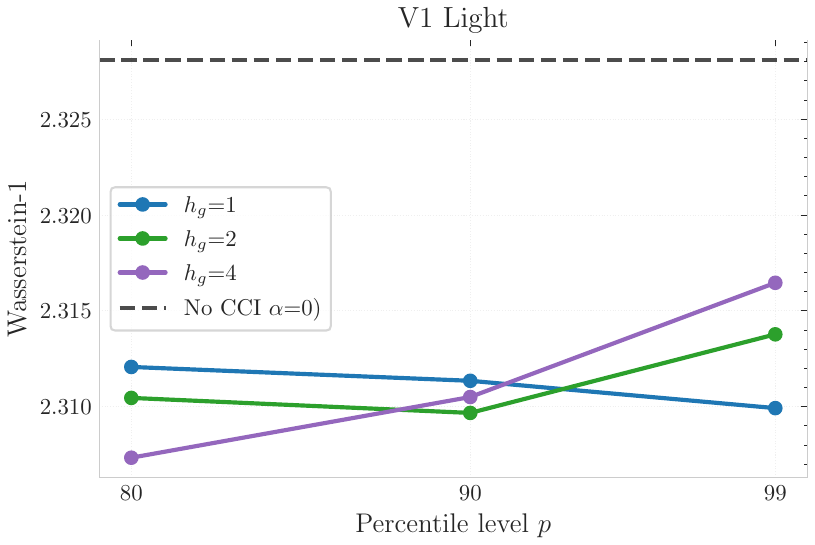}
    \label{fig:hill_light_sweep}
  \end{subfigure}
  \hfill
  \begin{subfigure}[t]{0.32\linewidth}
    \centering
    \includegraphics[width=\linewidth]{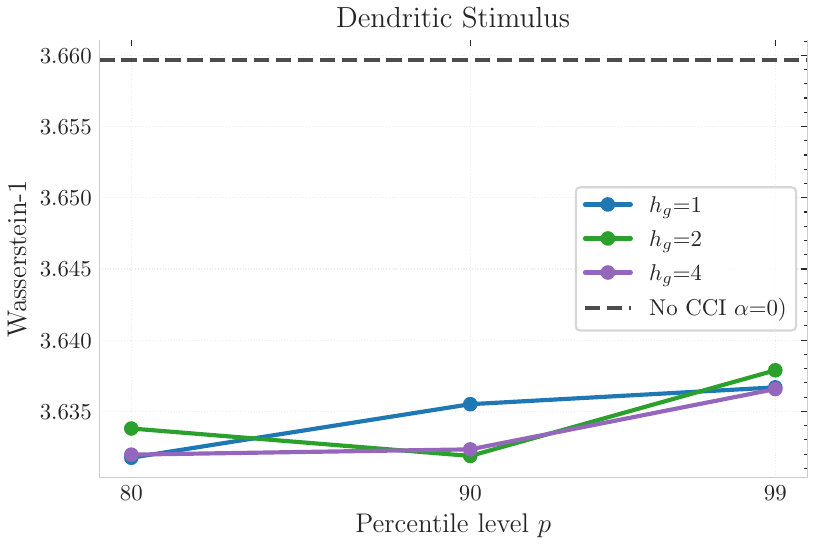}
    \label{fig:hill_immune_sweep}
  \end{subfigure}

  \caption{\textbf{Sensitivity analysis on the hyperparameters of the Hill transform.} We plot the \(W_1\)  distances between the interpolated and empirical \(t_1\) snapshots for different values of $K_g$ and $h_g$.}

  \label{fig:hill_sweep}
\end{figure}

\subsection{Path curvature analysis}

To empirically demonstrate that \name\ learns non-linear interaction effects, we measure the average path length ratio (displacement divided by path length) of the inferred trajectories:

\begin{equation}
    S(z_0,v_\theta) = \frac{\|z_1 - z_0\|_2}{\int_0^1 \|v_{\theta}(z_t, t)\|_2 \, dt}
\end{equation}

\noindent where $z_t = z_0 + \int_0^t v_{\theta}(z_t,t)dt$ for $t\in [0,1]$ and $z_0$ denotes the initial point.

A ratio of $1.0$ indicates a straight line, while values $<1.0$ indicate curvature.

As shown in \Cref{tab:path_length}, increasing the interaction weight $\alpha$ leads to significantly higher curvature (lower ratios), confirming that incorporating interactions prevents the model from simply learning independent straight lines.

\begin{table}[h!]
\centering
\caption{Path length ratio comparison across different datasets.}
\begin{tabular}{l c c c}
\hline
$\alpha$ & Lung Tumour & Dendritic Stimulus & V1 Light \\
\hline
0 & $0.974 \pm 0.001$ & $0.982 \pm 0.002$ & $0.954 \pm 0.004$ \\
0.5 & $0.952 \pm 0.002$ & $0.979 \pm 0.003$ & $0.907 \pm 0.011$ \\
1 & $0.862 \pm 0.011$ & $0.969 \pm 0.005$ & $0.635 \pm 0.009$ \\
\hline
\end{tabular}
\label{tab:path_length}
\end{table}

\subsection{Normalization procedure ablation}
Our normalization scheme described in \Cref{app:normalization} requires solving two OT problems (for $\alpha=0$ and $\alpha=1$ ) to calibrate the relative scales of the cost and structural terms. The motivation is that $\alpha$ should smoothly control the balance between the two terms. We test a simpler normalization strategy on the tumour dataset that avoids these endpoint OT solves, by scaling $C$, $G^{(0)}$, and $G^{(1)}$ by their respective medians. The results in Figure \ref{fig:median_scaling} show that our normalization provides better calibration between the feature and structural terms.
\begin{figure}[!htb]
    \centering
    \includegraphics[width=0.5\linewidth]{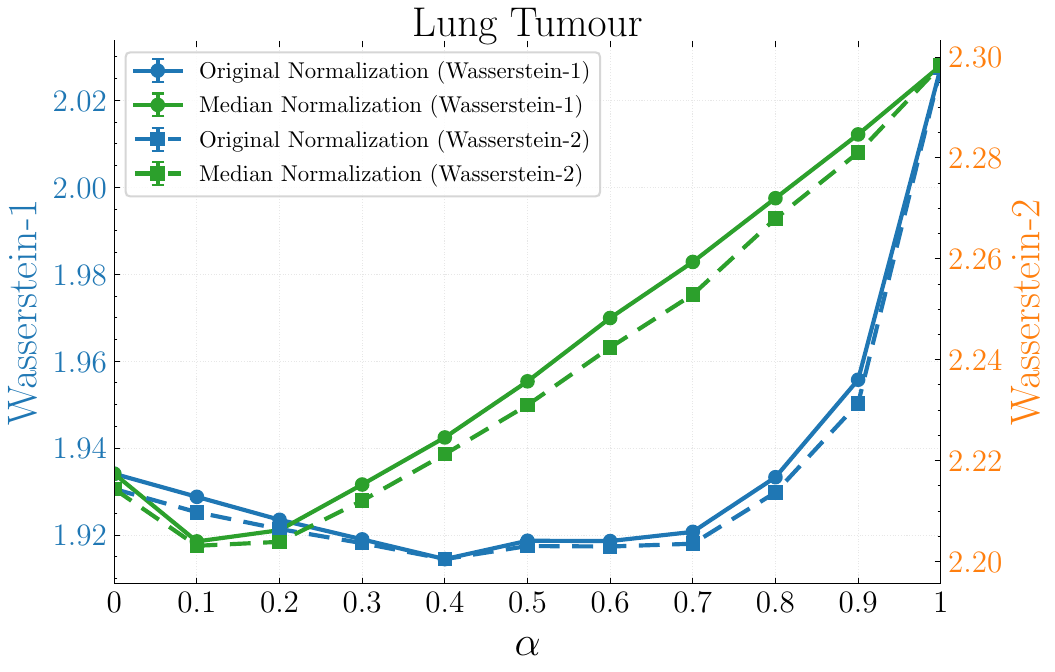}
    \caption{Comparison between our proposed normalization schemes and a median-based normalization for the Lung Tumour dataset, evaluated via held-out interpolation as in \Cref{sec:synthetic}.}
    \label{fig:median_scaling}
\end{figure}

\subsection{Perturbation analysis on the dendritic-cell stimulus dataset}
\label{app:dendritic_perturbation}

To test whether \name{} can model signalling-level perturbations beyond the lung tumour setting, we performed an additional analysis on the dendritic-cell stimulus-response dataset~\citep{shalek2014single}. The dataset contains mouse bone-marrow-derived dendritic cells under LPS stimulation, including wild-type cells at baseline and 4\,h after stimulation, together with experimentally observed 4\,h knockout conditions for \textit{Ifnar1}, \textit{Stat1}, and \textit{Tnfr}.

We learned couplings from wild-type baseline cells to wild-type 4\,h LPS-stimulated cells and compared two in silico perturbation strategies for each pathway axis. In the gene-edit baseline, we zeroed out the perturbed gene in the wild-type endpoint snapshots before computing the predicted 4\,h distribution. In the pathway-edit setting, we left the wild-type expression snapshots unchanged but modified the ligand--receptor catalogue so that the perturbation acted through the signalling-structure term in the coupling. We then evaluated each perturbation by comparing the predicted 4\,h distribution against the experimentally observed 4\,h knockout population.

\begin{table}[!htb]
\centering
\caption{\textbf{In silico perturbation analysis on the dendritic-cell stimulus dataset.}
We compare gene-level edits (\(\alpha=0\)) and pathway-level edits (\(\alpha=1\)) against experimentally observed knockout populations. Lower is better.}
\label{tab:dendritic_knockout}
\small
\setlength{\tabcolsep}{5pt}
\renewcommand{\arraystretch}{1.1}
\begin{tabular}{llcc}
\toprule
\textbf{Condition} & \textbf{Method} & \(\mathbf{W_1}\) & \(\mathbf{W_2}\) \\
\midrule
IFNAR1 KO & \(\alpha=0\) (gene edit)    & 4.506 & 4.648 \\
IFNAR1 KO & \(\alpha=1\) (pathway edit) & \textbf{4.376} & \textbf{4.490} \\
\midrule
STAT1 KO  & \(\alpha=0\) (gene edit)    & 5.606 & 5.686 \\
STAT1 KO  & \(\alpha=1\) (pathway edit) & \textbf{5.502} & \textbf{5.567} \\
\midrule
TNFR KO   & \(\alpha=0\) (gene edit)    & 4.487 & 4.572 \\
TNFR KO   & \(\alpha=1\) (pathway edit) & \textbf{4.361} & \textbf{4.437} \\
\bottomrule
\end{tabular}
\end{table}

\section{Theoretical Analysis}

\subsection{Synthetic setup}
\label{app:synthetic-theory}

In this section, we provide a theoretical guarantee for the synthetic setup of Section~\ref{app:synthetic_details}.
\begin{theorem}
Let $\mathcal{D}_0$ and $\mathcal{D}_1$ be the source and target datasets defined by the synthetic clusters, and let $G^{(0)}, G^{(1)}$ be the associated directed interaction tensors.

Consider the two candidate couplings:
\begin{enumerate}
    \item \textbf{$\Gamma_{GT}$:} The transport plan corresponding to the true translation vectors (preserving cluster identity).
    \item \textbf{$\Gamma_{FO}$:} The transport plan corresponding to the feature-only map.
\end{enumerate}

As in \Cref{app:normalization}, define the (unnormalized) feature and structure gaps between these two couplings as
\[
    \Delta \mathcal{F} := \mathcal{F}(\Gamma_{FO}) - \mathcal{F}(\Gamma_{GT}), 
    \qquad
    \Delta \mathcal{S} := \mathcal{S}(\Gamma_{FO}) - \mathcal{S}(\Gamma_{GT}),
\]
and the corresponding normalized feature and structure terms
\[
    \tilde{\mathcal{F}}(\Gamma) := \frac{\mathcal{F}(\Gamma)}{|\Delta \mathcal{F}|},
    \qquad
    \tilde{\mathcal{S}}(\Gamma) := \frac{\mathcal{S}(\Gamma)}{\Delta \mathcal{S}}.
\]
Let 
\[
    J(\Gamma, \alpha) = (1-\alpha)\,\tilde{\mathcal{F}}(\Gamma) + \alpha\,\tilde{\mathcal{S}}(\Gamma)
\]
be the normalized FGW objective function.

Let $N$ denote the size of each cluster. Then, for sufficiently large $N$, there exists a critical threshold $\alpha^* \in (0, 1)$ such that for all $\alpha > \alpha^*$, the ground truth coupling strictly minimizes the objective relative to the feature-only alternative: $J(\Gamma_{GT}, \alpha) < J(\Gamma_{FO}, \alpha)$.
\end{theorem}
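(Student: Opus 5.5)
The plan is to compute the four quantities $\mathcal{F}(\Gamma_{GT})$, $\mathcal{F}(\Gamma_{FO})$, $\mathcal{S}(\Gamma_{GT})$, $\mathcal{S}(\Gamma_{FO})$ more or less explicitly. The key is the sign pattern $\Delta\mathcal{F}<0$ and $\Delta\mathcal{S}>0$. Given it, the normalization turns the comparison into a statement about $\alpha$ alone:
\[
J(\Gamma_{GT},\alpha)-J(\Gamma_{FO},\alpha)
=(1-\alpha)\frac{-\Delta\mathcal{F}}{|\Delta\mathcal{F}|}+\alpha\frac{-\Delta\mathcal{S}}{\Delta\mathcal{S}}
=(1-\alpha)-\alpha=1-2\alpha .
\]
This difference is strictly negative for every $\alpha>\alpha^*:=1/2\in(0,1)$. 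So the whole proof reduces to establishing the two signs.

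First I fix the candidate plans concretely. All clusters have size $N$ and uniform weights $1/(3N)$. I take $\Gamma_{GT}$ to be the permutation coupling $X_i^{(k)}\mapsto T_k(X_i^{(k)})$. The translations swap the outer clusters: $\mathcal{S}_0$ lands near $(2,-2)$ and $\mathcal{S}_2$ lands near $(-2,-2)$. I take $\Gamma_{FO}$ to be the cluster-mirroring permutation $X_i^{(0)}\mapsto T_2(X_i^{(2)})$, $X_i^{(1)}\mapsto T_1(X_i^{(1)})$, $X_i^{(2)}\mapsto T_0(X_i^{(0)})$. This is the vertical matching that squared-Euclidean OT selects. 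If one insists that $\Gamma_{FO}$ be the exact feature-optimal plan, I would argue that with well-separated clusters the optimal plan is block-structured in the same way with high probability. The within-block matching only lowers $\mathcal{F}$ and does not affect $\mathcal{S}$.

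The structure term is deterministic, so I handle it second. Under $\Gamma_{GT}$ every directed edge $(i\to k)$ is sent to an edge with identical channel vector, so $\varphi=0$ for every pair and $\mathcal{S}(\Gamma_{GT})=0$. Under $\Gamma_{FO}$, $\mathcal{S}_1$ still maps to $\mathcal{S}'_1$, but $\mathcal{S}_0$ and $\mathcal{S}_2$ are exchanged. A channel-1 edge $\mathcal{S}_1\to\mathcal{S}_0$ is therefore sent to a channel-2 edge $\mathcal{S}'_1\to\mathcal{S}'_2$, with $\varphi(e_1,e_2)=2$, and symmetrically for channel 2. Zero-edges map to zero-edges. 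Counting $2N^2$ mismatched pairs, each of weight $(3N)^{-2}$, gives $\mathcal{S}(\Gamma_{FO})=4/9$. Hence $\Delta\mathcal{S}=4/9>0$ for every $N$.

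The feature term is the only random part, and I expect it to be the main obstacle, mild as it is. Write $X=\mu_k+\varepsilon$. Under $\Gamma_{GT}$ the per-point costs are exactly $\|(4,-4)\|^2=32$, $16$, $32$, so $\mathcal{F}(\Gamma_{GT})=80/3$ deterministically. Under $\Gamma_{FO}$ the displacement from $\mathcal{S}_0$ is $(0,-4)+(\varepsilon^{(2)}_i-\varepsilon^{(0)}_i)$, and similarly from $\mathcal{S}_2$. Its expected squared norm is $16+2\cdot 2\cdot 0.1$, so $\mathbb{E}\,\mathcal{F}(\Gamma_{FO})=16+0.8/3<80/3$. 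By the law of large numbers, or a chi-square/Hoeffding-type concentration bound, $\mathcal{F}(\Gamma_{FO})$ concentrates at this value. Thus $\Delta\mathcal{F}<0$ with probability tending to one as $N\to\infty$. This is where ``sufficiently large $N$'' enters, and the statement should be read in the high-probability sense. Combining this with $\Delta\mathcal{S}=4/9$ in the displayed identity gives the claim with $\alpha^*=1/2$.
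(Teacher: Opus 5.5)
Your proposal is correct and follows essentially the same route as the paper. Both compute $\mathcal{F}(\Gamma_{GT})=80/3$, $\mathcal{S}(\Gamma_{GT})=0$ and $\mathcal{S}(\Gamma_{FO})=4/9$, show $\mathcal{F}(\Gamma_{FO})<80/3$ for large $N$, and let the normalization reduce the comparison to $1-2\alpha<0$, so $\alpha^*=1/2$.

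The one minor difference is in how $\mathcal{F}(\Gamma_{FO})$ is controlled. The paper takes $\Gamma_{FO}$ to be the exact feature-optimal plan and writes $\mathcal{F}(\Gamma_{FO})=16+\delta_N$, using empirical-OT convergence rates. You instead use an explicit index-matched block permutation with limit $16+0.8/3$ and a law-of-large-numbers argument, noting that the true optimizer can only be cheaper. This is slightly more elementary and more explicit about the high-probability meaning of ``sufficiently large $N$''.
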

\begin{proof}
Let the source measure be $\mu$ and the target measure be $\nu$, with the variance of the Normal distributions set to $\sigma^2=0.1$. The centroids are located at $\mu_0=(-2,2), \mu_1=(0,2), \mu_2=(2,2)$ and $\mu'_0=(2,-2), \mu'_1=(0,-2), \mu'_2=(-2,-2)$.
The interaction tensors $G^{(0)}$ and $G^{(1)}$ encode directed edges from the middle cluster ($k=1$) to the left ($k=0$) via Channel 1, and to the right ($k=2$) via Channel 2.
In what follows, we first compute the \emph{unnormalized} feature and structure costs. We then incorporate the normalization scheme of \Cref{app:normalization} in the final threshold derivation.

\textbf{1. Analysis of the feature cost $\mathcal{F}$}

For the ground truth coupling $\Gamma_{GT}$, each cluster $k$ maps to its true image $\mu'_k$. The cost is the mean squared norm of the translation vectors $v_0=(4,-4)$, $v_1=(0,-4)$, and $v_2=(-4,-4)$:
\begin{equation}
\mathcal{F}(\Gamma_{GT}) = \frac{1}{3} \sum_{k=0}^2 ||v_k||^2 = \frac{1}{3} (32 + 16 + 32) = \frac{80}{3}.
\end{equation}
For the feature-only coupling $\Gamma_{FO}$, $\mathcal{S}_0$ maps to $\mathcal{S}'_2$, $\mathcal{S}_1$ to $\mathcal{S}'_1$, and $\mathcal{S}_2$ to $\mathcal{S}'_0$. In the finite sample regime with $N$ points, the optimal transport cost between two empirical Gaussian distributions with identical covariance matrices converges to the squared Euclidean distance between their means. We denote the finite-sample deviation by $\delta_N$:
\begin{equation}
\mathcal{F}(\Gamma_{FO}) = \frac{1}{3} \left( ||\mu_0 - \mu'_2||^2 + ||\mu_1 - \mu'_1||^2 + ||\mu_2 - \mu'_0||^2 \right) + \delta_N.
\end{equation}
Hence:
\begin{equation}
\mathcal{F}(\Gamma_{FO}) = 16 + \delta_N.
\end{equation}
The term $\delta_N$ represents the error between the empirical measures and their population counterparts. For distributions in dimension $d=2$, this error decays at a rate of $\delta_N = O(N^{-1/2})$ \citep{fournier2015rate}. Provided $N$ is sufficiently large, standard OT ($\alpha=0$) prefers the incorrect mapping since $16 < \frac{80}{3}$.

\textbf{2. Analysis of structure cost $\mathcal{S}$}

The structure cost is the Gromov-Wasserstein cost:
\begin{equation}
\mathcal{S}(\Gamma) = \sum_{i,k} \sum_{j,l} ||G^{(0)}_{ik} - G^{(1)}_{jl}||^2 \Gamma_{ij} \Gamma_{kl}.
\end{equation}
Since $\Gamma_{GT}$ maps every source cluster $k$ to the target cluster with the same index $k$, and $G^{(1)}$ is defined to preserve the index-based structure of $G^{(0)}$, we have:
\begin{equation}
\mathcal{S}(\Gamma_{GT}) = 0.
\end{equation}
For $\Gamma_{FO}$, the mapping permutes indices as $\pi(0)=2, \pi(1)=1, \pi(2)=0$. We evaluate the cost for the two active interactions in $G^{(0)}$:
\begin{itemize}
\item \textbf{Edge $1 \to 0$ (Channel 1):} The source relation is $[1,0]^\top$ and the target relation (from $\pi(1)=1$ to $\pi(0)=2$) is Channel 2 ($[0,1]^\top$), which yields a squared difference of $2$ with mass weight $1/9$.
\item \textbf{Edge $1 \to 2$ (Channel 2):} The source relation is $[0,1]^\top$ and the target relation (from $\pi(1)=1$ to $\pi(2)=0$) is Channel 1 ($[1,0]^\top$), which yields a squared difference of $2$ with mass weight $1/9$.
\end{itemize}
The total structure cost is:
\begin{equation}
\mathcal{S}(\Gamma_{FO}) = \frac{1}{9}\times2 + \frac{1}{9}\times 2 = \frac{4}{9}.
\end{equation}

\textbf{3. Threshold derivation with normalization}

We now incorporate the normalization scheme of \Cref{app:normalization}. Define the (unnormalized) feature and structure gaps between the two couplings as
\begin{equation}
\Delta \mathcal{F} := \mathcal{F}(\Gamma_{FO}) - \mathcal{F}(\Gamma_{GT}),
\qquad
\Delta \mathcal{S} := \mathcal{S}(\Gamma_{FO}) - \mathcal{S}(\Gamma_{GT}).
\end{equation}
From the computations above,
\begin{equation}
\Delta \mathcal{F} = 16 + \delta_N - \frac{80}{3},
\qquad
\Delta \mathcal{S} = \frac{4}{9}.
\end{equation}
For sufficiently large $N$, we have $\mathcal{F}(\Gamma_{GT}) > \mathcal{F}(\Gamma_{FO})$, so $|\Delta \mathcal{F}| = \mathcal{F}(\Gamma_{GT}) - \mathcal{F}(\Gamma_{FO}) > 0$ and the normalization is well-defined.
Normalizing, we get:
\begin{equation}
\tilde{\mathcal{F}}(\Gamma) = \frac{\mathcal{F}(\Gamma)}{|\Delta \mathcal{F}|},
\qquad
\tilde{\mathcal{S}}(\Gamma) = \frac{\mathcal{S}(\Gamma)}{\Delta \mathcal{S}}.
\end{equation}
The normalized FGW objective can therefore be written as
\begin{equation}
J(\Gamma, \alpha) = (1-\alpha)\tilde{\mathcal{F}}(\Gamma) + \alpha \tilde{\mathcal{S}}(\Gamma).
\end{equation}
For the two couplings of interest, we obtain
\begin{equation}
\tilde{\mathcal{S}}(\Gamma_{GT}) = \frac{\mathcal{S}(\Gamma_{GT})}{\Delta \mathcal{S}} = 0,
\qquad
\tilde{\mathcal{S}}(\Gamma_{FO}) = \frac{\mathcal{S}(\Gamma_{FO})}{\Delta \mathcal{S}} = 1,
\end{equation}
and
\begin{equation}
\tilde{\mathcal{F}}(\Gamma_{GT}) - \tilde{\mathcal{F}}(\Gamma_{FO})
= \frac{\mathcal{F}(\Gamma_{GT}) - \mathcal{F}(\Gamma_{FO})}{|\Delta \mathcal{F}|}
= \frac{|\Delta \mathcal{F}|}{|\Delta \mathcal{F}|} = 1.
\end{equation}
We seek $\alpha$ such that $J(\Gamma_{GT}, \alpha) < J(\Gamma_{FO}, \alpha)$ under this normalization, i.e.
\begin{equation}
(1-\alpha)\tilde{\mathcal{F}}(\Gamma_{GT})
< (1-\alpha)\tilde{\mathcal{F}}(\Gamma_{FO}) + \alpha.
\end{equation}
Using $\tilde{\mathcal{F}}(\Gamma_{GT}) - \tilde{\mathcal{F}}(\Gamma_{FO}) = 1$, this inequality becomes
\begin{equation}
(1-\alpha) < \alpha
\quad \Longleftrightarrow \quad
\alpha > \frac{1}{2}.
\end{equation}
Hence, under the normalization of \Cref{app:normalization} and in the asymptotic regime, a critical threshold $\alpha^* = 1/2$ exists above which the ground truth coupling strictly improves the normalized objective relative to the feature-only alternative:
$J(\Gamma_{GT}, \alpha) < J(\Gamma_{FO}, \alpha)$ for all $\alpha > 1/2$.
\end{proof}

\textbf{Remarks.} In theory, $\alpha^* = 0.5$ comes from an idealized analysis of the normalized objective that only compares the feature-only and structure-only couplings in the \emph{population limit}. Finite-sample effects, approximate normalization, and the existence of many 'almost-correct' couplings break the symmetry of the idealized setting and make the optimal $\alpha$ slightly bigger than $0.5$. 

Second, Theorem 1 shows that the ground truth coupling $\Gamma_{GT}$ is better than $\Gamma_{FO}$ at $\alpha=1$. However, it is not the \textit{only} one.
The interaction tensors $G^{(0)}$ and $G^{(1)}$ are constant for all points within a cluster. Therefore, the structure cost $\mathcal{S}(\Gamma)$ depends only on which clusters are matched, not on how individual points are mapped within them.

Any coupling that correctly maps source clusters to their corresponding target clusters yields a structure cost of $0$. This includes the ground truth coupling $\Gamma_{GT}$, but also any coupling that correctly matches clusters while randomly permuting points inside them.
This explains the results observed in \Cref{fig:synthetic_experiment_different_alpha}: at $\alpha=1$, the solver returns a solution that is structurally perfect but fails to recover the exact point-to-point correspondence.

\subsection{Dynamic interpretation of \name}

\label{app:dynamic-iadot}

We provide a dynamic viewpoint on \name, showing that it can be seen as the solution of a joint static-dynamic energy minimization problem combining kinetic energy in expression space and a structure-preserving term.

As before, let \[
\Pi(a,b) := \left\{\Gamma \in \mathbb{R}_+^{n_0\times n_1} : \Gamma \mathbf{1}_{n_1} = a,\ \Gamma^\top \mathbf{1}_{n_0}=b \right\}.
\]
We further consider the common choice of feature cost
\begin{equation}
    C_{ij} \;=\;  \,\|x_i - y_j\|^2,
    \qquad 1\le i\le n_0,\;1\le j\le n_1.
    \label{eq:feature-cost}
\end{equation}

\textbf{Admissible processes for a fixed coupling.} Let $\Gamma \in \Pi(a,b)$ and define the associated joint law on endpoints
\begin{equation}
    \Pi_\Gamma \;:=\; \sum_{i=1}^{n_0}\sum_{j=1}^{n_1} \Gamma_{ij}\,\delta_{(x_i,y_j)}.
\end{equation}
In the balanced case $\sum_{i,j}\Gamma_{ij}=1$, so $\Pi_\Gamma$ is a probability measure with marginals $\rho_0, \rho_1$.

We consider continuous-time processes $(X_t)_{t\in[0,1]}$ taking values in $\mathbb{R}^d$ and satisfying:
\begin{itemize}
    \item $X_\cdot$ has almost surely absolutely continuous paths
    \item the joint law of its endpoints is $(X_0,X_1)\sim \Pi_\Gamma$
\end{itemize}
We write $\mathcal{A}(\Pi_\Gamma)$ for the class of all such processes. For any $X_\cdot \in \mathcal{A}(\Pi_\Gamma)$, define the kinetic energy as: 
\begin{equation}
    \mathcal{K}(X_\cdot)
    \;:=\; \mathbb{E}\Bigg[\int_0^1  \,\big\|\dot X_t\big\|^2\,dt\Bigg].
\end{equation}

The following lemma is standard but we include it for completeness.

\begin{lemma}
\label{lem:fixed-endpoints}
Let $x,y\in\mathbb{R}^d$ and let $\gamma:[0,1]\to\mathbb{R}^d$ be absolutely continuous with $\gamma(0)=x$, $\gamma(1)=y$. Then
\begin{equation}
    \int_0^1  \,\big\|\dot\gamma(t)\big\|^2\,dt
    \;\ge\;  \,\|y - x\|^2,
\end{equation}
with equality if and only if $\gamma(t) = (1-t)x + t y$ for all $t\in[0,1]$.
\end{lemma}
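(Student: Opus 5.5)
The plan is the standard Cauchy--Schwarz (Jensen) argument applied to the velocity $\dot\gamma$. Since $\gamma$ is absolutely continuous, $\dot\gamma$ exists almost everywhere, is integrable, and satisfies the fundamental theorem of calculus
\[
y - x = \gamma(1)-\gamma(0) = \int_0^1 \dot\gamma(t)\,dt .
\]
If $\int_0^1\|\dot\gamma\|^2\,dt=\infty$ there is nothing to prove, so I will assume $\dot\gamma\in L^2([0,1];\mathbb{R}^d)$.

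\textbf{Step 1: the inequality.} I will use Jensen's inequality for the convex function $\|\cdot\|^2$ against the uniform probability measure on $[0,1]$:
\[
\|y-x\|^2=\Big\|\int_0^1\dot\gamma(t)\,dt\Big\|^2\le \int_0^1\|\dot\gamma(t)\|^2\,dt .
\]
Equivalently, I can apply Cauchy--Schwarz componentwise and sum over coordinates.

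\textbf{Step 2: the equality case.} Set $v:=y-x$ and expand the nonnegative quantity
\[
\int_0^1\|\dot\gamma(t)-v\|^2\,dt=\int_0^1\|\dot\gamma\|^2\,dt-2\Big\langle v,\int_0^1\dot\gamma\,dt\Big\rangle+\|v\|^2=\int_0^1\|\dot\gamma\|^2\,dt-\|v\|^2 .
\]
This identity gives Step 1 again directly. It also shows that equality holds if and only if $\dot\gamma(t)=v$ for almost every $t$.

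Integrating, and using absolute continuity again, gives $\gamma(t)=x+\int_0^t\dot\gamma(s)\,ds=x+tv=(1-t)x+ty$ for every $t$. Conversely, the straight line has $\dot\gamma\equiv v$, so it attains the bound.

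\textbf{Main obstacle.} The only delicate point is regularity. The pointwise identity $\dot\gamma=v$ holds only almost everywhere, and I must pass from it to equality of $\gamma$ with the line for all $t$, not just almost all $t$. This is handled by the absolute-continuity representation $\gamma(t)=x+\int_0^t\dot\gamma(s)\,ds$, which holds for every $t$. The infinite-energy case is excluded at the start, as noted above.
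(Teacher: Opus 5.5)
Your proof is correct and follows the same route as the paper: you apply Cauchy--Schwarz (equivalently, Jensen) to $y-x=\int_0^1\dot\gamma(t)\,dt$, and equality then forces $\dot\gamma\equiv y-x$. Your identity $\int_0^1\|\dot\gamma-v\|^2\,dt=\int_0^1\|\dot\gamma\|^2\,dt-\|v\|^2$ just makes explicit the equality case that the paper takes from the Cauchy--Schwarz equality condition, and you handle the almost-everywhere and infinite-energy cases more carefully than the paper does.
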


\begin{proof}
By Cauchy--Schwarz inequality,
\begin{equation}
    \left\|\int_0^1 \dot\gamma(t)\,dt\right\|^2
    \;\le\; \int_0^1 \big\|\dot\gamma(t)\big\|^2\,dt,
\end{equation}
with equality if and only if $\dot\gamma(t)$ is constant in $t$. Since $\gamma(1)-\gamma(0)=y-x$, this yields
\begin{equation}
    \|y-x\|^2
    = \left\|\int_0^1 \dot\gamma(t)\,dt\right\|^2
    \le \int_0^1 \big\|\dot\gamma(t)\big\|^2\,dt,
\end{equation} Equality holds if and only if $\dot\gamma(t) = y-x$, i.e.\ $\gamma(t)=(1-t)x+ty$. 
\end{proof}

\begin{proposition}
\label{prop:fixed-gamma}
Let $\Gamma\in\Pi(a,b)$ and $\Pi_\Gamma$ be as above. Consider the admissible class $\mathcal{A}(\Pi_\Gamma)$ and the kinetic energy $\mathcal{K}$. Then:
\begin{enumerate}
    \item The energy $\mathcal{K}(X_\cdot)$ is minimized over $\mathcal{A}(\Pi_\Gamma)$ by the process
    \begin{equation}
        X^{\mathrm{lin}}_t \;:=\; (1-t)X + tY,\qquad (X,Y)\sim\Pi_\Gamma.
    \end{equation}
    \item The minimal value of the kinetic energy is
    \begin{equation}
        \inf_{X_\cdot \in \mathcal{A}(\Pi_\Gamma)} \mathcal{K}(X_\cdot)
        \;=\; \,\mathbb{E}_{(X,Y)\sim\Pi_\Gamma}\big[\|X-Y\|^2\big]
        \;=\; \sum_{i,j} \Gamma_{ij}\, C_{ij}.
    \end{equation}
\end{enumerate}
\end{proposition}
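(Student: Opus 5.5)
The plan is to reduce the process-level minimization to a pathwise one by conditioning on the endpoints, and then apply Lemma~\ref{lem:fixed-endpoints} path by path. Fix $X_\cdot \in \mathcal{A}(\Pi_\Gamma)$. Its paths are almost surely absolutely continuous and satisfy $(X_0,X_1)\sim\Pi_\Gamma$. For almost every realization, Lemma~\ref{lem:fixed-endpoints} applied to $\gamma = X_\cdot(\omega)$ with $x = X_0(\omega)$ and $y = X_1(\omega)$ gives
\[
\int_0^1 \|\dot X_t\|^2\,dt \;\ge\; \|X_1 - X_0\|^2
\]
almost surely. Both sides are nonnegative, so taking expectations (monotonicity, with the value $+\infty$ allowed on the left) yields
\[
\mathcal{K}(X_\cdot) \;\ge\; \mathbb{E}\big[\|X_1-X_0\|^2\big].
\]
The right-hand side depends only on the joint law of the endpoints. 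Since $\Pi_\Gamma$ is a finite atomic measure, it equals $\sum_{i,j}\Gamma_{ij}\|x_i-y_j\|^2 = \sum_{i,j}\Gamma_{ij}C_{ij}$ by \eqref{eq:feature-cost}. This is a lower bound on the infimum, uniform over the admissible class.

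Next I would check that $X^{\mathrm{lin}}$ is admissible and attains the bound. Its paths are affine in $t$, hence absolutely continuous (indeed smooth). At $t=0$ and $t=1$ it returns $X$ and $Y$, so its endpoint law is exactly $\Pi_\Gamma$ and $X^{\mathrm{lin}}\in\mathcal{A}(\Pi_\Gamma)$. Its velocity is $\dot X^{\mathrm{lin}}_t = Y - X$, constant in $t$, so
\[
\mathcal{K}(X^{\mathrm{lin}}) = \mathbb{E}\|Y-X\|^2 = \sum_{i,j}\Gamma_{ij}C_{ij}.
\]
This matches the lower bound, which proves both item 1 and item 2 at once. The equality case of Lemma~\ref{lem:fixed-endpoints} also shows that any minimizer must coincide almost surely with the linear interpolant along its own endpoints, which is a bonus the statement does not require.

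The only delicate point is measure-theoretic. The expectation $\mathbb{E}\int_0^1\|\dot X_t\|^2dt$ must be well defined, meaning the integrand is a measurable function of $\omega$ and the derivative exists almost everywhere pathwise. I would handle this by reading $\mathcal{K}$ as the expectation of a nonnegative, possibly infinite, random variable, which is exactly what the definition presupposes. The pathwise inequality then transfers to expectations without any integrability assumption, since if $\mathcal{K}(X_\cdot)=\infty$ the bound holds trivially. No disintegration of the law is actually needed, because the argument is pointwise in $\omega$.
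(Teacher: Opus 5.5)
Your proposal is correct and takes essentially the same route as the paper: apply Lemma~\ref{lem:fixed-endpoints} to each path with its own endpoints, take expectations to get the lower bound $\sum_{i,j}\Gamma_{ij}C_{ij}$, and note that the straight-line process attains it. The paper phrases the first step as conditioning on $(X_0,X_1)$ rather than arguing pointwise in $\omega$, and leaves the admissibility of $X^{\mathrm{lin}}$ implicit; your version spells both out but is otherwise the same argument.
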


\begin{proof}
Any $X_\cdot \in \mathcal{A}(\Pi_\Gamma)$ satisfies $(X_0,X_1)\sim\Pi_\Gamma$. Condition on the endpoints:
\begin{equation}
    \mathcal{K}(X_\cdot)
    = \mathbb{E}_{(X,Y)\sim\Pi_\Gamma}
       \Bigg[\mathbb{E}\Big[\int_0^1 \big\|\dot X_t\big\|^2\,dt \,\Big|\,(X_0,X_1)=(X,Y)\Big]\Bigg].
\end{equation}
For each fixed pair $(X,Y)=(x,y)$, Lemma~\ref{lem:fixed-endpoints} shows that the conditional energy is minimized by the straight-line path $t\mapsto (1-t)x+ty$, with minimal value $\|x-y\|^2$. Thus the global minimizer over $\mathcal{A}(\Pi_\Gamma)$ is the straight-line process $X^{\mathrm{lin}}_t$, and
\begin{equation}
    \inf_{X_\cdot \in \mathcal{A}(\Pi_\Gamma)} \mathcal{K}(X_\cdot)
    = \mathbb{E}_{(X,Y)\sim\Pi_\Gamma}\Big[\|X-Y\|^2\Big]
    = \sum_{i,j}\Gamma_{ij}\,\|x_i-y_j\|^2
    = \sum_{i,j}\Gamma_{ij} C_{ij}.
\end{equation}
\end{proof}

\textbf{Joint static--dynamic energy and reduction to FGW.} We can view \name\ as minimizing over both couplings and dynamics the joint energy functional
\begin{equation}
    \mathcal{E}_\alpha(\Gamma, X_\cdot)
    := (1-\alpha)\,\mathcal{K}(X_\cdot) \;+\; \alpha\,S(\Gamma),
    \label{eq:joint-energy}
\end{equation}
subject to $\Gamma\in\Pi(a,b)$ and $X_\cdot\in\mathcal{A}(\Pi_\Gamma)$.

\begin{proposition}
\label{prop:dynamic-iadot}
Fix $\alpha\in[0,1]$. Consider the optimization problem
\begin{equation}
    \inf_{\Gamma\in\Pi(a,b)}\ \inf_{X_\cdot\in\mathcal{A}(\Pi_\Gamma)}\ 
    \mathcal{E}_\alpha(\Gamma, X_\cdot)
    \;=\; \inf_{\Gamma\in\Pi(a,b)}\ \inf_{X_\cdot\in\mathcal{A}(\Pi_\Gamma)}\ 
    \Big[(1-\alpha)\mathcal{K}(X_\cdot)+\alpha S(\Gamma)\Big].
    \label{eq:joint-problem}
\end{equation}
Then:
\begin{enumerate}
    \item For any fixed $\Gamma$, the inner infimum over $X_\cdot\in\mathcal{A}(\Pi_\Gamma)$ is attained by the straight-line process $X^{\mathrm{lin}}_t=(1-t)X+tY$, $(X,Y)\sim\Pi_\Gamma$, and
    \begin{equation}
        \inf_{X_\cdot\in\mathcal{A}(\Pi_\Gamma)}\mathcal{E}_\alpha(\Gamma,X_\cdot)
        = (1-\alpha)\sum_{i,j}\Gamma_{ij} C_{ij} + \alpha S(\Gamma).
    \end{equation}
    \item Consequently, the joint static--dynamic problem reduces to the purely static FGW problem
    \begin{equation}
        \inf_{\Gamma\in\Pi(a,b)}\ \inf_{X_\cdot\in\mathcal{A}(\Pi_\Gamma)}
        \mathcal{E}_\alpha(\Gamma,X_\cdot)
        \;=\; \inf_{\Gamma\in\Pi(a,b)} 
        \Big[(1-\alpha)\,\langle\Gamma,C\rangle_F + \alpha S(\Gamma)\Big],
    \end{equation}
    whose minimizers are exactly the FGW-optimal couplings used by \name.
\end{enumerate}
\end{proposition}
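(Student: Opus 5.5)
Since the structure term $S(\Gamma)$ depends only on the coupling and not on the process $X_\cdot$, I would prove Proposition~\ref{prop:dynamic-iadot} by splitting the joint infimum into an inner minimization over dynamics and an outer minimization over couplings.

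\textbf{Step 1 (inner problem).} Fix $\Gamma\in\Pi(a,b)$. For every $X_\cdot\in\mathcal{A}(\Pi_\Gamma)$ we have $\mathcal{E}_\alpha(\Gamma,X_\cdot)=(1-\alpha)\mathcal{K}(X_\cdot)+\alpha S(\Gamma)$, with $\alpha S(\Gamma)$ constant in $X_\cdot$ and $1-\alpha\ge 0$. Hence
\[
\inf_{X_\cdot\in\mathcal{A}(\Pi_\Gamma)}\mathcal{E}_\alpha(\Gamma,X_\cdot)=(1-\alpha)\inf_{X_\cdot\in\mathcal{A}(\Pi_\Gamma)}\mathcal{K}(X_\cdot)+\alpha S(\Gamma).
\]
Proposition~\ref{prop:fixed-gamma} says this infimum is attained by $X^{\mathrm{lin}}$ and equals $\sum_{i,j}\Gamma_{ij}C_{ij}=\langle\Gamma,C\rangle_F$, which gives part~1. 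Two points need checking:
\begin{itemize}
\item $X^{\mathrm{lin}}$ is admissible: its paths are affine, hence absolutely continuous, and its endpoint law is $\Pi_\Gamma$ by construction.
\item When $\alpha=1$ every admissible process attains the infimum, so ``attained by $X^{\mathrm{lin}}$'' still holds, though not uniquely.
\end{itemize}

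\textbf{Step 2 (outer problem).} Take the infimum over $\Gamma$ of the identity from Step~1. The iterated infimum then equals $\inf_{\Gamma\in\Pi(a,b)}\big[(1-\alpha)\langle\Gamma,C\rangle_F+\alpha S(\Gamma)\big]$, which is the FGW objective~\eqref{eq:fgw}.

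\textbf{Step 3 (minimizers).} Write $\Phi(\Gamma):=(1-\alpha)\langle\Gamma,C\rangle_F+\alpha S(\Gamma)$, so that $\inf_{X_\cdot}\mathcal{E}_\alpha(\Gamma,X_\cdot)=\Phi(\Gamma)$. The FGW optimum is attained because $\Pi(a,b)$ is a compact polytope and $\Phi$ is continuous (a quadratic polynomial).
\begin{itemize}
\item If $\Gamma^\star$ minimizes $\Phi$, then $(\Gamma^\star,X^{\mathrm{lin}})$ attains the joint infimum.
\item Conversely, suppose $(\Gamma,X_\cdot)$ attains the joint infimum. Then $\Phi(\Gamma)\le\mathcal{E}_\alpha(\Gamma,X_\cdot)=\inf\Phi$, so $\Gamma$ is FGW-optimal.
\end{itemize}
So the coupling components of joint minimizers are exactly the FGW minimizers.

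The main obstacle is measure-theoretic care in Proposition~\ref{prop:fixed-gamma}, specifically the conditioning step when $\mathcal{K}(X_\cdot)$ may be infinite. The outer reduction is routine. Because $\Pi_\Gamma$ is a finite atomic measure, I would handle this by conditioning on each atom $\{(X_0,X_1)=(x_i,y_j)\}$ with $\Gamma_{ij}>0$. On each such atom I apply Lemma~\ref{lem:fixed-endpoints} pathwise, then sum. This avoids regular conditional distributions entirely and makes the identity $\inf\mathcal{K}=\langle\Gamma,C\rangle_F$ rigorous, including the case of infinite energy.
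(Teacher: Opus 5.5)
Your proposal is correct and follows the same route as the paper's proof: you pull the $\Gamma$-only term $\alpha S(\Gamma)$ out of the inner infimum, apply Proposition~\ref{prop:fixed-gamma} to the kinetic part to get part~1, and take the infimum over $\Gamma$ to get part~2. Your extra checks go beyond the paper's two-line proof and fill in details it leaves implicit. These are the admissibility of $X^{\mathrm{lin}}$, the $\alpha=1$ case, attainment by compactness of $\Pi(a,b)$, atomwise conditioning to handle possibly infinite energy, and the two-way argument that joint minimizers project exactly onto FGW minimizers, which settles the ``minimizers are exactly'' claim that the paper only asserts.
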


\begin{proof}
Point (1) follows directly from Proposition~\ref{prop:fixed-gamma}: for any $\Gamma$,
\begin{equation}
    \inf_{X_\cdot\in\mathcal{A}(\Pi_\Gamma)}\mathcal{E}_\alpha(\Gamma,X_\cdot)
    = (1-\alpha)\inf_{X_\cdot\in\mathcal{A}(\Pi_\Gamma)}\mathcal{K}(X_\cdot)+\alpha S(\Gamma)
    = (1-\alpha)\sum_{i,j}\Gamma_{ij} C_{ij}+\alpha S(\Gamma).
\end{equation}
Taking the infimum over $\Gamma\in\Pi(a,b)$ yields (2), which coincides with the static FGW objective.
\end{proof}

Intuitively, Proposition~\ref{prop:dynamic-iadot} shows that \name\ does not use linear interpolations between matched cells as a heuristic, but as the \emph{unique} minimal-action choice once the coupling $\Gamma^\star$ is fixed. 
The static FGW step therefore selects an interaction-aware coupling that trades off feature displacement and CCI preservation, and the subsequent dynamic step realizes this coupling by approximating the lowest-kinetic-energy flow in expression space. When $\alpha=0$, this recovers the classical OT--CFM \citep{tong2024improving}/ Benamou-Brenier \citep{benamou2000computational} interpolation.

\subsection{Connection to the velocity field learned with CFM.}

In practice, \name\ does not explicitly construct the process \(X^{\mathrm{lin}}_t\) but instead uses CFM to learn a time--dependent vector field \(v_\theta\) that generates the same probability path. 

In the infinite--capacity and optimization limit, the minimizer \(v^\star\) of \(\mathcal{L}_{\mathrm{CFM}}\) coincides with the velocity field of 
\(X^{\mathrm{lin}}_t\) constructed in \Cref{app:dynamic-iadot}, in the sense that
\[
v^\star(z,t) = \mathbb{E}[\,Y - X \mid Z_t = z\,],
\]
and the ODE
\[
\dot z_t = v^\star(z_t,t)
\]
generates exactly the probability path \(\{\rho_t\}_{t\in[0,1]}\) induced by \(\Gamma^\star\).

We can relate \(v^\star\) to the kinetic energy of the straight--line process, following a similar technique as in \citep{lipman2024flow}. For a time--dependent vector field \(w : \mathbb{R}^d \times [0,1] \to \mathbb{R}^d\) that generates \(\{\rho_t\}\), define its kinetic energy along this path by
\[
\mathcal{K}_{\mathrm{Eul}}(w)
:= \int_0^1 
\mathbb{E}_{Z_t \sim \rho_t}
\Big[
\,\|w(Z_t,t)\|_2^2
\Big] \, dt.
\]
Using the formula of \(v^\star\) above and Jensen’s inequality, we obtain
\begin{align*}
\mathcal{K}_{\mathrm{Eul}}(v^\star)
&= \int_0^1 
\mathbb{E}\Big[
\;\big\|\mathbb{E}[\,Y-X \mid Z_t\,]\big\|_2^2
\Big] \, dt \\
&\le \int_0^1 
\mathbb{E}\Big[
\;\mathbb{E}\big[\|Y-X\|_2^2 \mid Z_t\big]
\Big] \, dt \\
&= \int_0^1 
\mathbb{E}\Big[\;\|Y-X\|_2^2\Big] \, dt \\
&= \mathbb{E}_{(X,Y)\sim\Pi}\Big[\;\|Y-X\|_2^2\Big].
\end{align*}
By \Cref{prop:fixed-gamma} we have
\[
\mathbb{E}_{(X,Y)\sim\Pi_{\Gamma^\star}}\Big[\;\|Y-X\|_2^2\Big]
= \sum_{i,j} \Gamma^\star_{ij} C_{ij}
= K\!\big(X^{\mathrm{lin}}_\cdot\big),
\]
 Hence
\[
\mathcal{K}_{\mathrm{Eul}}(v^\star)
\;\le\;
\sum_{i,j} \Gamma^\star_{ij} C_{ij}
= K\!\big(X^{\mathrm{lin}}_\cdot\big).
\]

In other words, for a fixed coupling \(\Gamma\), the feature term
\[
F(\Gamma) = \langle \Gamma, C\rangle_F = \sum_{i,j} \Gamma_{ij} C_{ij}
\]
provides an explicit upper bound on the kinetic energy of the velocity field recovered by CFM from the corresponding straight--line dynamics. Combined with the joint static--dynamic formulation in \Cref{eq:joint-energy}, this shows that the \name\ objective
\[
(1-\alpha)\,\langle \Gamma, C\rangle_F + \alpha\,S(\Gamma)
\]
can be viewed as selecting a coupling that balances CCI preservation with a surrogate upper bound on the kinetic energy of the flow that CFM learns from that coupling.